\newtheorem{definition}{Definition}[section]
\newtheorem{proposition}{Proposition}[section]
\newtheorem{theorem}{Theorem}[section]
\newcommand{\mbf}[1]{\mathbf{#1}}
\newcommand{\X}{\mathbf{X}}
\newcommand{\VO}{\mathbf{O}}
\newcommand{\vo}{\mathbf{o}}
\newcommand{\Y}{\hat{Y}}
\newcommand{\y}{\hat{y}}
\newcommand{\D}{\mathcal{D}}
\newcommand{\CG}{\mathcal{G}}
\newcommand{\Pa}[1]{\mathsf{PA}_{#1}}
\newcommand{\pa}[1]{\mathsf{pa}_{#1}}
\newcommand{\TCE}{\mathrm{TCE}}
\newcommand{\PE}{\mathrm{PE}}
\newcommand{\CE}{\mathrm{CE}}
\newcommand{\PCE}{\mathrm{PCE}}
\DeclareMathOperator*{\argmin}{\arg\!\min}
\title{The Fairness Field Guide: Perspectives from Social and Formal Sciences
}
\author{
  Alycia N. Carey, Xintao Wu \\
  Department of Computer Science and Computer Engineering \\
  University of Arkansas \\
  Fayetteville, Arkansas\\
  \texttt{\{ancarey, xintaowu\}@uark.edu} \\
}
\begin{document}
\maketitle

\begin{abstract}
Over the past several years, a slew of different methods to measure the fairness of a machine learning model have been proposed. However, despite the growing number of publications and implementations, there is still a critical lack of literature that explains the interplay of fair machine learning with the social sciences of philosophy, sociology, and law. We hope to remedy this issue by accumulating and expounding upon the thoughts and discussions of fair machine learning produced by both social and formal (specifically machine learning and statistics) sciences in this field guide. Specifically, in addition to giving the mathematical and algorithmic backgrounds of several popular statistical and causal-based fair machine learning methods, we explain the underlying philosophical and legal thoughts that support them. Further, we explore several criticisms of the current approaches to fair machine learning from sociological and philosophical viewpoints. It is our hope that this field guide will help fair machine learning practitioners better understand how their algorithms align with important humanistic values (such as fairness) and how we can, as a field, design methods and metrics to better serve oppressed and marginalized populaces.
\end{abstract}

\keywords{fair machine learning, philosophy, sociology, law, fair classification, fair causal models}

\section{Introduction}

In 2019, Obermeyer et al. published a disconcerting article that illustrated how a widely used health risk-prediction tool, that is applied to roughly 200 million individuals in the U.S. per year, exhibited significant racial bias \cite{obermeyer_powers_vogeli_mullainathan_2019}. They found that this bias was due to the algorithm equating the predicted health-care cost of a patient with how sick they were likely to be. But, due to systemic racism issues, such as distrust of the health-care system and direct racial discrimination, Black individuals usually have more barriers to receiving proper health care. Therefore, they generally spend less on health-care costs per year \cite{10.1001/jama.2021.9937}. Obermeyer et al. found that only 17.7\% of patients that the algorithm assigned to receive extra care were Black, but if the bias in the system was corrected for, the percentage would increase drastically to 46.5\% \cite{ledford_2019}.

Numerous examples like the one above exist which depict machine learning algorithms as being biased to a particular marginalization class such as race or gender. For instance, women are less likely to be shown ads for high paid positions like CEO or CFO \cite{datta2015automated}, and facial recognition systems are more likely to be wrong when given a picture of a Black woman \cite{buolamwini_gender_2018}. Finding deficiencies such as these has spurred research into creating machine learning models that achieve fair outcomes.

Defining, implementing, and enforcing fairness in machine learning is, above all else, a sociotechnical\footnote{The field of Science and Technology Studies (STS) describes systems that consist of a combination of technical and social components as ``sociotechnical systems'' \cite{selbst2019abstraction}.} challenge. Machine learning systems may behave unfairly, or in a biased manner, for a multitude of reasons: e.g., due to social biases reflected in the datasets used in training, because of societal biases that are reflected (explicitly or implicitly) in the design process, and/or as a result of interaction with particular stakeholders or clients during run-time \cite{madaio_ai_nodate}. 

Without viewing fairness metrics from lenses of philosophy, sociology, and law, choosing and implementing a metric stays firmly technical, and does not consider societal impacts that could arise after deployment. In this instance, we are blindly hoping to align to our wanted societal principles. To solve this problem, and to help practitioners choose correct fairness metrics in an informed, societal-aware, manner, we develop the following field guide that depicts popular binary statistical- and causal-based fairness metrics through lenses of philosophy, sociology, and the law. We note that this field guide does not cover bias mitigation strategies and we refer interested readers to \cite{aif360-oct-2018, caton2020fairness, mehrabi2019survey} for more in depth discussions on this topic.

The rest of the field guide is as follows. We begin by explaining commonly (mis)used definitions in fair machine learning literature and introduce mathematical notions that will be used throughout the rest of the field guide in Section 2. In Section 3, we introduce important philosophical perspectives, such as Rawl's Equality of Opportunity (EOP), that serve as a foundation for many of the proposed fairness metrics. Next, in Section 4, we depict popular legal ideals that have a strong connection to fairness in machine learning. Sections 5 and 6 contain our analysis and discussion on popular statistical- and causal-based fairness metrics. In Section 7, we give several critiques from philosophy and sociology of the fair machine learning field as a whole. Finally, in Sections 8 and 9 we detail closely related work and our major conclusions, respectively. 

\section{Background}
Being a sociotechnical issue, the terminology used in fair machine learning literature can take on different meanings depending on the context in which the word is being used. This can potentially lead to confusion when selecting the right fairness metric to deploy in specific situations. For this reason, in Table 1 we concretely define the meaning of commonly used terms in fair machine learning publications in lenses of both philosophy and computer science. Additionally, in this section we provide definitions for important statistical terms as well as three overarching statistical categories that are commonly used to classify machine learning fairness metrics. Further, we lay the groundwork for the discussion on causal models in Section 6 by stating several important frameworks and definitions.

\subsection{Statistical Definitions}
Most proposed machine learning fairness metrics have groundings in statistical measures \cite{chouldechova_frontiers_2018}. For example, statistical parity (otherwise known as group fairness) depends on the measurement of raw positive classification rates; equalized odds depends on false positive and false negative rates; and predictive parity depends on true positive rates. The use of statistical measures is attractive because they are relatively simple to measure and definitions built using statistical measures can usually be achieved without having to make any assumptions on the underlying data distributions. Many of the common statistical measures used in machine learning fairness metrics are listed in Table \ref{tab:stat_def}.

It is important to note, however, that statistical groundings do not provide for individual level fairness, or even sub-group fairness for a marginalized class \cite{corbettdavies2018measure}. Instead, they provide meaningful guarantees to the ``average" member of a marginalized group. Additionally, many statistical measures directly oppose one another. For instance, it is impossible to satisfy both false positive rates, false negative rates, and positive predictive value across marginalized groups. This creates the direct consequence that many definitions of machine learning fairness cannot be satisfied at the same time.

This fact was firmly cemented in the work completed by Barocas, Hardt, and Narayanan \cite{barocas-hardt-narayanan}. In this work, they propose three representative fairness criteria -- independence, separation, and sufficiency -- that relate to many of the fair machine learning metrics that have been published. They capitalize on the fact that most proposed fairness criteria are properties of the joint distribution of a marginalization\footnote{What we call the marginalization attribute (or marginalized class) is often called the sensitive or protected attribute/class in fair machine learning literature. See Section 7.2 for why we choose to use marginalized instead of protected or sensitive.} attribute $S$ (e.g., race or gender), a target variable $Y$, and the classification (or in some cases probability score) $\hat{Y}$, which allowed them to create three distinct categories by forming conditional independence statements between the three random variables.

\subsubsection{Independence}
The first formal category, independence, only requires that the marginalization attribute, $S$ ($S = 0$ non-marginalized, $S = 1$ marginalized), is statistically independent of the classification outcome $\hat{Y}$, $\hat{Y} \perp S$. For the binary classification case, they note two different formulations: 

\begin{table}[H]
    \centering
    \begin{tabular}{rl}
        Exact: & $P[\hat{Y} = 1 \;|\; S = 0] = P[\hat{Y} = 1 \;|\; S = 1]$ \\
        & \\
        Relaxed: & $\frac{P[\hat{Y}=1 \;|\; S = 0]}{P[\hat{Y} = 1 \;|\; S = 1]} \geq 1 - \epsilon$
    \end{tabular}
\end{table}

\begin{table}[t!]\small
\centering
\setlength{\extrarowheight}{-2pt}
\addtolength{\extrarowheight}{\aboverulesep}
\addtolength{\extrarowheight}{\belowrulesep}
\setlength{\aboverulesep}{0pt}
\setlength{\belowrulesep}{0pt}
\caption{Definitions of popular terms from the view point of philosophy and computer science.}
\resizebox{\linewidth}{!}{%
\begin{tabular}{ccc} 
\toprule
\rowcolor[rgb]{0.592,0.792,0.792} \textbf{\textcolor[rgb]{0,0.502,0.502}{Term}} & \textbf{\textcolor[rgb]{0,0.502,0.502}{Philosophy}}                                                                                                                                                                                                                                                                                                               & \textbf{\textcolor[rgb]{0,0.502,0.502}{Computer Science}}                                                                                                                                                                                                                                                                                                                                                                             \\ 
\hline
Bias                                                                            & \begin{tabular}[c]{@{}c@{}}Prejudice in favor of or against one thing, \\ person, or group compared with another, \\ usually in a way considered to be unfair.\end{tabular}                                                                                                                                                                                       & \begin{tabular}[c]{@{}c@{}}Occurs when an algorithm produces results \\that are systemically prejudiced due to erroneous\\assumptions in the machine learning process. Normally\\due to data collection, sampling, and/or measurement~\\procedures.\\\\Bias vector:~an additional set of weights in a~neural\\network~that require no input,~giving importance to \\some of the~features in order to generalize better.\end{tabular}  \\ 
\hline
Oppression                                                                      & \begin{tabular}[c]{@{}c@{}}A social system of barriers that operate institutionally\\and interpersonally to disempower people because\\of their gender, race, class, sexuality, ethnicity, religion\\body size, ability, and/or nationality.\end{tabular}                                                                                        & -                                                                                                                                                                                                                                                                                                                                                                                                                                     \\ 
\hline
Ethics                                                                          & \begin{tabular}[c]{@{}c@{}}Involves systematizing, defending, and \\recommending concepts of right and wrong\\ behavior.\end{tabular}                                                                                                                                                                                                                             & \begin{tabular}[c]{@{}c@{}}Area of artificial intelligence research concerned \\with adding or ensuring moral behaviors of man-made \\machines that use artificial intelligence.\end{tabular}                                                                                                                                                                                                                                    \\ 
\hline
Justice                                                                         & \begin{tabular}[c]{@{}c@{}}Adherence to the standards agreed upon\\in society (based on laws).\end{tabular}                                                                                                                                                                                                                                                       & -                                                                                                                                                                                                                                                                                                                                                                                                                                     \\ 
\hline
Fairness                                                                        & \begin{tabular}[c]{@{}c@{}}A subjective principle of judgement of \\ whether a decision is morally right or wrong.\end{tabular}                                                                                                                                                                                                                                   & \begin{tabular}[c]{@{}c@{}}Recently established area of machine learning that \\studies how to ensure that biases in the data, and \\model inaccuracies, do not lead to models that treat \\individuals unfavorably on the basis of sensitive \\characteristics.\end{tabular}                                                                                                                                                         \\ 
\hline
Equality                                                                        & \begin{tabular}[c]{@{}c@{}}\textcolor[rgb]{0.102,0.102,0.102}{Signifies correspondence between a group}\\\textcolor[rgb]{0.102,0.102,0.102}{of different objects, persons, processes or}\\\textcolor[rgb]{0.102,0.102,0.102}{circumstances that have the same qualities }\\\textcolor[rgb]{0.102,0.102,0.102}{in at least one respect, but not all.}\end{tabular} & -                                                                                                                                                                                                                                                                                                                                                                                                                                     \\ 
\hline
~Discrimination~~                                                               & \begin{tabular}[c]{@{}c@{}}Unfavorable treatment of people due to the\\membership in certain demographic groups \\that are distinguished by attributes \\(supposedly) protected by law.\end{tabular}                                                                                                                                                                              & \begin{tabular}[c]{@{}c@{}}A source for unfairness in machine learning due\\to (intentional or unintentional) human prejudice \\and stereotyping based on the sensitive attributes.\\\\GAN Discriminator: t\textcolor[rgb]{0.122,0.122,0.141}{ries to distinguish real data }\\\textcolor[rgb]{0.122,0.122,0.141}{from the data created by the generation.}\end{tabular}                                                              \\ 
\hline
\begin{tabular}[c]{@{}c@{}}Marginalization\\ (Sensitive /Protected)\\Characteristic\end{tabular}               & \multicolumn{2}{c}{\begin{tabular}[c]{@{}c@{}}Those characteristics commonly referenced and reflected in non-discrimination law. E.g., race, ethnicity,\\ gender, religion, age, disability, sexual orientation, etc.\end{tabular}}                                                                                                                                                                                                                                                                                                                                                                                                                                                                                                                                                                       \\
\bottomrule
\end{tabular}
}
\end{table}
\vspace{-10pt}
When considering the event $\hat{Y}=1$ to be the positive outcome, this condition requires the acceptance rates to be the same across all groups. The relaxed version notes that the ratio between the acceptance rates of different groups needs to be greater than a threshold that is determined by a predefined slack term $\epsilon$. In many cases $\epsilon = .2$ in order to align with the four-fifths rule in disparate impact law (see Section 4.1).

Barocas, Hardt, and Narayanan also note that while independence aligns well with how humans reason about fairness, several draw-backs exists for fairness metrics that fall into this category (e.g., group fairness, disparate impact, conditional statistical parity, and overall accuracy equality) \cite{barocas-hardt-narayanan}. Namely, that they ignore any correlation between the sensitive attributes and the target variable $Y$ which constrains the construction of a perfect prediction model. Additionally, it enables laziness. In other words, it allows situations where qualified people are carefully selected for one group (e.g., non-marginalized), while random people are selected for the other (marginalized). Further, it allows the trade of false negatives for false positives, meaning that neither of these rates are considered more important, which is false in many circumstances \cite{barocas-nips-2017}.

\begin{table}[t!]\small
\centering
\setlength{\extrarowheight}{0pt}
\addtolength{\extrarowheight}{\aboverulesep}
\addtolength{\extrarowheight}{\belowrulesep}
\setlength{\aboverulesep}{0pt}
\setlength{\belowrulesep}{0pt}
\caption{Definitions for common statistical measures. $Y$ = actual, $\hat{Y}$ = predicted, 1 = positive, 0 = negative.}
\label{tab:stat_def}
\resizebox{\linewidth}{!}{%
\begin{tabular}{cccc} 
\toprule
\rowcolor[rgb]{0.592,0.792,0.792} \textbf{\textcolor[rgb]{0,0.502,0.502}{Measure}} & \textbf{\textcolor[rgb]{0,0.502,0.502}{Abrriv.}} & \textbf{\textcolor[rgb]{0,0.502,0.502}{Formula}} & \textbf{\textcolor[rgb]{0,0.502,0.502}{Definition}}                                                                                                           \\ 
\hline
True Positive                                                                      & TP                                               & $Y = 1 \cap \hat{Y} = 1$                         & \begin{tabular}[c]{@{}c@{}}A case when the predicted and actual outcome\\ are both in the positive class.\end{tabular}                                        \\ 
\hline
True Negative                                                                      & TN                                               & $Y = 0 \cap \hat{Y} = 0$                         & \begin{tabular}[c]{@{}c@{}}A case when the predicted and actual outcome\\ are both in the negative class.\end{tabular}                                        \\ 
\hline
False Positive                                                                     & FP                                               & $Y = 0 \cap \hat{Y} = 1$                         & \begin{tabular}[c]{@{}c@{}}A case predicted to be in the positive class when\\ the actual outcome belongs to the negative class.\end{tabular}                 \\ 
\hline
False Negative                                                                     & FN                                               & $Y = 1 \cap \hat{Y} = 0$                         & \begin{tabular}[c]{@{}c@{}}A case predicted to be in the negative class when\\ the actual outcome belongs to the positive class.\end{tabular}                 \\ 
\hline
\begin{tabular}[c]{@{}c@{}}Positive Predictive Value \\(Precision)\end{tabular}    & PPV                                              & $\frac{TP}{TP + FP}$                             & \begin{tabular}[c]{@{}c@{}}Fraction of positive cases correctly predicted to be\\ in the positive class out of all predicted positive cases.\end{tabular}     \\ 
\hline
False Discovery Rate                                                               & FDR                                              & $\frac{FP}{TP + FP}$                             & \begin{tabular}[c]{@{}c@{}}Fraction of negative cases incorrectly predicted to be\\ in the positive class out of all predicted positive cases.\end{tabular}   \\ 
\hline
False Omission Rate                                                                & FOR                                              & $\frac{FN}{TN + FN}$                             & \begin{tabular}[c]{@{}c@{}}Fraction of positive cases incorrectly predicted to be \\ in the negative class out of all predicted negative cases.\end{tabular}  \\ 
\hline
\begin{tabular}[c]{@{}c@{}}Negative Predictive\\ Value\end{tabular}                & NPV                                              & $\frac{TN}{TN + FN}$                             & \begin{tabular}[c]{@{}c@{}}Fraction of negative cases correctly predicted to be\\ in the negative class out of all negative cases.\end{tabular}               \\ 
\hline
\begin{tabular}[c]{@{}c@{}}True Positive Rate\\ (Sensitivity/ Recall)\end{tabular} & TPR                                              & $\frac{TP}{TP + FN}$                             & \begin{tabular}[c]{@{}c@{}}Fraction of positive cases correctly predicted to be\\ in the positive class out of all actual positive cases.\end{tabular}        \\ 
\hline
False Positive Rate                                                                & FPR                                              & $\frac{FP}{TN + FP}$                             & \begin{tabular}[c]{@{}c@{}}Fraction of negative cases incorrectly predicted to be\\ in the positive class out of all actual negative cases.\end{tabular}      \\ 
\hline
False Negative Rate                                                                & FNR                                              & $\frac{FN}{TP + FN}$                             & \begin{tabular}[c]{@{}c@{}}Fraction of positive cases incorrectly predicted to be\\ in the negative class out of all actual positive cases.\end{tabular}      \\ 
\hline
True Negative Rate                                                                 & TNR                                              & $\frac{TN}{TN + FP}$                             & \begin{tabular}[c]{@{}c@{}}Fraction of negative cases correctly predicted to be\\ in the negative class out of all actual negative cases.\end{tabular}        \\
\bottomrule
\end{tabular}
}
\end{table}

\subsubsection{Separation}
The second category Barocas, Hardt, and Narayanan propose is separation, which captures the idea that in many scenarios, the sensitive characteristic may be correlated with the target variable \cite{barocas-hardt-narayanan}. Specifically, the random variables satisfy separation if $\hat{Y} \perp S \;|\; Y$, or in other words, if $\hat{Y}$ is conditionally independent of $S$ when given $Y$. In the binary classification case, it is equivalent to requiring that all groups achieve the same true and false positive rates.
$$TP\;:\;P[\hat{Y}=1 \;|\; Y = 1 \cap S = 0] = P[\hat{Y}=1 \;|\; Y = 1 \cap S = 1]$$
$$FP\;:\;P[\hat{Y}=1 \;|\; Y = 0 \cap S = 0] = P[\hat{Y}=1 \;|\; Y = 0 \cap S = 1]$$
Additionally, this requirement can be relaxed to only require the same true positive rates or the same false positive rates. Fairness metrics that fall under both the equivalent and relaxed version of separation include: false positive error rate balance, false negative error rate balance, equalized odds, treatment equality, balance for the positive class, and balance for the negative class. 

\subsubsection{Sufficiency}
The final category, sufficiency, makes use of the idea that for the purpose of predicting $Y$, the value of $S$ doesn't need to be used when you are given $\hat{Y}$ \cite{barocas-nips-2017}. In other words, the classification $\hat{Y}$ already has made use of $S$ to predict the target $Y$. Specifically, the random variables satisfy sufficiency if $Y \perp S \;|\; \hat{Y}$. I.e., $Y$ is conditionally independent of $S$ given $\hat{Y}$. In the binary classification case, this is the same as requiring a parity of positive or negative predictive values across all groups $\hat{y} \in \hat{Y}= \{0, 1\}$: 
$$P[Y = 1 \;|\; \hat{Y} = \hat{y} \cap S = 0] = P[Y = 1 \;|\; \hat{Y} = \hat{y} \cap S = 1]$$

They note that it is common to assume that $\hat{Y}$ satisfies sufficiency if the marginalization attribute $S$ and the target variable $Y$ are clearly understood from the problem context. Some examples of fairness metrics that satisfy sufficiency include: predictive parity, conditional use accuracy, test fairness, and well calibration.

\subsubsection{Impossibility Results of Statistics-Based Metrics}
Although each of the statistics-based fairness metrics we introduce in Section 5 formalize an intuitive notion of fairness, the definitions are not mathematically compatible in general. In other words, some definitions of fairness cannot be enforced at the same time. These incompatibilities between the fairness definitions were first explored during public debate over a recidivism\footnote{Recidivism: the tendency of a convicted criminal to re-offend.} tool called COMPAS (Correctional Offender Management Profiling for Alternative Sanctions) \cite{mitchell2021}. While ProPublica proved that COMPAS does not satisfy false positive error rate balance (see Section 5.2.2) \cite{angwin-jeff_2016}, other researchers found that it did satisfy metrics such as predictive parity (see Section 5.2.1) and test fairness (see Section 5.3.1) \cite{dieterich_mendoza_brennan_2016, flores2016}.

The tension experienced here is due to impossibility results that govern the underlying statistics of the different fairness measures. This notion is backed by several research publications including \cite{barocas-hardt-narayanan} who explained how independence, separation, and sufficiency are mutually exclusive, and \cite{kleinberg2016inherent, chouldechova2017fair} who both showed that if a model satisfies balance for the negative class, balance for the positive class, and test fairness among marginalized and non-marginalized groups, then there must be equal base rates (which implies that the actual classification was independent of the group) or the model was 100\% accurate \cite{mitchell2021}. 

\subsection{Causal Frameworks}
\label{sec:cframework}
Here, we briefly introduce two fundamental causal frameworks which are required for causal-based fairness notions: the structural causal model (SCM) \cite{pearl2009causality} and the potential outcome framework \cite{imbens2015causal}. We refer interested readers to recent surveys, such as \cite{guo2020survey} and \cite{yao2021survey}, in the area of causal inference and causal discovery for a more in depth discussion. We emphasize that the two causal frameworks are logically equivalent, although they have different assumptions. In SCMs, we are able to study the causal effect of any variable along different paths as the complete causal graph is often assumed. On the other hand, in the potential outcome framework, we do not assume the availability of the causal graph and focus on estimating causal effects of the treatment variables.

Throughout the following discussion, and in Section \ref{sec:cfair} (which details the causal-based fairness notions), we use the following notation conventions. An uppercase letter denotes a variable, e.g., $X$; a bold uppercase letter denotes a set of variables, e.g., $\mathbf{X}$; and a lowercase letter denotes a value or a set of values of the corresponding variables, e.g., $x$ and $\mathbf{x}$.

\subsubsection{Structural Causal Model}
The structural causal model (SCM) was first proposed by Judea Pearl in \cite{pearl2009causality}. Pearl believed that by understanding the logic behind causal thinking, we would be able to emulate it on a computer to form more realistic artificial intelligence \cite{PearlMackenzie18}. He proposed that causal models would give the ability to ``anchor the elusive notions of science, knowledge, and data in a concrete and meaningful setting, and will enable us to see how the three work together to produce answers to difficult scientific questions,'' \cite{PearlMackenzie18}. We recount the important details of SCMs below.

\begin{definition}[Structural Causal Model \cite{pearl2009causality}]\label{def:cm}
	A structural causal model $\mathcal{M}$ is represented by a quadruple $\langle \mathbf{U}, \mathbf{V}, \mathbf{F},  P(\mathbf{U}) \rangle$ where
	\begin{enumerate}
		\item $\mathbf{U}$ is a set of exogenous (external) variables that are determined by factors outside the model.
		\item $\mathbf{V}$ is a set of endogenous (internal) variables that are determined by variables in $\mathbf{U}\cup\mathbf{V}$, i.e., $\mathbf{V}$'s values are determined by factors within the model.
		\item $\mathbf{F}$ is a set of structural equations from $\mathbf{U} \cup \mathbf{V} \to \mathbf{V}$.
		Specifically, for each $V\in \mathbf{V}$, there is a function $f_{V}\in \mathbf{F}$ mapping from $\mathbf{U} \cup (\mathbf{V}\backslash V) \to V$, i.e., $v=f_V ( \pa{V}, u_V )$, where $\pa{V}$ is a realization of a set of endogenous variables $\Pa{V} \in \mathbf{V} \setminus V$ that directly determines $V$, and $u_V$ is a realization of a set of exogenous variables that directly determines $V$. In other words, $f_{V}(\cdot)$ is a structural equation that expresses the value of each endogenous variable as a function of the values of the other variables in $\mathbf{U}$ and $\mathbf{V}$.
		\item $P(\mathbf{U})$ is a joint probability distribution defined over $\mathbf{U}$.
	\end{enumerate}
\end{definition}

In general, $f_{V}(\cdot)$ can be any type of equation. In some cases,  $f_{V}(\cdot)$ is assumed as a specific type, for example, the nonlinear additive function: $v = f_{V}(\pa{V})+\mathbf{u}_{V}$. If all exogenous variables in $\mathbf{U}$ are assumed to be mutually independent, then the causal model is called a \emph{Markovian model}; otherwise, it is called a \emph{semi-Markovian model}. 

The causal model $\mathcal{M}$ is associated with a causal graph $\CG = \langle \mathcal{V}, \mathcal{E} \rangle$ where $\mathcal{V}$ is a set of nodes (otherwise known as vertices) and $\mathcal{E}$ is a set of edges. Each node of $\mathcal{V}$ corresponds to a variable of $\mbf{V}$ in $\mathcal{M}$. Each edge in $\mathcal{E}$, denoted by a directed arrow $\rightarrow$, points from a node $X\in \mathbf{U} \cup \mathbf{V}$ to a different node $Y\in \mathbf{V}$ if $f_Y$ uses values of $X$ as input. A \emph{causal path} from $X$ to $Y$ is a directed path which traces arrows directed from $X$ to $Y$. The causal graph is usually simplified by removing all exogenous variables from the graph. In a Markovian model, exogenous variables can be directly removed without loss of information. In a semi-Markovian model, after removing exogenous variables, we also need to add dashed bi-directional edges between the children of correlated exogenous variables to indicate the existence of unobserved common cause factors, i.e., hidden confounders. Fig. \ref{fig:scm} depicts an example of a structural causal model.
\begin{figure}[t!]
    \centering
    \includegraphics[scale=.4]{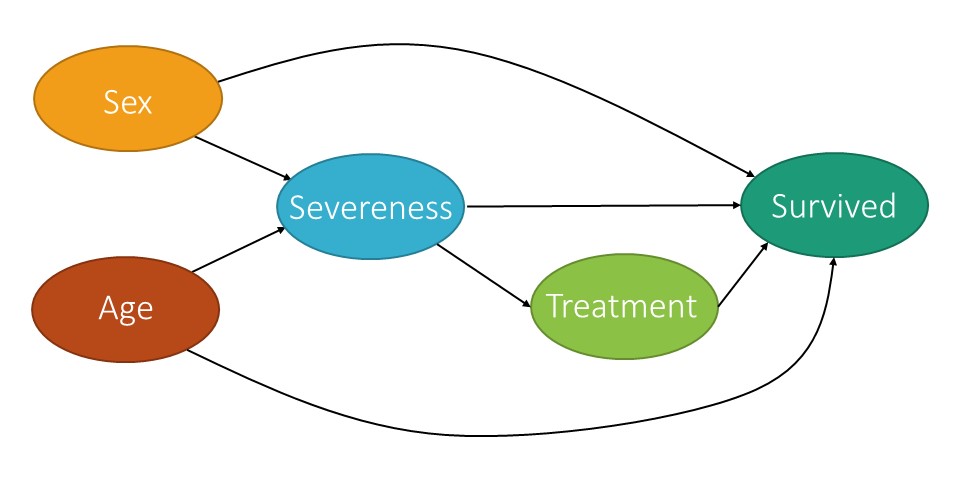}
    \caption{Example of a structural causal model depicting the relationships between variables that determine the survival of some disease.}
    \label{fig:scm}
\end{figure}

Quantitatively measuring causal effects in a causal model is facilitated with the $do$-operator \cite{pearl2009causality} which forces some variable $X$ to take on a certain value $x$, which can be formally denoted by $do(X = x)$ or $do(x)$. In a causal model $\mathcal{M}$, the intervention $do(x)$ is defined as the substitution of the structural equation $X=f_{X}(\Pa{X}, U_X)$ with $X=x$, which corresponds to a modified causal graph that has removed all edges into $X$ and in turn sets $X$ to $x$. For an observed variable $Y$ which is affected by the intervention, its interventional variant is denoted by $Y_{x}$.
The distribution of $Y_{x}$, also referred to as the post-intervention distribution of $Y$ under $do(x)$, is denoted by $P(Y_x=y)$ or simply $P(y_x)$.

Similarly, the intervention that sets the value of a set of variables $\mathbf{X}$ to $\mathbf{x}$ is denoted by $do(\mathbf{X} = \mathbf{x})$. The post-intervention distribution of all other attributes $\mathbf{Y}=\mathbf{V}\backslash \mathbf{X}$, i.e., $P(\mathbf{Y}=\mathbf{y}|do(\mathbf{X}=\mathbf{x}))$ or simply $P(\mathbf{y}|do(\mathbf{x}))$, can be computed by the truncated factorization formula \cite{pearl2009causality},
\begin{equation}\label{eq:do1}
P(\mathbf{y}|do(\mathbf{x})) = \prod_{Y\in \mathbf{Y}}P(y|PA(Y))\delta_{\mathbf{X}=\mathbf{x}},
\end{equation}
where $\delta_{\mathbf{X}=\mathbf{x}}$ means assigning attributes in $\mathbf{X}$ involved in the next term with the corresponding values in $\mathbf{x}$. Specifically, the post-intervention distribution of a single attribute $Y$ given an intervention on a single attribute $X$ is given by
\begin{equation}\label{eq:do}
P(y|do(x)) = \sum_{\mathbf{V}\backslash \{X,Y\},Y=y}\prod_{V\in \mathbf{V}\backslash \{X\}}P(v|PA(V))\delta_{X=x},
\end{equation}
where the summation is a marginalization that traverses all value combinations of $\mathbf{V}\backslash \{X,Y\}$.
We emphasize that $P(y|do(x))$ and $P(y|x)$ are not the same as the probabilistic distribution representing the statistical association ($P(y|x)$) is not equivalent to the interventional distribution ($P(y|do(x))$). We refer the interested readers to \cite{guo2020survey} for a discussion of this difference in relation to confounding bias, back-door criterion, and causal identification. 

By using the $do$-operator, the total causal effect is defined as follows.%
\begin{definition}[Total Causal Effect \cite{pearl2009causality}]\label{def:te}
	The total causal effect (TCE) of the value change of $X$ from $x_0$ to $x_1$ on $Y=y$ is given by
	\[ \TCE(x_1, x_0) = P(y_{x_1}) - P(y_{x_0}). \]
\end{definition}

The total causal effect is defined as the effect of $X$ on $Y$ where the intervention is transferred along all causal paths from $X$ to $Y$. Different from the total causal effect, the controlled direct effect (CDE) measures the effect of $X$ on $Y$ while holding all the other variables fixed. 

\begin{definition}[Controlled Direct Effect]\label{def:cde}
	The controlled direct effect (CDE) of the value change of $X$ from $x_0$ to $x_1$ on $Y=y$ is given by
	\[ \mathrm{CDE}(x_1, x_0) = P(y_{x_1,\mathbf{Z}}) - P(y_{x_0,\mathbf{Z}}) \] where $\mathbf{Z}$ is the set of all other variables. 
\end{definition}

In \cite{pearl2013direct}, Pearl introduced the causal mediation formula to decompose total causal effect into natural direct effect (NDE) and natural indirect effect (NIE).

\begin{definition}[Natural Direct Effect]\label{def:nde}
	The natural direct effect (NDE) of the value change of $X$ from $x_0$ to $x_1$ on $Y=y$ is given by:
	\[ \mathrm{NDE}(x_1, x_0) = P(y_{x_1, \mathbf{Z}_{x_0}}) - P(y_{x_0}) \] where $\mathbf{Z}$ is the set of mediator variables and $P(y_{x_1, \mathbf{Z}_{x_0}})$ is the probability of $Y=y$ had $X$ been $x_1$ and had $\mathbf{Z}$ been the value it would naturally take if $X=x_0$. In the causal graph, $X$ is set to $x_1$ in the direct path $X \rightarrow Y$ and is set to $x_0$ in all other indirect paths. 
\end{definition}

\begin{definition}[Natural Indirect Effect]\label{def:ide}
	The natural indirect effect (NIE) of the value change of $X$ from $x_0$ to $x_1$ on $Y=y$ is given by:
	\[ \mathrm{NIE}(x_1, x_0) = P(y_{x_0, \mathbf{Z}_{x_1}}) - P(y_{x_0}). \] 
\end{definition}

NDE measures the direct effect of $X$ on $Y$ while NIE measures the indirect effect of $X$ on $Y$. NDE differs from CDE since the mediators $\mathbf{Z}$ are set to $\mathbf{Z}_{x_0}$. In other words, the mediators are set to the level that they would have naturally attained under the reference condition $X=x_0$. 

The problem with NIE is that it does not separate the fair (explainable) and unfair (indirect discrimination) effects. Path-specific effect \cite{pearl2009causality} is an extension of total causal effect in the sense that the effect of the intervention is transmitted only along a subset of causal paths from $X$ to $Y$. Let $\pi$ denote a subset of causal paths. The $\pi$-specific effect considers a counterfactual situation where the effect of $X$ on $Y$ with the intervention is transmitted along $\pi$, while the effect of $X$ on $Y$ without the intervention is transmitted along paths not in $\pi$. 

\begin{definition}[Path-specific Effect \cite{avin2005identifiability}] \label{def:pse}
	Given a causal path set $\pi$, the $\pi$-specific effect ($\PE_{\pi}$) of the value change of $X$ from $x_0$ to $x_1$ on $Y=y$ through $\pi$ (with reference $x_0$) is given by
	\[ \PE_{\pi}(x_1, x_0) = P(y_{x_1 \vert \pi, x_0 \vert \bar{\pi}}) - P(y_{x_0}), \]
	where $P(Y_{ x_1 \vert \pi, x_0 \vert \bar{\pi} })$ represents the post-intervention distribution of $Y$ where the effect of intervention $do(x_1)$ is transmitted only along $\pi$ while the effect of reference intervention $do(x_0)$ is transmitted along the other paths.
\end{definition}

It can be directly obtained from the definitions above that the total causal effect and path-specific effect have the following connections:
\begin{enumerate}
    \item if $\pi$ contains all causal paths from $X$ to $Y$, then $\PE_{\pi}(x_{1},x_{0})=\TCE(x_{1},x_{0})$
    \item for any $\pi$, we have $\PE_{\pi}(x_{1},x_{0}) + (-\PE_{\bar{\pi}}(x_{0},x_{1})) = \TCE(x_{1},x_{0})$
\end{enumerate}

Definition~\ref{def:te} and \ref{def:pse} consider the average causal effect over the entire population without any prior observations. In contrast, the effect of treatment on the treated considers the sub-population of the treated group. 

\begin{definition}[Effect of Treatment on the Treated] \label{def:ett}
	The effect of treatment on the treated (ETT) of intervention $X=x_1$ on $Y=y$ (with baseline $x_0$) is given by
		\[ \mathrm{ETT}_{x_1, x_0} = P(y_{x_1|x_0}) - P(y|x_0), \]
	where $P(y_{x_1|x_0})$ 	represents the counterfactual quantity that read as ``the probability of $Y$ would be $y$ had $X$ been $x_1$, given that in the actual world, $X=x_0$.''
\end{definition}

Generally, if we have certain observations about a subset of attributes $\mathbf{O}=\mathbf{o}$ and use them as conditions when inferring the causal effect, then the causal inference problem becomes a \emph{counterfactual inference} problem. This means that the causal inference is performed on the sub-population specified by $\mathbf{O}=\mathbf{o}$ only. Symbolically, conditioning the distribution of $Y_{x}$ on factual observation $\mathbf{O}=\mathbf{o}$ is denoted by $P(y_{x}| \mathbf{o})$.
The counterfactual effect is defined as follows.

\begin{definition}[Counterfactual Effect \cite{shpitser2008complete}]\label{def:ce}
	Given a factual condition $\mathbf{O}=\mathbf{o}$, the counterfactual effect (CE) of the value change of $X$ from $x_0$ to $x_1$ on $Y=y$ is given by
	\[ \CE(x_1, x_0|\mathbf{o}) = P(y_{x_1} | \mathbf{o}) - P(y_{x_0} | \mathbf{o}). \]
\end{definition}

In \cite{wu2019pcfairness}, the authors present a general representation of causal effects, called path-specific counterfactual effect, which considers an intervention on $X$ transmitted along a subset of causal paths $\pi$ to $Y$, conditioning on observation $\mathbf{O}=\mathbf{o}$. 

\begin{definition}[Path-specific Counterfactual Effect]\label{def:psce}
	Given a factual condition $\mathbf{O}=\mathbf{o}$ and a causal path set $\pi$, the path-specific counterfactual effect (PCE) of the value change of $X$ from $x_0$ to $x_1$ on $Y=y$ through $\pi$ (with reference $x_0$) is given by
	\[ \PCE_{\pi}(x_1, x_0|\mathbf{o}) = P(y_{x_1 \vert \pi, x_0 \vert \bar{\pi}}|\mathbf{o}) - P(y_{x_0}|\mathbf{o}). \]
\end{definition}

In \cite{malinsky2019potential}, the conditional path-specific effect is different from our notion in that, for the former, the condition is on the post-intervention distribution, and for the latter, the condition is on the pre-intervention distribution. 

\subsubsection{Potential Outcome Framework}
The potential outcome framework \cite{imbens2015causal}, also known as Neyman-Rubin potential outcomes or the Rubin causal model, has been widely used in many research areas to perform causal inference. It refers to the outcomes one would see under each treatment option. Let $Y$ be the outcome variable, $T$ be the binary or multiple valued ordinal treatment variable, and $\mathbf{X}$ be the pre-treatment variables (covariates). Note that pre-treatment variables are the ones that are not affected by the treatment. On the other hand, the post-treatment variables, such as the intermediate outcome, are affected by the treatment. 

\begin{definition}[Potential Outcome]\label{def:po}
Given the treatment $T = t$ and outcome $Y=y$, the potential outcome of the individual $i$, $Y_i(t)$, represents the outcome that would have been observed if the individual $i$ had received treatment $t$.  
\end{definition}

In practice, only one potential outcome can be observed for each individual. The observed outcome is called the factual outcome and the remaining unobserved potential outcomes are the counterfactual outcomes. The potential outcome framework aims to estimate potential outcomes under different treatment options and then calculate the treatment effect. The treatment effect can be measured at the population, treated group, subgroup, and individual levels. Without the loss of generality, we discuss the treatment effect under binary treatment. 

\begin{definition}[Average Treatment Effect]\label{def:ate}
Given the treatment $T = t$ and outcome $Y=y$, the average treatment effect (ATE) is defined as 
\[\mathrm{ATE} = \mathbb{E}[Y(t')-Y(t)]\]
where $Y(t')$ and $Y(t)$ are the potential outcome and the observed control outcome of the whole population, respectively. 
\end{definition}

\begin{definition}[Average Treatment Effect on the Treated]\label{def:att}
Given the treatment $T = t$ and outcome $Y=y$, the average treatment effect on the treated group ($ATT$) is defined as \[\mathrm{ATT} = \mathbb{E}[Y(t')-Y(t)|T=t].\]
\end{definition}

The average treatment effect answers the question of how, on average, the outcome of interest $Y$ would change if everyone in the population of interest had been assigned to a particular treatment $t'$ relative to if they had received another treatment $t$. The average treatment effect on the treated on the other hand details the average outcome would change if everyone who received one particular treatment $t$ had instead received another treatment $t'$. 
 
\begin{definition}[Conditional Average Treatment Effect]\label{def:cate}
Given the treatment $T = t$ and outcome $Y=y$, the conditional average treatment effect ($CATE$) is defined as 
 \[\mathrm{CATE} = \mathbb{E}[Y(t')-Y(t)|\mathbf{W}={w}]\] 
where $\mathbf{W}$ is a subset of variables defining the subgroup. 
\end{definition}

\begin{definition}[Individual Treatment Effect]\label{def:ite}
Given the treatment $T = t$ and outcome $Y=y$, the individual treatment effect (ITE) is defined as 
\[ \mathrm{ITE} = \mathbb{E}[Y_i(t')-Y_i(t)]\] 
where $Y_i(t')$ and $Y_i(t)$ are the potential outcome and the observed control outcome of individual $i$, respectively.
\end{definition}

The potential outcome framework relies on three assumptions:
\begin{enumerate}
    \item Stable Unit Treatment Value Assumption (SUTVA): requires the potential outcome observation on one unit be unaffected by the particular assignment of treatments to other units.
    \item Consistency Assumption: requires that the value of the potential outcomes would not change no matter how the treatment is observed or assigned through an intervention.
    \item Strong Ignorability (unconfoundedness) Assumption: is equal to the assumption that there are no unobserved confounders\footnote{A confounder is a pre-treatment variable that affects both treatment and outcome variables.}. 
\end{enumerate}

Under these assumptions, causal inference methods can be applied to estimate the potential outcome and treatment effect given the information of treatment variable and pre-treatment variables. We refer the interested readers to the survey \cite{yao2021survey} for various causal inference methods, including re-weighting, stratification, matching based, and representation based methods.  

\section{Philosophical Perspectives}
This section is devoted to explaining the philosophical underpinnings of both statistical and causal-based fair machine learning metrics. Many of the statistical-based fairness metrics in the fair machine learning literature correspond to the notions of distributive justice from social science literature \cite{gajane_formalizing_2018}. Here, we introduce the philosophical ideal of equality of opportunity (EOP) and its three main frames: formal EOP, substantive EOP, and luck-egalitarian EOP. We note that other definitions and ideals of egalitarianism exist, and are relevant to the discussion on fair machine learning \cite{arneson_equality_1989, cohen_currency_1989, dworkin_what_1981}, but we limit ourselves to the discussion below as it directly aligns with our classification of popular fairness metrics in Section 5. We direct interested readers to these mentioned works, along with corresponding surveys \cite{arneson_equality_2015, roemer_equality_2013}, for additional reading. Additionally, we introduce the philosophical notions of causality as proposed by historical philosophers Aristotle, Hume, and Kant as well as through the modern day descriptions by American philosopher David Lewis.

\subsection{Equality of Opportunity (EOP)}
In \cite{khan_fairness_2021}, they propose grounding current (and future) proposed fair machine learning metrics in the moral framework of equality of opportunity (EOP) \cite{sep-equal-opportunity}. EOP is a political ideal that is opposed to assigned-at-birth (caste) hierarchy, but not to hierarchy itself. In a caste hierarchy, a child normally acquires the social status of their parents. Social mobility may be possible, but the process to rise through the hierarchy is open to only specific individuals depending on their initial social status. In contrast to a caste hierarchy, EOP demands that the social hierarchy is determined by a form of equal competition among all members of the society. 

From a philosophical perspective, EOP is a principle that dictates how desirable positions, or opportunities, should be distributed among members of a society. As a moral framework, EOP allows machine learning practitioners to see fairness notions' motivations, strengths, and shortcomings in an organized and comparative fashion. Additionally, it presents moral questions that machine learning practitioners must answer to guide the construction of a system that has fairness ideals that satisfy their desired values \cite{khan_fairness_2021}. Further, it allows practitioners to understand and appreciate why there may be disagreement when it comes to choosing a specific fairness metric, as different people will have different moral beliefs about what fairness and equality mean. Different conceptions of EOP, such as formal, substantive, and luck-egalitarian EOP, interpret the idea of competing on equal terms in different ways.

\begin{figure}[h!]
    \centering
    \includegraphics[scale=.25]{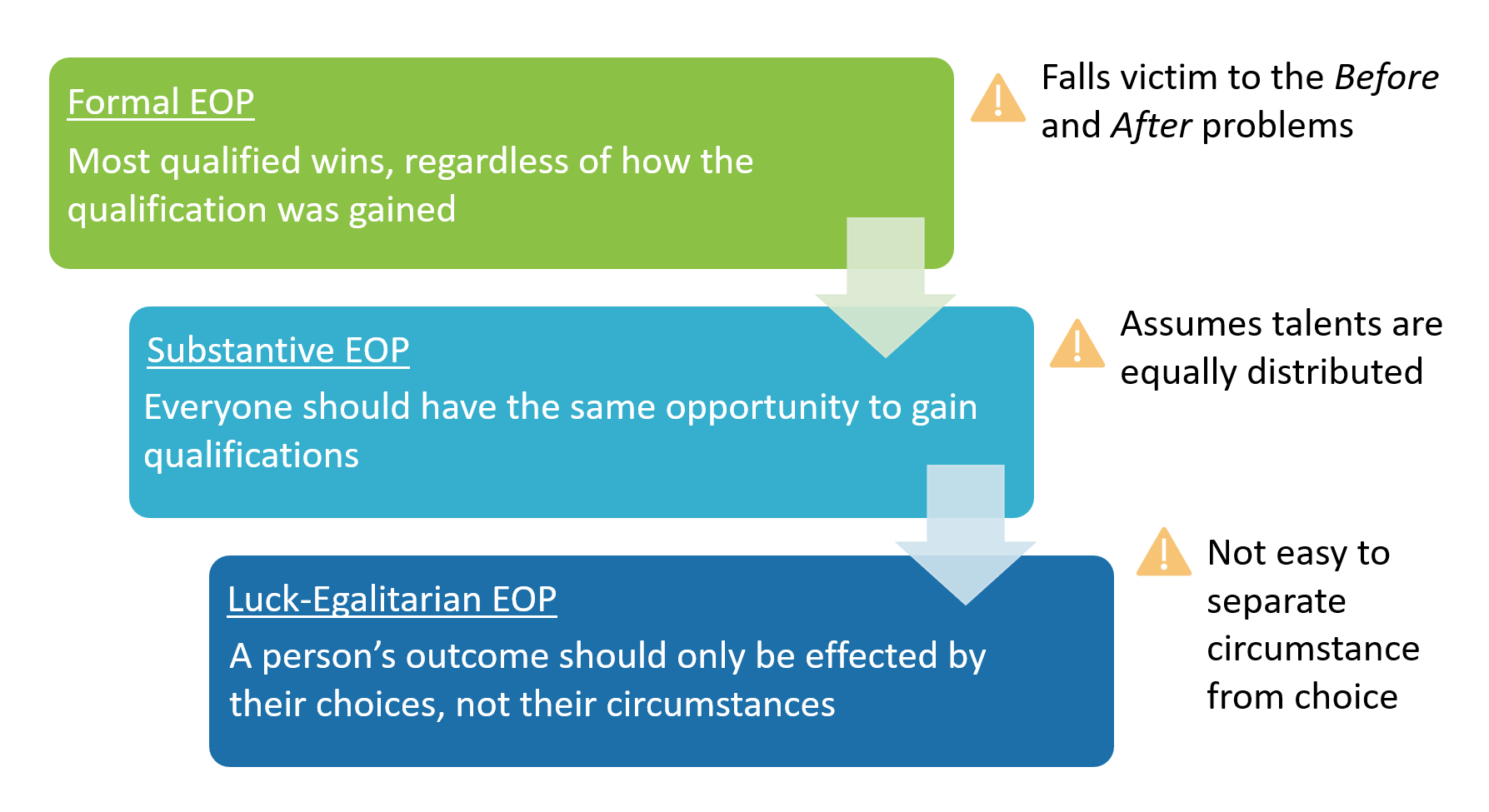}
    \caption{Definitions and warnings for popular equality of opportunity modes.}
    \label{fig:philo_def}
\end{figure}

\subsubsection{Formal EOP}
Formal EOP emphasizes that any desirable position (in a society, or more concretely, a job opening) is available and open to everyone. The distribution of these desirable positions follows according to the individual's relevant qualifications, and in this setting, the most qualified always wins. Formal EOP takes a step in the direction of making decisions based on relevant criteria rather than making them in a blatant discriminatory fashion \cite{khan_fairness_2021}. In fair machine learning settings, formal EOP has often been implemented as fairness through blindness or fairness through unawareness \cite{kusner_loftus_russell_silva}. In other words, formal EOP-based metrics strip away any irrelevant marginalization attributes, such as race or gender, before training is performed. 

However, while formal EOP has the benefit of awarding positions based on actual qualifications of an individual, and in excluding irrelevant marginalization information, it makes no attempt to correct for \textit{arbitrary} privileges. This includes unequal access to opportunities that can lead to disparities between individuals' qualifications. Examples of this situation can be seen in the task of predicting prospective students' academic performance for use in college admission decisions \cite{foulds_intersectional_2019}. Individuals belonging to marginalized or non-conforming groups, such as Black or LGBTQIA+\footnote{Lesbian, Gay, Bisexual, Transgender, Queer, Intersex, and Asexual} students, are disproportionately impacted by the challenges of poverty, racism, bullying, and discrimination. An accurate predictor for a student's success therefore may not correspond to a fair decision-making procedure as the impact of these challenges create a tax on the ``cognitive-bandwidth'' of non-majority students, which in turn, affects their academic performance \cite{berk2017fairness,verschelden_bandwidth_2017}.

The problem of not accounting for arbitrary privileges can be broken down into two main problems: the Before and After problems. In the Before problem, arbitrary and morally irrelevant privileges weigh heavily on the outcomes of formally fair competitions as people with more privilege are often in a better position to build relevant qualifications. This can be seen in the problem described above of predicting students' performance for admissions. The After problem is an effect of formally fair, but not arbitrary-privilege aware, competition in that the winner of a competition (e.g., getting hired or admitted to a top-tier college) is then in the position to gain even more success and qualifications. It introduces a compounding snow-ball effect in that winners win faster, but losers also lose faster \cite{khan_fairness_2021}. Overall, formal EOP compounds both privilege and disprivilege, a term referred to as \textit{discrimination laundering} \cite{green_discrimination_2017}.

\subsubsection{Substantive (Rawls) EOP} Substantive EOP goes one step beyond formal EOP, and address the discrimination laundering problem in that it requires all individuals have the same opportunity to gain qualifications. It aims to give everyone a fair chance at success in a competition. For example, making all extra curricular activities and high-school opportunities equally available to all students regardless of wealth or social status. Substantive EOP is often equated with Rawls' fair EOP which states that all individuals, regardless of how rich or poor they are born, should have the same opportunities to develop their talents in order to allow people with the same talents and motivation to have the same opportunities \cite{rawls_theory_1999}. In fair machine learning, substantive EOP is often implemented through metrics such as statistical parity and equalized odds, which assume that talent and motivation are equally distributed among sub-populations. 

But, the assumption that these talents are equally distributed, often does not hold in practice. By the time a machine learning system is being used to make a decision, it is normally too late to provide individuals with the opportunity to develop qualifications. In this lens, fair machine learning has re-written Rawls' EOP to say that a competition must only measure a candidate on the basis of their talent, while ignoring qualifications that reflect the candidates unequal developmental opportunities prior to the point of the competition \cite{khan_fairness_2021}.

\subsubsection{Luck-Egalitarian EOP} Furthering the ideals of substantive EOP, luck-egalitarian EOP enforces that a person's outcome should be affected only by their choices, not their circumstances \cite{segall_equality_2013}. For example, a student with rich parents who did not try hard in their studies should not have an advantage over a student with poor parents who did work hard in being admitted to a university. Overall, luck-egalitarian EOP is an attractive idea, but the difficulty of separating choice from circumstance is non-trivial, and in practice, quite difficult to implement.

A few solutions for separation have been proposed. Economist and political scientist John Roemer proposed instead of trying to separate an individuals qualifications into effects of consequence and choice, we should instead control for certain matters of consequence (such as, race, gender, and disability) that will impact a person's access to opportunities to develop qualifications \cite{roemer_equality_2013}. While this solution solves the separation problem, another issue of sub-group comparison emerges. We can compare apples to apples, and oranges to oranges, but we are now unable to compare apples to oranges \cite{khan_fairness_2021}. Unfortunately, the EOP frameworks offer no solution to this problem of overall ranking. 

Another problem in luck-egalitarian EOP is the separation of efforts in addition to circumstance \cite{khan_fairness_2021}. It may be the case that a wealthy student works hard at their studies, i.e., the circumstance of being wealthy interacts with the effort of the student. This effect of entanglement is nearly impossible to separate. But, fortunately, this separation is only required when the circumstance gives access to a broad range of advantages. For instance, if a student's family wealth status allows them to gain an advantage over all other students in almost every competition (not just university admission, but also job hiring or access to other opportunities), then there is a fairness problem. This is because there is an indication that the arbitrary privilege or circumstance, and not the relevant skill, is being used as the basis for decision. On the other hand, if the student only has the advantage in the admissions process, then it could be due to their effort rather than their circumstance, and we may or may not have a matter of unfairness where we need to separate effort from circumstance.

\subsection{Causality and Counterfactuals}
\begin{figure}
\centering
    \includegraphics[scale=.4]{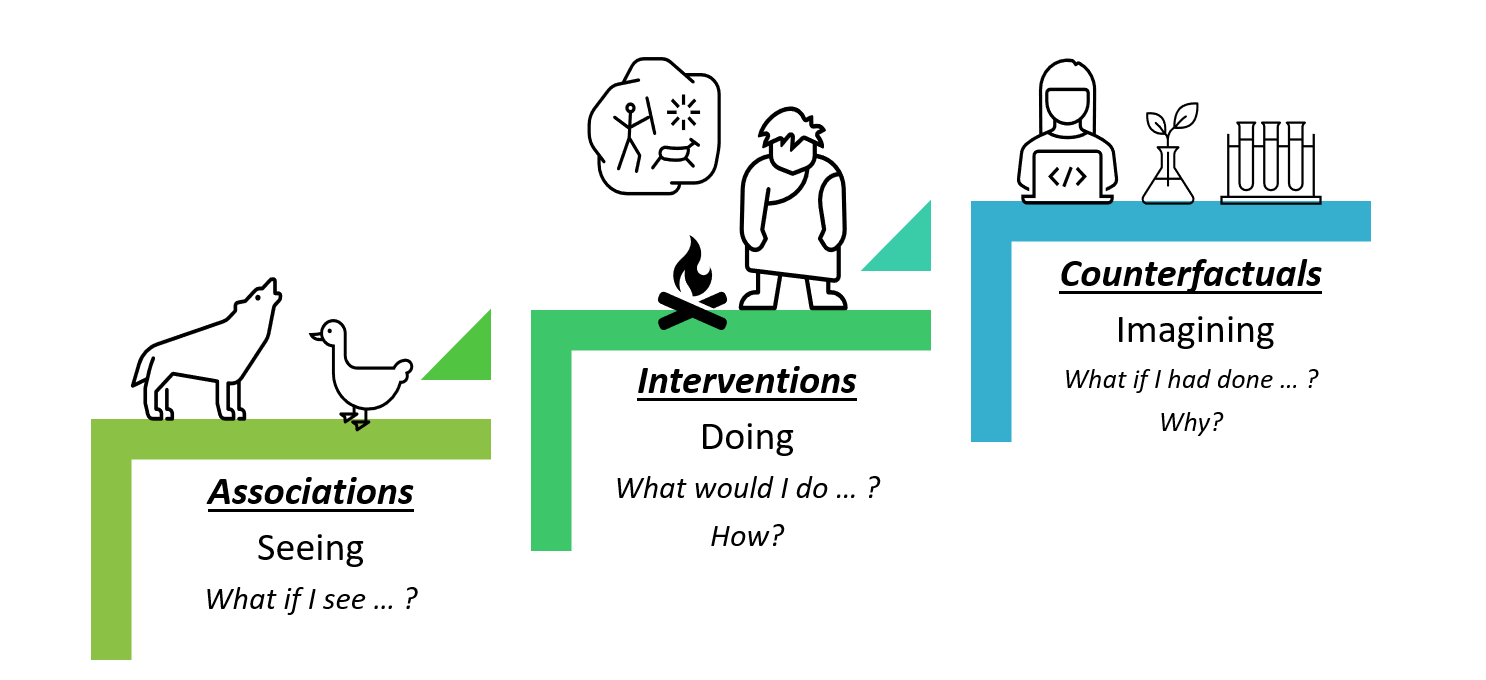}
    \caption{Pearl's Ladder of Causation. The first rung, associations, only allows predictions based on passive observations. The second rung, interventions, not only relies on seeing, but also changing what is. Rung three, counterfactuals, deals with the imaginary, or what might have been. See Section 6 for a more in-depth view of the Ladder of Causation.}
\label{fig:ladderofcause}
\end{figure}

``\textit{Felix, qui potuit rerum cognoscere causas}," Latin poet Virgil wrote in 29 BC, ``fortunate is he, who is able to know the causes of things." Several hundreds of years later in 1953, Albert Einstein reaffirmed this statement, proclaiming that ``the development of Western science is based on two great achievements: the invention of the formal logical system (in Euclidean geometry) by the Greek philosophers, and the discovery of the possibility to find out causal relationships by systematic experiment (during the Renaissance).'' But even thousands of years before Virgil and Einstein, dating back to 3,000-2,000 BC, the Sumerian civilization often wrote of causal relationships (termed as omens in their day) that had the form of ``if $x$ happens, then $y$ will happen'' \cite{Frosini2006}. Causal and counterfactual reasoning is ``part of the mental apparatus that makes us human", so it is not surprising that we can find causal and counterfactual statements ``as far back as we want to go in human history \cite{PearlMackenzie18}."


The first formal investigation into causality was done by the Greek philosopher Aristotle. In 350 BC, Aristotle published his two famous treatise, \textit{Physics} and \textit{Metaphysics}. In these treatise, Aristotle not only opposed the previously proposed notions of causality for not being grounded in any solid theory \cite{sep-aristotle-causality}, but he also constructed a taxonomy of causation which he termed ``the four causes." In order to have proper knowledge, he deemed that we must have grasped its cause, and that giving a relevant cause is necessary and sufficient in offering a scientific explanation. His four causes can be seen as the four types of answers possible when asked a question of ``why."
\begin{enumerate}
    \item The material cause: ``that out of which'' (something is made). E.g, the marble of a statue.
    \item The formal cause: ``the form'', ``the account of what-it-is-to-be.'' E.g, the shape of the statue. 
    \item The efficient cause: ``the primary source of the change or rest.'' E.g, the artist/sculptor of the marble statue.
    \item The final cause: ``the end, that for the sake of which a thing is done.'' E.g, the creation of a work of art.  
\end{enumerate}
Despite giving four causes, Aristotle was not committed to the idea that everything explanation had to have all four. Rather, he reasoned that any scientific explanation required \textit{up to} four kinds of cause \cite{sep-aristotle-causality}. It is important to note, however, that Aristotle made no mention of the theory of counterfactuals.

To find a philosopher who centered causation around counterfactual reasoning, we look to 18th century Scottish philosopher David Hume. He rejected Aristotle's taxonomy and instead insisted on a single definition of cause. This is despite the fact that he himself could not choose between two different, and later found to be incompatible, definitions \cite{PearlMackenzie18}. In his \textit{Treatise of Human Nature}, Hume states that ``several occasions of everyday life, as well as the observations carried out for scientific purposes, in which we speak of a condition A as a cause and a condition B as its effect, bear no justification on the facts, but are simply based on our habit of observing B after having observed A" \cite{Frosini2006}. In other words, Hume believed that the cause-effect relationship was a sole product of our memory and experience \cite{PearlMackenzie18}. Later, in 1739, Hume published \textit{An Enquiry Concerning Human Understanding} in which he framed causation as a type of correlation: ``we may define a cause to be an object followed by another, and where all the objects, similar to the first, are followed by objects similar to the second. Or in other words, where, if the first object had not been, the second never had existed." While he tried to pass these two definitions off as one by using ``in other words," David Lewis pointed out that the second statement is contradictory to the first as it explicitly invokes the notion of a counterfactual which, cannot be observed, only imagined \cite{PearlMackenzie18}.

The incorporation of counterfactuals into causal theory went largely ignored by philosophers through the 19th and 20th centuries who instead tried to justify Hume's first definition through the theory of probabilistic causation\footnote{See \textit{The Book of Why: The New Science of Cause and Effect} by Judea Pearl and Dana Mackenzie for the entangled history of causation, counterfactuals, and statistics.} \cite{PearlMackenzie18}. It was not until the 1973 publication of \textit{Counterfactuals} by David Lewis that this trend would change. He called for abandoning the first definition all together, and to instead interpret ``A has caused B" as ``B would not have occurred if not for A" as it more closely aligned with human intuition \cite{lewis-1973}. He reasoned that we as humans make counterfactual judgements easily, often, and that we assign these counterfactual situations truth values and probabilities with the same confidence that we have for factual statements by envisioning ``possible worlds" in which the counterfactual statements are actually true \cite{PearlMackenzie18}. The idea of other possible worlds was met with mass criticism, and it would take the work of Judea Pearl to defend Lewis's claims. 

Much of modern day philosophy and theory on causality can be attributed to Judea Pearl. Pearl, an Israeli-American computer scientist and philosopher, is credited with the development of causal and counterfactual inference theory based on structural models as well as the Ladder of Causation, which is shown in Fig. \ref{fig:ladderofcause}. He believed that Lewis's critics (as well as Lewis himself) were thinking the wrong way about the ``existence of other worlds." He argued that we do not need to debate about if these world exist or not, but rather we are simply hypothesizing that people are capable of imagining other worlds in their heads, which relates to the ``amazing uniformity of the architecture of the human mind" \cite{PearlMackenzie18}. He believed that we humans experience the same world, which in turn allows us to experience the same mental model of its causal structure and binds us together into communities \cite{PearlMackenzie18} (see Section 7.4 for critique of this idea). Additionally, he believed that while most philosophers thought the structural causal model to be just one possible implementation of Lewis's logic, SCM were actually extremely similar to the way that our brains represent counterfactuals \cite{PearlMackenzie18}.

Beyond Lewis and Pearl, several other modern day philosophers have proposed notions of causality with vary degrees of ``realness''. For instance, Bertrand Russell and Karl Pearson denied any scientific status to the cause-effect relationship \cite{Frosini2006}. Other statisticians and philosophers of science opposed this line of thinking. One example is Wesley C. Salmon, who wrote: ``to many scientists uncorrupted by philosophical training, it is evident that causality plays a central role in scientific explanation" \cite{humphreys_salmon_2000}.

\section{Legal Perspectives}
The aspects of legality-based fairness will change depending on what country hosts the party who is wanting to create a fair machine learning algorithm. What is considered legally fair in one country may contradict or oppose that of another. For this reason, we specify that this article is framed from the viewpoint of Western law, specifically that of U.S. anti-discrimination and criminal law. We chose this frame of reference since most fair machine learning literature focuses on terminology and legal aspects proposed in this body of law \cite{xiang2019legalcompatibility}. 

\begin{figure}[h!]
    \centering
    \includegraphics[scale=.3]{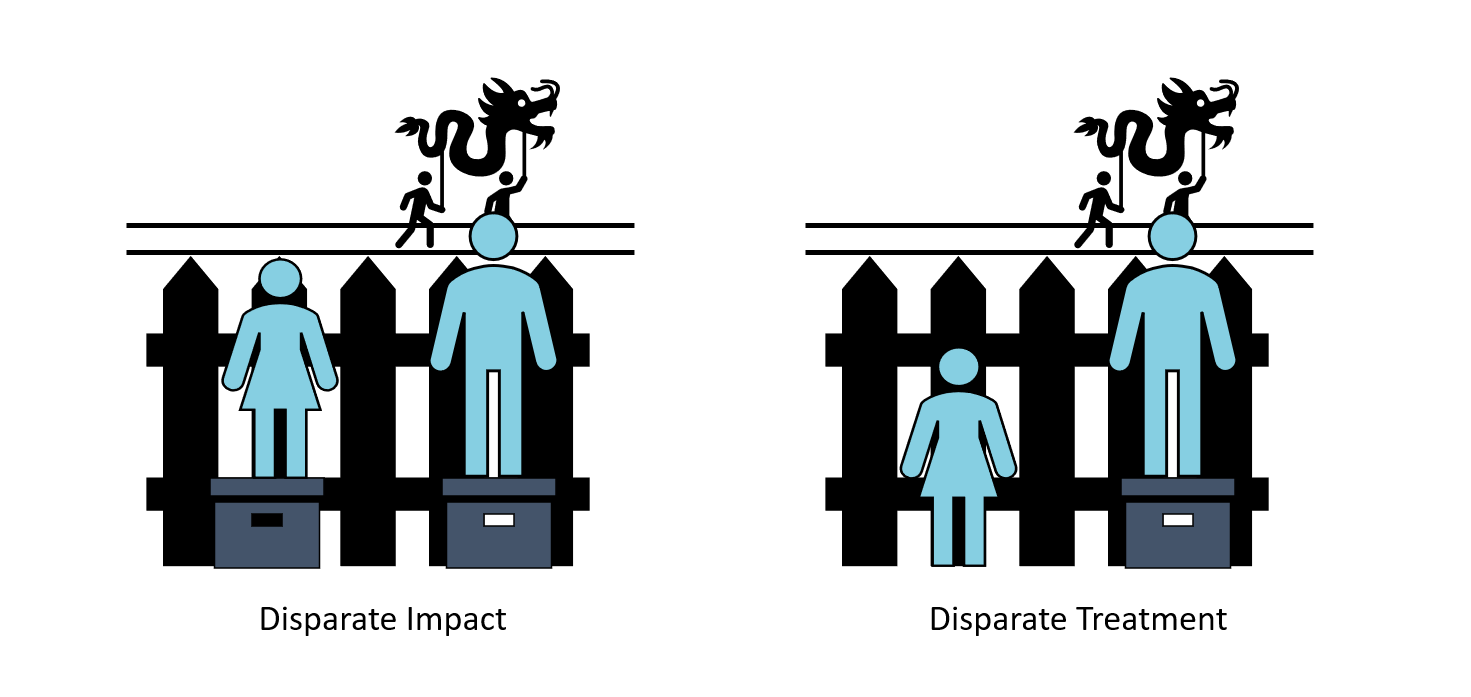}
    \caption{Graphical depiction of disparate impact vs. disparate treatment. For the disparate impact (left) side, while both people were given boxes to stand on in order to see over the fence, the person on the left is disproportionately impacted by being shorter. For disparate treatment, the person on the left if directly being discriminated against by not being given a box to stand on.}
    \label{fig:dis_imp_dis_treat}
\end{figure}


\subsection{Disparate Impact}
Disparate impact occurs when members of a marginalized class are negatively affected more than others when using a formally neutral policy or rule \cite{pessach_algorithmic_2020}. In other words, it is unintentional or indirect discrimination. This statutory standard was first formalized in the 1971 landmark U.S. Supreme Court case of \textit{Griggs vs. Duke Power Co.}, where Duke Power Co. required employees to have a high school diploma in order to be considered for a promotion \cite{griggs1971}. The Supreme Court noted that Title VII of the 1964 Civil Rights Act was violated in this situation as Duke Power Co.'s diploma requirement, which had little relation to the overall job performance, prevented a disproportionate number of Black employees from being promoted, and thus, deemed it to have an illegal (unjustified) disparate impact. In fair machine learning literature, disparate impact is often framed as occurring when the outcomes of classification differ across different subgroups, even if this outcome disparity is not intentional \cite{lipton2019does}.

However, disparate impact in of itself is not illegal. In U.S. anti-discrimination law, indirect discrimination in employment is not unlawful if it can be justified by a ``legitimate aim" or a genuine occupational requirement and/or business necessity \cite{weerts_introduction_2021}. For example, in the \textit{Griggs vs. Duke Power Co.} case, if the high school diploma requirement was shown to be necessary for overall job success, then the resulting disparities would have been legal \cite{corbettdavies2018measure}.

The most common measure of disparate impact is the four-fifths rule proposed in the Uniform Guidelines on Employee Selection Procedures in the Code of Federal Regulation \cite{four-fifths}. This rule states that if the selection rate for a certain group is less than 80\% of the group with the highest selection rate, then there is a disparate impact (or adverse impact) on that group. For a concrete example, in the \textit{Griggs vs. Duke Power Co.} case, the selection rate of Black individuals for promotion was 6\% while the selection rate of White individuals was 58\%. By taking the ratio of the two selection rates we get $\frac{6}{58} = 10.3\%$ which is well below the 80\% threshold. In Section 5.1 we discuss a fairness metric that aligns well with the disparate impact four-fifths rule. 

\subsection{Disparate Treatment}
In direct contrast with disparate impact, disparate treatment occurs when an individual is intentionally treated different based on their membership of a marginalized class. The Uniform Guidelines on Employee Selection Procedures states that: ``disparate treatment occurs where members of a race, gender, or ethnic group have been denied the same employment, promotion, membership, or other employment opportunities as have been available to other employees or applicants'' \cite{UGESP-disparate-treatment}. The key legal question behind disparate treatment is whether the alleged discriminator's actions were motivated by discriminatory intent \cite{ four-fifths}. In fair machine learning literature, a metric is often described as adhering to disparate treatment law if it does not use any marginalization attributes in the decision making process \cite{jagielski2019differentially}.

\subsection{Anti-Classification and Anti-Subordination}
Two main principles that were produced by the U.S. anti-discrimination legislation are anti-classification and anti-subordination \cite{xiang2019legalcompatibility, balkin_siegel_2003}. The anti-classification principle notes that the U.S. government may not classify people (either overtly or surreptitiously) by use of a marginalization attribute such as race or gender. Anti-subordination, on the other hand, is the idea that anti-discrimination laws should serve a grander purpose and should actively be aiming to tackle societal hierarchies between different groups. In other words, it argues that it is inappropriate for any group to be ``subordinated'' in society, and that empowering marginalized groups, even at the sake of the non-marginalized group, should be the main focus of anti-discrimination law \cite{colker_2021}. Anti-classification is often seen in fair machine learning as fairness through unawareness \cite{dwork_fairness_2011}, although, many note that the exclusion of the marginalization attributes can lead to discriminatory solutions \cite{corbettdavies2018measure}. Anti-subordination is less popular in the fair machine learning literature, but some work using debiasing techniques could potentially have the effect of anti-subordination \cite{xiang2019legalcompatibility, lipton2019does}.

\subsection{Critique of Current Fair ML Metric Alignment}
Several works exist that critique the alignment of current fairness metrics with disparate impact and disparate treatment. Xiang and Raji note that both disparate impact and disparate treatment were developed with human discriminators in mind, and simply replacing human decision makers with algorithmic ones is often not appropriate \cite{xiang2019legalcompatibility}. They state that ``intent is an inherently human characteristic'', and the common fair machine learning characterization of disparate treatment as not using marginalization class variables in an algorithm should be contested. They also note that simply accounting for disproportionate outcomes is not enough to prove disparate impact. It is only the first step of a disparate impact case, and there is only liability if the defendant cannot justify the outcomes by using non-discriminatory rationals. 

Additionally, \cite{grgic-hlaca_beyond_nodate} notes that while most work on fair machine learning has focused on achieving a fair distribution of decision outcomes, little to no attention has been paid to the overall decision process used to generate the outcome. They note that this is at the determent of not incorporating human moral sense for whether or not it is fair to use a feature in a decision making scenario. To this end, they propose notions of procedural fairness that utilize several considerations that are overlooked in distributive fairness cases, such as feature volitionality, feature reliability, feature privacy, and feature relevance.

\begin{table}[t!]\small
\centering
\setlength{\extrarowheight}{0pt}
\addtolength{\extrarowheight}{\aboverulesep}
\addtolength{\extrarowheight}{\belowrulesep}
\setlength{\aboverulesep}{0pt}
\setlength{\belowrulesep}{0pt}
\caption{Definitions of legal causation terminology.}
\begin{tabular}{cl} 
\toprule
\rowcolor[rgb]{0.592,0.792,0.792} \textbf{\textcolor[rgb]{0,0.502,0.502}{Term}} & \multicolumn{1}{c}{\textbf{\textcolor[rgb]{0,0.502,0.502}{Definition}}}                                                                                                                                                                                                \\ 
\hline
Causation                                                                       & The causing or producing of an effect.                                                                                                                                                                                                                                 \\ 
\hline
\begin{tabular}[c]{@{}c@{}}Factual\\(“but for”)\\Causation\end{tabular}         & \begin{tabular}[c]{@{}l@{}}An act or circumstance that causes an event, where the event \\would not have happened had the act or circumstance not occurred.\end{tabular}                                                                                               \\ 
\hline
\begin{tabular}[c]{@{}c@{}}Proximate\\Causation\end{tabular}                    & A cause that is legally sufficient to result in liability.                                                                                                                                                                                                             \\ 
\hline
\begin{tabular}[c]{@{}c@{}}Intervening\\Factor\end{tabular}    & \begin{tabular}[c]{@{}l@{}}An event that comes between the initial event (in a sequence of\\events) and the end result, thereby altering the natural course of\\events that might have connected a wrongful act to an injury.\end{tabular}                             \\ 
\hline
\begin{tabular}[c]{@{}c@{}}Superceding\\Factor\end{tabular}                     & \begin{tabular}[c]{@{}l@{}}An intervening act that the law considers sufficient to override the\\cause for which the original actor is responsible, thereby relieving\\the original actor of liability for the result.\end{tabular}  \\
\bottomrule
\end{tabular}
\end{table}

\subsection{Procedural Fairness}
Machine learning fairness researchers are beginning to recognize that most of the proposed fairness metrics have focused solely on distributive fairness (i.e., disparate impact or disparate treatment) \cite{morse_teodorescu_awwad_kane_2021, 10.1145/3287560.3287598, saxena2019fairness}. But, over the last few years, commentary on utilizing procedural fairness as a grounding for fair machine learning metrics have been published \cite{morse_teodorescu_awwad_kane_2021, grgic-hlaca_beyond_nodate}. Procedural fairness refers to the fairness of the decision making \textit{process} that lead to the outcome. This contrasts with distributive fairness which refers to making sure that the \textit{outcomes} of a process (or classification task) are fair \cite{grgic-hlaca_beyond_nodate}.

However, there has been some push-back on this view of procedural fair machine learning, though. Xiang and Raji note that the term ``procedural fairness'' as described in the fair machine learning literature is a narrow and misguided view of what procedural fairness means from a legal lens \cite{xiang2019legalcompatibility}. Procedural justice aims to arrive at a just outcome through an iterative process as well as through a close examination of the set of governing laws in place that guide the decision-maker to a specific decision \cite{xiang2019legalcompatibility,citron_scored_2014}. They pose that the overall goal of procedural fairness in machine learning should be re-aligned with the aim of procedural justice by instead analyzing the system surrounding the algorithm, as well as its use, rather than simply looking at it from the specifics of the algorithm itself.

\subsection{Causality and the Law}
When proving guilt in a criminal court case, the prosecution is required to prove that the defendant's action was the legal cause of the result. Establishing this causal relationship is a two-step process in which the prosecution first establishes \textit{factual} (``but for'') causation and then determines if there is \textit{proximate} causation \cite{Kaplan2012}. Different to criminal law, anti-discrimination law is motivated by ``because of'' causality, i.e., decision cannot be made ``because of'' an individual's marginalization attribute \cite{xiang2019legalcompatibility}. In this section, we will focus on criminal law as it is the most intuitive implementation of causality in the legal field.

To prove factual causation, the prosecutor must depict that ``but for'' the defendant's specific actions, the result or injury would not have happened as it did, when it did, or at all. However, this does not mean that the prosecution has to prove that the defendant's actions were the sole cause of the result, as their actions may have been combined with those of another person, or another circumstance, that all contributed to the final result. An exception to factual causation is when the chain of events caused by the defendant's actions is effectively broken. These intervening factors must be unforseeable. For instance, if the defendant's actions put the victim in the hospital (in a non-critical condition), but by the effect of gross medical malpractice (just negligent medical care is foreseeable), they die. Then, the defendant would most likely be charged for assault, but not homicide. 

After proving factual causation, the prosecution must then prove proximate causation. Typically, proximate cause issues arise when the final result occurs through an unexpected manner. For instance, if the defendant shot the victim in the arm, who then while running away from the defendant, tripped, fell, and cracked their skull, which resulted in their death a few moments later, then the defendant's actions were the proximate cause of the victims death. The general rule is that the defendant’s actions will be regarded as the proximate cause of a result if the result occurred as a “natural and probable consequence” of the acts, and there was no intervening factor sufficient enough to break the chain of causation \cite{geiger1968, causation_2021}.

\begin{figure}[t!]
    \centering
    \includegraphics[scale=.3]{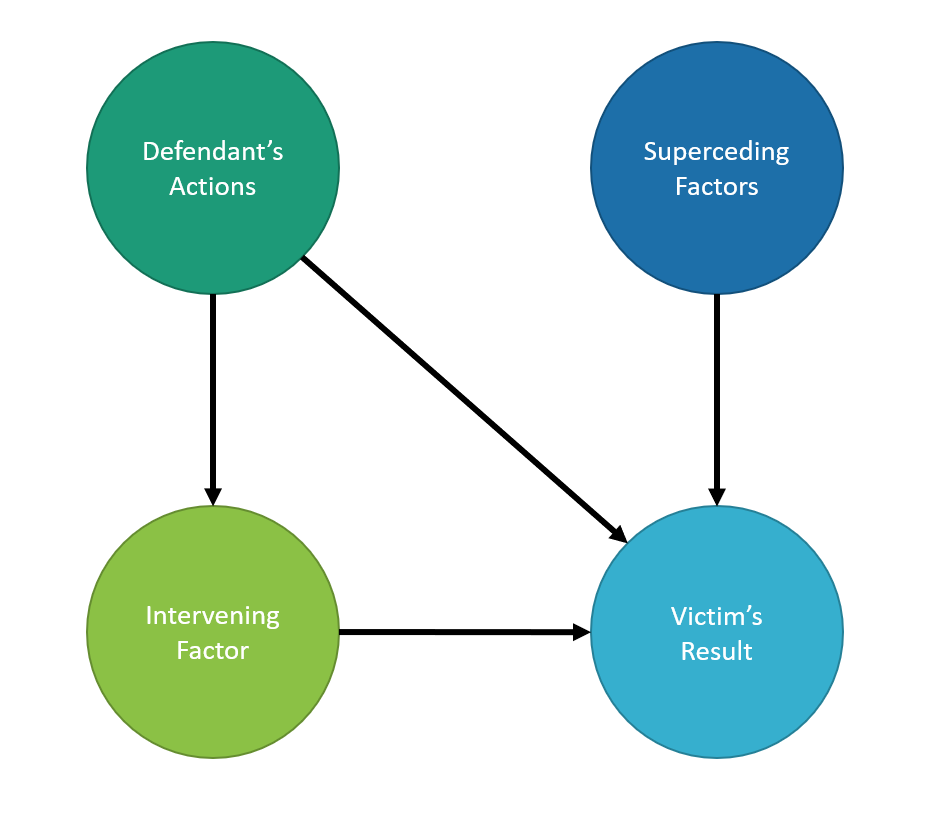}
    \caption{Graphical model displaying the causal relationships between the defendant's actions, the victim's result, along with intervening and superceding Factors.}
    \label{fig:causal}
\end{figure}
 Using causal graphical modeling, we can display the relationship of causation in the law as shown in Fig. \ref{fig:causal}. When relating to casual-based fairness metrics (see Section 6), the legal notion of causality closely aligns with the idea of path-specific causal effect. In this case, instead of computing the direct and indirect effects, path-specific causal effect isolates the contribution of the effect along a specific group of paths \cite{chiappa2019pathspecific}. This is similar to how lawyers and judges make decisions on if a certain action caused a certain effect. For instance, they reason if the intervening factor (if there is one) played a role in the victim's result and if this intervening factor ``broke the chain'' of the defendants actions in a way that no longer holds them liable. This would result in turning \textit{Defendant's Actions} $\to$ \textit{Intervening Factor} $\to$ \textit{Victim's Result} to simply be \textit{Intervening Factor} $\to$ \textit{Victim's Result} as shown in Figure \ref{fig:causal}.

While judges and policy makers are apt at determining cause in the real-world, they often do not understand what indicates causality in the use of algorithms. Xiang and Raji give the following example \cite{xiang2019legalcompatibility}:
\begin{displayquote}
``The U.S. Department of Housing and Urban Development recently proposed a rule that would create a safe harbor from disparate impact claims for housing algorithms that do not use [marginalization] class variables or close proxies. The rule seeks to implement the Supreme Court decision in the Texas Department of Housing and Community Affairs v. Inclusive Communities Project, Inc. case, which required housing discrimination plaintiffs to `show a causal connection between the Department's policy and a disparate impact.' That said, from a statistical perspective, the presence or absence of [marginalization] class variables (or close proxies) in the algorithm does not indicate the presence or absence of a causal connection.''
\end{displayquote}
With this confusion, it is important that the fair machine learning community makes efforts to communicate with the legal field about what causality means from a technical perspective, and help them to translate what the corresponding nature of causality would then be in legal terms.

\section{Statistics-Based Fairness Metrics}
\label{sec:sfairness}

In this section, we organize several popular statistics-based fair machine learning metrics into categories based on what attributes of the system they use (e.g., the predicted outcome, the predicted and actual outcomes, or the predicted probability and actual outcome), which formal statistical criterion (independence, separation, or sufficiency) it aligns with as proposed in \cite{barocas-hardt-narayanan}, as well as which philosophical ideal serves as its foundation (e.g., Rawls' (substantive) EOP or Luck-Egalitarian EOP) by using the classification procedure explained in \cite{heidari_moral_2019, lefranc_equality_2009}. Fig. \ref{fig:conf-matrix} shows our classification through an extended confusion matrix and Table 4 summarizes our main conclusions. Additionally, at the end of this section, we devote space to discussing individual fairness and the (apparent) differences between individual and group fairness metrics.

\begin{figure}[h!]
    \centering
    \includegraphics[scale=.35]{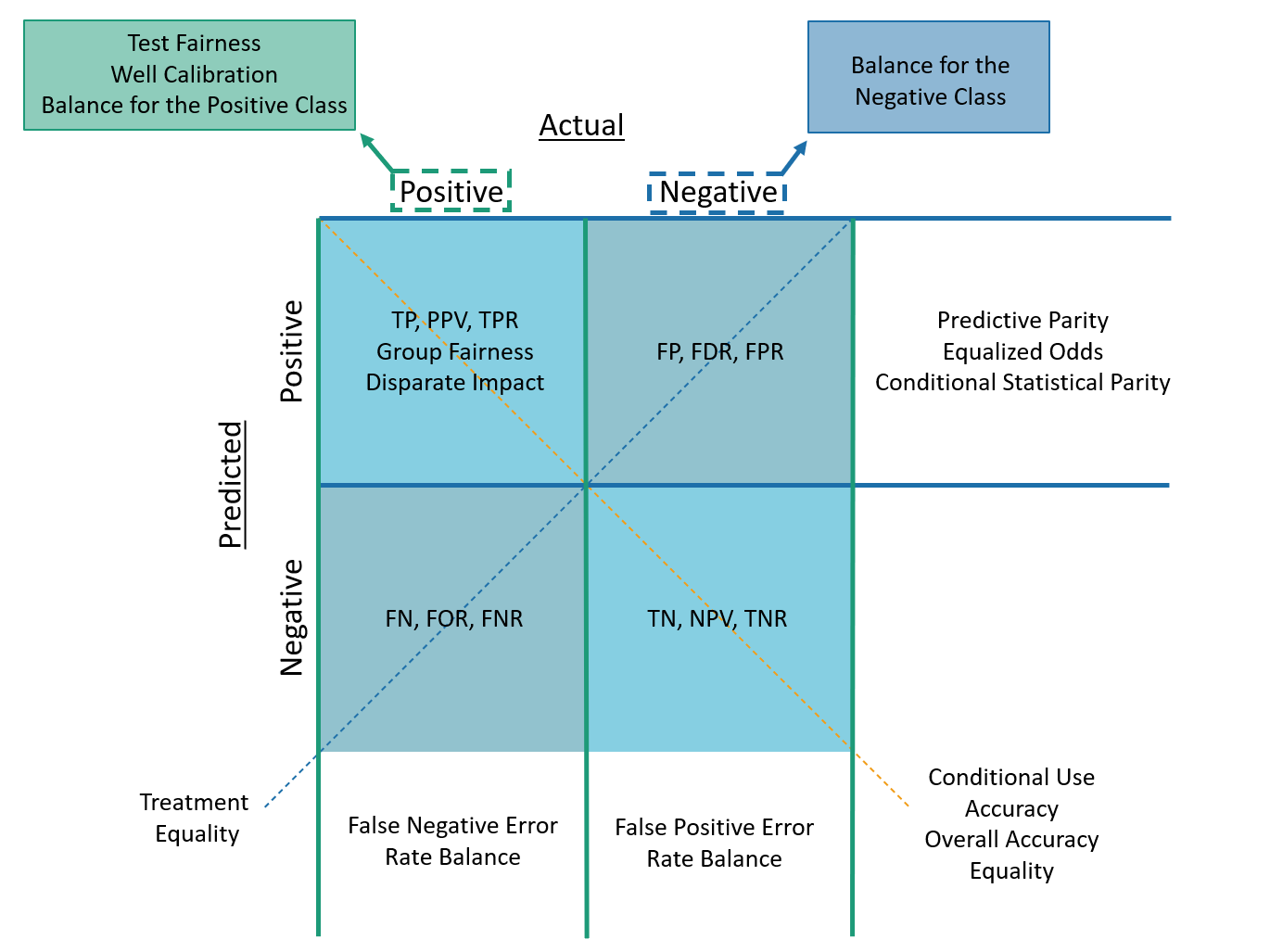}
    \caption{Classification of each fairness metric into our extended confusion matrix. Metrics were classified by which statistics they used. For instance, since equalized odds uses positive predictive rates, it is on the row where the predicted value is positive. Similarly, since test fairness says that all groups should have equal probability of belonging to the positive class, it is classified in the top left square related to the actual label of positive. }
    \label{fig:conf-matrix}
\end{figure}
For clarity, we note that this section uses the following variables: $S=1$ is the marginalized or minority group, $S = 0$ is the non-marginalized or majority group, $\hat{Y}$ is the predicted value or class (i.e., label), and $Y$ is the true or actual label. Additionally, to provide a background for our future explanations, we state a relaxed, binary, version of two important definitions for Rawls' EOP and Luck-Egalitarian EOP for supervised learning proposed by Heidari et al \cite{heidari_moral_2019}:

\begin{definition}[Rawls' EOP for Supervised Learning]
 Predictive model $h$ satisfies Rawlsian EOP if for all $s\in S=\{0,1\}$ and all $y\in Y = \{0,1\} $:
 $$F^h(U \leq u\;|\;S = 0 \cap Y = y) = F^h(U \leq u \;|\; S = 1 \cap Y = y)$$
 where $F^h(U \leq u)$ specifies the distribution of utility across individuals under the predictive model $h$. I.e., difference between the individuals' actual effort, $A$, and their circumstance, $D$, $U = A - D$. In this relaxed case, utility is the difference between the individuals predicted and actual class. 
\end{definition}

\begin{definition}[Luck-Egalitarian EOP for Supervised Learning]
Predictive model $h$ satisfies luck-egalitarian EOP if for all $\hat{y}\in\hat{Y}=\{0,1\}$ and all $s\in S=\{0,1\}$:
 $$F^h(U \leq u\;|\;S = 0 \cap \hat{Y} = \hat{y}) = F^h(U \leq u \;|\; S = 1 \cap \hat{Y} = \hat{y})$$
where $F^h(U \leq u)$ specifies the distribution of utility across individuals under the predictive model $h$. I.e., difference between the individuals' actual effort, $A$, and their circumstance, $D$, $U = A - D$. In this relaxed case, utility is the difference between the individuals predicted and actual class.
\end{definition}

\subsection{Predicted Outcome}
The predicted outcome family of fairness metrics are the simplest, and most intuitive, notions of fairness. This is because the metrics in this category only use the model's final predictions and do not rely on the original class labels. More explicitly, the predicted outcome class of metrics focuses on using the predicted outcome of various different demographic distributions of the data, and models only satisfy this definition if both the marginalized and non-marginalized groups have equal probability of being assigned to the positive predictive class \cite{verma_fairness_2018}. 

Many different metrics fall into the predicted outcome category, such as group fairness and conditional statistical parity. Additionally, each metric in this group satisfies Rawls' definition of EOP as well as satisfies the statistical constraint of independence.

\subsubsection{Group Fairness}
Group fairness is also often called statistical parity, statistical fairness, equal acceptance rate, demographic parity, or benchmarking. As the name implies, it requires that there is an equal probability for both individuals in the marginalized and non-marginalized groups to be assigned to the positive class \cite{dwork_fairness_2011, kusner_loftus_russell_silva}. Notationally, group fairness can be written as: 
$$P[\hat{Y} = 1 \;|\; S = 0] = P [\hat{Y} = 1 \;|\; S  = 1]$$

Heidari et. al \cite{heidari_moral_2019} derive the proof that group fairness falls into Rawls' definition of EOP and we recall their proof here. 
\begin{proposition}[Group Fairness as Rawls' EOP \cite{heidari_moral_2019}]
Consider the binary classification task where $Y, \hat{Y}=\{0,1\}$. Suppose $U = A - D$, $A = \hat{Y}$, and $D = Y = 1$ (i.e., the effort-based utility of all individuals is assumed to be the same). Then, the conditions of Rawls' EOP is equivalent to group fairness when $\hat{Y} = 1$.
\end{proposition}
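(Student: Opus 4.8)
The plan is to unwind the definition of Rawls' EOP for supervised learning under the stated substitutions and show it collapses, term by term, to the group fairness equality. First I would fix the setup: by hypothesis $D = Y = 1$ for every individual, so the conditioning event $Y = y$ in the definition is only non-vacuous for $y = 1$, and the utility random variable becomes $U = A - D = \hat{Y} - 1$. Since $\hat{Y} \in \{0,1\}$, the variable $U$ takes only the two values $0$ (when $\hat{Y} = 1$) and $-1$ (when $\hat{Y} = 0$). So the distribution $F^h(U \le u \mid S = s \cap Y = y)$ is a two-atom step function completely determined by the single number $P[\hat{Y} = 1 \mid S = s]$ (using $Y = 1$ always, so the conditioning on $Y = y$ is either the whole relevant population or empty).

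Next I would write out the Rawlsian EOP condition
\[
F^h(U \le u \mid S = 0 \cap Y = y) = F^h(U \le u \mid S = 1 \cap Y = y)
\]
and evaluate it at the relevant threshold. For $u$ in the range $[-1, 0)$ this equation reads $P[\hat{Y} = 0 \mid S = 0 \cap Y = 1] = P[\hat{Y} = 0 \mid S = 1 \cap Y = 1]$, and for $u \ge 0$ it is the trivial $1 = 1$; for $u < -1$ it is $0 = 0$. Because $D = Y = 1$ deterministically, conditioning on $Y = 1$ is the same as not conditioning on $Y$ at all, so the non-trivial equation is equivalent to $P[\hat{Y} = 1 \mid S = 0] = P[\hat{Y} = 1 \mid S = 1]$, which is exactly group fairness at $\hat{Y} = 1$. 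Conversely, if group fairness holds then all three cases of the threshold comparison hold, so the full distributional equality holds. This establishes the claimed equivalence.

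The only subtle point — and the step I would state most carefully — is the reduction of ``conditioning on $Y = y$'' to ``no conditioning'': one must note that the assumption $D = Y = 1$ is a degenerate (point-mass) assumption on $Y$, so the event $\{Y = 1\}$ has probability one and the event $\{Y = 0\}$ is null, making the $y$-indexed family of constraints in Rawls' EOP degenerate to a single constraint. The rest is the routine observation that equality of two-atom CDFs supported on $\{-1, 0\}$ is equivalent to equality of the mass at $0$, i.e. equality of $P[\hat{Y}=1\mid S=s]$. I do not anticipate any real obstacle; the content of the proposition is essentially that the elaborate utility/effort/circumstance machinery of Rawls' EOP trivializes to demographic parity once effort is identified with the prediction and circumstance with a constant.
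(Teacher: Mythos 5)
Your proof is correct and follows essentially the same route as the paper's: substitute $A=\hat{Y}$, $D=Y=1$ into the Rawlsian EOP condition and reduce the distributional equality to $P[\hat{Y}=1\mid S=0]=P[\hat{Y}=1\mid S=1]$. Your explicit case analysis of the CDF at the thresholds $u\in\{-1,0\}$ and your justification for dropping the conditioning on $Y=1$ (it is a point-mass assumption, so the event has probability one) simply make rigorous two steps that the paper's chain of displayed equivalences passes over without comment.
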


\begin{proof}
Recall that Rawls' EOP requires that $s\in S = \{0,1\} $, $y \in Y= \{0,1\}$, $u = a - d\in\{-1, 0\}$:
$$P[U\leq u \;|\; S = 0 \cap Y = y] = P [U \leq u \;|\; S = 1 \cap Y = y]$$
Replacing $U$ with $(A - D)$, $D$ with 1, and $A$ with $\hat{Y}$, the above is equivalent to:
$$P[A - D \leq u \;|\; S = 0 \cap Y = 1] = P[A - D \leq u \;|\; S = 1 \cap Y = 1]$$
$$P[\hat{Y} - 1 \leq u \;|\; S = 0 \cap Y = 1] = P[\hat{Y} - 1 \leq u \;|\; S = 1 \cap Y = 1]$$
$$P[\hat{Y} \leq u + 1\;|\; S = 0] = P[\hat{Y} \leq u + 1 \;|\; S = 1]$$
$$P[\hat{Y} = \hat{y}\;|\; S = 0] = P[\hat{Y} = \hat{y} \;|\; S = 1]$$
because of the facts that $ u = \hat{y} - y$ and $y = 1$ produce the result $\hat{y} = u + 1$. This is equal to the definition for statistical parity when $\hat{Y} = 1$, therefore, the conditions of Rawls' EOP is equivalent to statistical parity.
\end{proof}


One important derivative of group fairness is disparate impact, which we discuss below.

\paragraph{ $\square$ Disparate Impact} It is no coincidence that the fairness metric of disparate impact shares a name with the legal term of disparate impact. This fairness metric, discussed extensively in \cite{feldman_certifying_2015, barocas_big_2016}, was designed to be the mathematical counterpart to the legal notion. It is defined as: 

$$\frac{P[\hat{Y} = 1 \;|\; S = 0]}{P[\hat{Y} = 1 \;|\; S = 1]} \geq 1 - \epsilon$$

where $\epsilon$ is the allowed slack of the metric and is usually set to $0.2$ to achieve the $80\%$ rule of disparate impact law. This equation says that the proportion of positive predictions for both the marginalized and non-marginalized groups must be similar (around threshold $1 - \epsilon$). 

Like group fairness, it also aligns with Rawls' EOP and the statistical notion of independence. To see how it aligns with Rawls' EOP, consider the last line from the proof above (when substituting $\hat{Y}$ with 1): 
$$P[\hat{Y} = 1\;|\; S = 0] = P[\hat{Y} = 1 \;|\; S = 1]$$
Dividing both sides by $P[\hat{Y} = 1 \;|\; S = 1]$ we achieve
\begin{align*}
    \frac{P[\hat{Y} = 1 \;|\; S = 0]}{P[\hat{Y} = 1 \;|\; S = 1]} & = 1\\
    \frac{P[\hat{Y} = 1 \;|\; S = 0]}{P[\hat{Y} = 1 \;|\; S = 1]} & \geq 1 - \epsilon
\end{align*}

which is the same equation for disparate impact. Additionally, it matches exactly with the relaxed notion of independence mentioned in Section 2.1.1.


\subsubsection{Conditional Statistical Parity}
Conditional statistical parity, or conditional group fairness, is an extension of group fairness which allows a certain set of legitimate attributes to be factored into the outcome \cite{corbett-davies_algorithmic_2017}. Factors are considered ``legitimate'' if they can be justified by ethics, by the law, or by a combination of both. This notion of fairness was first defined by Kamiran et al. in 2013 who wanted to quantify explainable and illegal discrimination in automated decision making where one or more attributes could contribute to the explanation \cite{Kamiran:QuantifyingExplainableDiscrimination2013}. Conditional statistical parity is satisfied if both marginalized and non-marginalized groups have an equal probability of being assigned to the positive predicted class when there is a set of legitimate factors that are being controlled for. Notationally, it can be written as: 
$$P[\hat{Y} = 1 \;|\; L_1 = a \cap L_2 = b \cap S = 0] = P[\hat{Y} = 1 \;|\; L_1 = a \cap L_2 = b \cap S = 1]$$
where $L_1, L_2$ are legitimate features that are being conditioned on. For example, if the task was to predict if a certain person makes over \$50,000 a year\footnote{This is the proposed task of a famous toy dataset in fair machine learning called the Adult Income dataset \cite{Dua-2019}.}, then $L_1$ could represent work status and $L_2$ could be the individuals relationship status. Another, simplified way to write this can be seen as: 
$$P[\hat{Y} = 1 \;|\; L = \ell \cap S = 1] = P[\hat{Y} = 1 \;|\; L = \ell \cap S = 0]$$
where $\ell\in L$ is the set of legitimate features being conditioned on.

Like group fairness, conditional statistical parity also falls under the definition of Rawls' EOP and adheres to a relaxed notion of statistical independence. The proof for conditional statistical parity falling into Rawls' EOP can be seen when we let $D$ be equal to $L$. In other words, instead of having the effort-based utility be equal to the true label of the individual, let the effort-based utility be equal to the set of legitimate factors that affect the result. Then, the proof reduces to the proof shown for group fairness. This substitution is allowed since the legitimate factors provide reasoning for the actual label $Y$.



Furthermore, conditional statistical parity has a theoretical grounding of Simpson's paradox as it tries to incorporate extra conditioning information beyond the original class label. Simpson's paradox says that if a correlation occurs in several different groups, it may disappear, or even reverse, when the groups are aggregated \cite{blitzstein2019}. This event can be seen in Fig. \ref{fig:simpsons}.
Put mathematically, Simpson's paradox can be written as:
$$P[A  \;|\; B \cap C] < P [A \;|\; B^c \cap C] \textrm{ and } P[A  \;|\; B \cap C^c] < P [A \;|\; B^c \cap C^c]$$
$$\textrm{but}$$
$$P[A\;|\;B] > P[A\;|\;B^c]$$

An analysis that does not consider all of the relevant statistics might suggest that unfairness and discrimination is at play, when in reality, the situation may be morally and legally acceptable if all of the information was known.

\begin{figure}[t!]
    \centering
    \includegraphics[scale=.4]{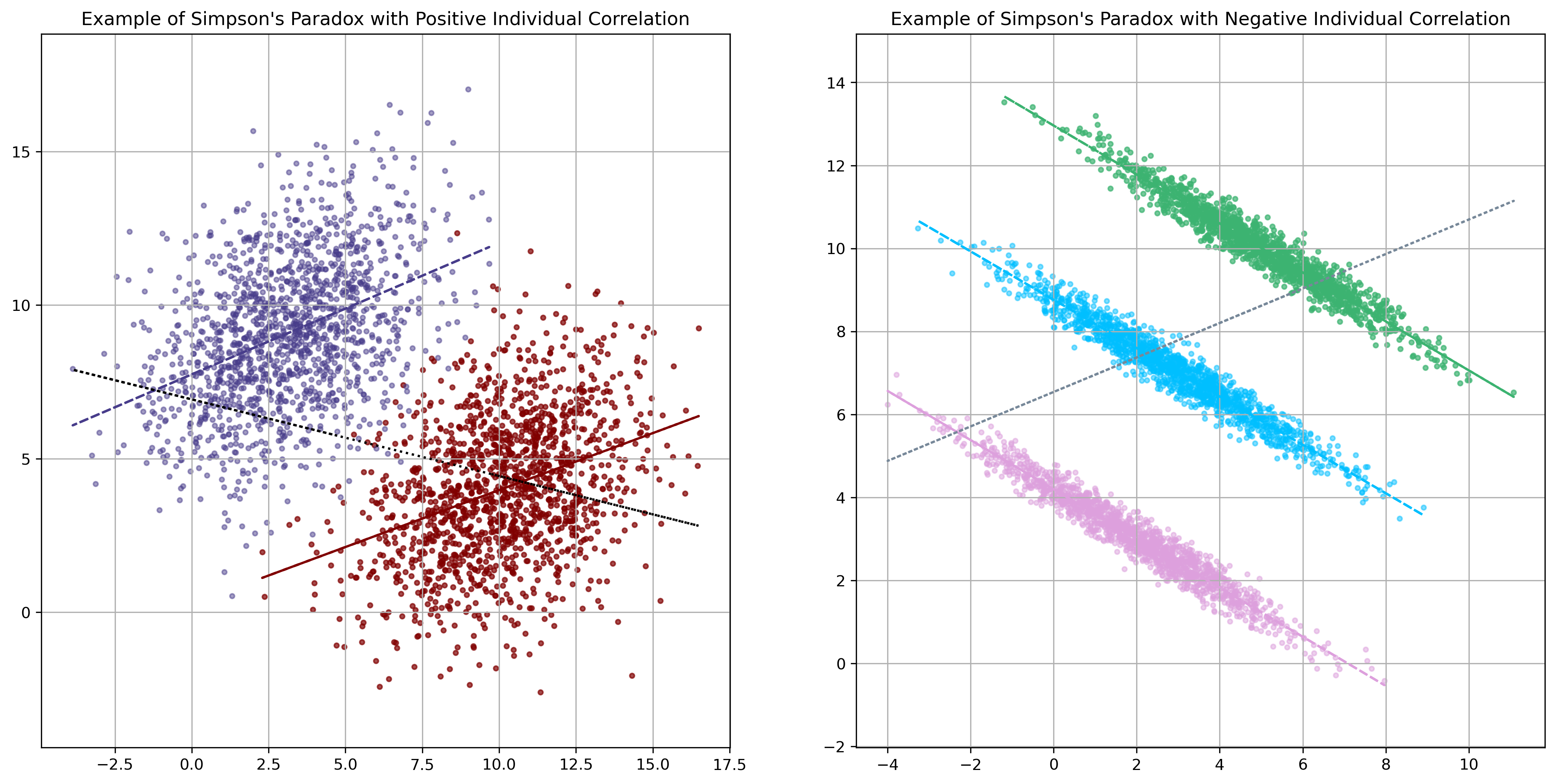}
    \caption{Two different examples of Simpson's paradox. In the image on the left, while both the blue and red groups are positively correlated among themselves, when taken as a whole, there is a negative correlation. The opposite can be seen in the second image where the groups are negatively correlated when taken alone, but when aggregated, a positive correlation is produced.}
    \label{fig:simpsons}
\end{figure}

\subsection{Predicted and Actual Outcome}
The predicted and actual outcome class of metrics goes one step beyond the predicted outcome class since it uses both the model's predictions as well as the true labels of the data. This class of fairness metrics includes: predictive parity, false positive error rate balance, false negative error rate balance, equalized odds, conditional use accuracy, overall accuracy equality, and treatment equality.

\subsubsection{Predictive Parity}
Predictive parity, otherwise known by the name outcome test, is a fairness metric that requires the positive predictive values to be similar across both marginalized and non-marginalized groups \cite{chouldechova2016fair}. Mathematically, it can be seen in two ways:
$$P[Y=1 \;|\; \hat{Y} = 1\cap S = 0] = P[Y=1\;|\;\hat{Y} = 1 \cap S = 1]$$
$$P[Y=0 \;|\; \hat{Y} = 1\cap S = 0] = P[Y=0\;|\;\hat{Y} = 1 \cap S = 1]$$
since if a classifier has equal positive predictive values for both groups, it will also have equal false discovery rates. Since predictive parity is simply conditional use accuracy when $\hat{Y} = 1$, it falls into the same philosophical category as conditional use accuracy, which is luck-egalitarian EOP. This proof can be seen in Section 5.2.5. Further, \cite{barocas-hardt-narayanan} states that predictive parity aligns with the relaxed version of sufficiency.

\subsubsection{False Positive Error Rate Balance}
False positive error rate balance, otherwise known as predictive equality, requires that false positive rates are similar across different groups \cite{chouldechova2016fair}. It can be seen mathematically through the following two equations 

$$P[\hat{Y}=1 \;|\; Y = 0 \cap S = 0] = P[\hat{Y} = 1 \;|\; Y = 0 \cap S = 1]$$
$$P[\hat{Y}=0 \;|\; Y = 0 \cap S = 0] = P[\hat{Y} = 0 \;|\; Y = 0 \cap S = 1]$$
It can be written in two different forms because if a classifier has equal false positive rates for both groups, it will also have equal true negative rates. As opposed to predictive parity, false positive error rate balance falls into Rawls' EOP for its philosophical classification. This is because it is a constrained version of equalized odds, which is classified as Rawls' EOP as seen in Section 5.2.4. Additionally, false positive error rate balance aligns with the relaxed version of separation \cite{barocas-hardt-narayanan}.

\subsubsection{False Negative Error Rate Balance}
False negative error rate balance, also called equal opportunity, is the direct opposite of the above fairness metric of false positive error rate balance in that it requires false negative rates to be similar across different groups \cite{chouldechova2016fair}. This metric can also be framed two different ways: 
$$P[\hat{Y}=0 \;|\; Y = 1 \cap S = 0] = P[\hat{Y} = 0 \;|\; Y = 1 \cap S = 1]$$
$$P[\hat{Y}=1 \;|\; Y = 1 \cap S = 0] = P[\hat{Y} = 1 \;|\; Y = 1 \cap S = 1]$$
since a classifier that has equal false negative rates across the two groups will also have equal true positive rates. Similar to false positive error rate balance, false negative error rate balance can be classified into Rawls' EOP as well since it is also a constrained version of equalized odds. Additionally, like false positive error rate balance, false negative error rate balance aligns with the relaxed version of separation \cite{barocas-hardt-narayanan}.

\subsubsection{Equalized Odds}
The fairness metric of equalized odds is also known as conditional procedure accuracy equality and disparate mistreatment. It can be seen as the combination of false positive error rate balance and false negative error rate balance since it requires that true and false positive rates are similar across different groups \cite{moritz_google_price_srebro}. 
$$P[\hat{Y} = 1 \;|\; Y = y \cap S = 0] = P[\hat{Y} = 1 \;|\; Y = y \cap S = 1] \;\; \textrm{ for } \;\; y\in\{0,1\}$$

As mentioned in the above two metrics of false negative/positive error rate balance, equalized odds aligns with Rawls' EOP. \cite{heidari_moral_2019} provides a proof for this classification, which we recall here: 
\begin{proposition}[Equalized Odds as Rawls' EOP \cite{heidari_moral_2019}]
Consider the binary classification task where $Y, \hat{Y} = \{0,1\} $. Suppose $U = A - D$, $A = \hat{Y}$ (i.e., the actual utility is equal to the predicted label) and $D = Y$ (i.e., effort-based utility of an individual is assumed to be the same as their true label). Then the conditions of Rawls' EOP is equivalent to those of equalized odds. 
\begin{proof}
Recall that Rawls' EOP requires that for $s \in S = \{0,1\}$, $y \in Y = \{0,1\} $, and $ u \in \{-1, 1\}$:
 $$P[U \leq u\;|\;Y = y \cap S = 0] = P[U \leq u \;|\; Y = y \cap S = 1]$$
 By replacing $U$ with $A - D$, $D$ with $Y$, and $A$ with $\hat{Y}$, the above is equivalent to: 
 $$P[A - D \leq u\;|\;Y = y \cap S = 0] = P[A - D \leq u \;|\; Y = y \cap S = 1]$$
 $$P[\hat{Y} - Y \leq u\;|\;Y = y \cap S = 0] = P[\hat{Y} - Y \leq u \;|\; Y = y \cap S = 1]$$
  $$P[\hat{Y} \leq u + y\;|\;Y = y \cap S = 0] = P[\hat{Y} \leq u + y\;|\; Y = y \cap S = 1]$$
  $$P[\hat{Y} =\hat{y} \;|\;Y = y \cap S = 0] = P[\hat{Y} = \hat{y} \;|\; Y = y \cap S = 1]$$
  since $u = a - d = \hat{y} - y$ produces $\hat{y} = u + y$. Then, the last line is identical to the conditions of equalized odds for binary classification when $\hat{Y} = 1$.
\end{proof}

\end{proposition}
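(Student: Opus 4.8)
The plan is to mirror the substitution argument used in Proposition (Group Fairness as Rawls' EOP), the only difference being that here $D = Y$ is not held constant, so the conditioning on $Y = y$ must be carried through the whole computation. First I would write out the defining condition of Rawls' EOP in its relaxed binary form: for every $s \in S = \{0,1\}$ and every $y \in Y = \{0,1\}$, the conditional distribution $F^h(U \leq u \mid S = s \cap Y = y)$ does not depend on $s$. Substituting $U = A - D$, then $A = \hat{Y}$ and $D = Y$, turns the event $U \leq u$ into $\hat{Y} - Y \leq u$; and since we are already conditioning on $Y = y$, this is the event $\hat{Y} \leq u + y$. Hence the Rawls' EOP condition becomes $P[\hat{Y} \leq u + y \mid Y = y \cap S = 0] = P[\hat{Y} \leq u + y \mid Y = y \cap S = 1]$ for all admissible $u$ and all $y$.

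Next I would pin down the range of $u$ and translate the CDF statement into a statement about point masses. Since $a = \hat{y} \in \{0,1\}$ and $d = y \in \{0,1\}$, the utility $u = a - d$ ranges over $\{-1, 0, 1\}$, and for fixed $y$ the threshold $u + y = \hat{y}$ sweeps out exactly the support $\{0,1\}$ of $\hat{Y}$ (the remaining degenerate thresholds, below $0$ or at/above $1$, contribute only the vacuous identities $0 = 0$ and $1 = 1$). Requiring the conditional CDFs of $\hat{Y}$ given $Y = y$ to agree across $S$ at all these thresholds is therefore equivalent to requiring the conditional atoms to agree, i.e. $P[\hat{Y} = \hat{y} \mid Y = y \cap S = 0] = P[\hat{Y} = \hat{y} \mid Y = y \cap S = 1]$ for all $\hat{y}, y \in \{0,1\}$. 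Specializing to $\hat{Y} = 1$ — the case $\hat{Y} = 0$ following by complementation — is precisely equalized odds, $P[\hat{Y} = 1 \mid Y = y \cap S = 0] = P[\hat{Y} = 1 \mid Y = y \cap S = 1]$ for $y \in \{0,1\}$, which closes the equivalence; the converse direction is the same chain read backwards, using that $U = \hat{Y} - Y$ is a deterministic shift of $\hat{Y}$ once $Y = y$ is fixed.

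The rearrangement of $\hat{Y} - Y \leq u$ under the conditioning event is immediate, so the only step that needs genuine care — and where I'd expect a hurried proof to slip — is the passage between ``equality of conditional CDFs at every threshold $u$'' and ``equality of conditional point masses'': one must check that the grid $u \in \{-1,0,1\}$ actually separates the two atoms of the binary variable $\hat{Y}$ for each fixed $y$, and that the degenerate thresholds add nothing. I would also make explicit the standing positivity assumption that each conditioning event $\{S = s \cap Y = y\}$ has nonzero probability, so that every conditional quantity is well defined; this is what lets the argument run as a biconditional rather than only one implication.
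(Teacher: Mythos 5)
Your proposal is correct and follows essentially the same substitution chain as the paper's proof: replace $U$ with $\hat{Y}-Y$, use the conditioning on $Y=y$ to rewrite the event as $\hat{Y}\leq u+y$, and identify the resulting equality of conditional distributions across $S$ with equalized odds. The only differences are matters of added care on your part --- you correctly note that $u$ ranges over $\{-1,0,1\}$ (the paper writes $\{-1,1\}$), and you spell out the CDF-to-point-mass step and the positivity of the conditioning events, which the paper leaves implicit.
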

Further, since equalized odds is a combination of false negative error rate balance and false positive error rate balance, it also aligns with separation. But, instead of being the relaxed version, equalized odds aligns exactly with separation. 

\subsubsection{Conditional Use Accuracy}
Conditional use accuracy, also termed as predictive value parity, requires that positive and negative predicted values are similar across different groups \cite{fairness_in_criminal_justice}. Statistically, it aligns exactly with the requirement for sufficiency \cite{barocas-hardt-narayanan}. Mathematically, it can be written as follows:
$$P[Y = y \;|\; \hat{Y} = y \cap S = 0] = P[Y = y \;|\; \hat{Y} = y \cap S = 1] \;\; \textrm{ for } \;\; y\in\{0,1\}$$
In \cite{heidari_moral_2019}, they provide a proof that conditional use accuracy falls into the Luck-Egalitarian EOP criterion and we recall their proof here: 

\begin{proposition}[Conditional Use Accuracy as Luck-Egalitarian EOP \cite{heidari_moral_2019}]
Consider the binary classification task where $y \in Y = \{0,1\} $. Suppose that $U = A - D$, $A = Y$, and $D = \hat{Y}$ (i.e., the effort-based utility of an individual under model a $h$ is assumed to be the same as their predicted label). Then the conditions of Luck-Egalitarian EOP are equivalent to those of conditional use accuracy (otherwise known as predictive value parity). 
\begin{proof}
Recall that Luck-Egalitarian EOP requires that for $s\in S = \{0,1\}$, $\hat{y}\in\hat{Y}=\{0,1\}$, and $u \in \{-1, 1\}$:
$$P[U \leq u \;|\; \hat{Y}=\hat{y} \cap S = 0 ] = P[U \leq u \;|\; \hat{Y}=\hat{y} \cap S = 1 ]$$

Replacing $U$ with $A - D$, $D$ with $\hat{Y}$, and $A$ with $Y$, we obtain the following:
$$P[A - D \leq u \;|\; \hat{Y} = \hat{y} \cap S = 0 ] = P[A - D \leq u \;|\; \hat{Y}=\hat{y} \cap S = 1 ]$$
$$P[Y - \hat{Y} \leq u \;|\; \hat{Y} = \hat{y} \cap S = 0 ] = P[Y - \hat{Y} \leq u \;|\; \hat{Y}=\hat{y} \cap S = 1 ]$$
$$P[Y \leq u + \hat{y}\;|\; \hat{Y} = \hat{y} \cap S = 0 ] = P[Y \leq u + \hat{y}\;|\; \hat{Y}=\hat{y} \cap S = 1 ]$$
$$P[Y = y \;|\; \hat{Y} = \hat{y} \cap S = 0 ] = P[Y = y \;|\; \hat{Y}=\hat{y} \cap S = 1 ]$$
since $u = a - d = y - \hat{y}$ produces the result that $y = u + \hat{y}$. The last line is then equal to the statement for conditional use accuracy when $y = \hat{y} = \{0,1\}$.
\end{proof}
\end{proposition}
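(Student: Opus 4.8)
The plan is to follow the same mechanical recipe used in the proofs of the Group Fairness and Equalized Odds propositions above: begin with the defining condition of Luck-Egalitarian EOP, substitute the stipulated utility decomposition $U = A - D$ with $A = Y$ and $D = \hat{Y}$, and simplify the resulting cumulative-distribution equality until it collapses to predictive value parity. First I would write down the Luck-Egalitarian EOP condition in the form
$$F^h(U \leq u \mid S = 0 \cap \hat{Y} = \hat{y}) = F^h(U \leq u \mid S = 1 \cap \hat{Y} = \hat{y})$$
required to hold for every $\hat{y} \in \{0,1\}$ and every value $u$ in the support of $U$.

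Next I would substitute $U = Y - \hat{Y}$. Because the conditioning event already fixes $\hat{Y} = \hat{y}$, inside the probability the term $\hat{Y}$ may be replaced by the constant $\hat{y}$, turning the event $\{Y - \hat{Y} \leq u\}$ into $\{Y \leq u + \hat{y}\}$. Reparametrizing by $y := u + \hat{y}$ (equivalently $u = y - \hat{y}$, so $y$ ranges over $\{0,1\}$ as $u$ ranges over the support of $U$ for each fixed $\hat{y}$), the condition becomes equality of the conditional CDFs of the binary variable $Y$ across $S = 0$ and $S = 1$ at every threshold $y$. Since $Y$ takes only the values $0$ and $1$, equality of these CDFs at all thresholds is equivalent to equality of the point masses $P[Y = y \mid \hat{Y} = \hat{y} \cap S = 0] = P[Y = y \mid \hat{Y} = \hat{y} \cap S = 1]$. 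Finally I would note that, because conditioning on $\hat{Y} = \hat{y}$ forces $P[Y=1\mid\cdots] + P[Y=0\mid\cdots] = 1$, instantiating $\hat{y} = 1$ yields parity of positive predictive values and $\hat{y} = 0$ yields parity of negative predictive values, which together are precisely conditional use accuracy (predictive value parity) for $y \in \{0,1\}$; the converse direction runs the same chain of equivalences backwards.

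The only non-bookkeeping step — and hence the main thing to get right — is the passage from equality of conditional cumulative distribution functions to equality of conditional point masses, which rests on $Y$ being a binary random variable, together with keeping careful track of which index ($u$, $y$, or $\hat{y}$) is being quantified over at each line so that the final statement lands on the conditional-use-accuracy form $P[Y = y \mid \hat{Y} = y \cap S = 0] = P[Y = y \mid \hat{Y} = y \cap S = 1]$ rather than an off-diagonal variant. As in the two preceding propositions, every individual manipulation is elementary; the content is entirely in choosing the utility assignment $A = Y$, $D = \hat{Y}$ that makes the reduction go through.
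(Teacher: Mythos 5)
Your proposal is correct and follows essentially the same route as the paper's proof: substitute $U = Y - \hat{Y}$ into the Luck-Egalitarian EOP condition, use the conditioning on $\hat{Y}=\hat{y}$ to rewrite the event as $\{Y \leq u + \hat{y}\}$, and reduce to $P[Y = y \mid \hat{Y} = \hat{y} \cap S = 0] = P[Y = y \mid \hat{Y} = \hat{y} \cap S = 1]$. Your extra care in justifying the passage from equality of conditional CDFs to equality of point masses for binary $Y$ only makes explicit a step the paper leaves terse, so there is nothing to change.
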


\subsubsection{Overall Accuracy Equality}
As the name implies, overall accuracy equality requires similar prediction accuracy across different groups. In this case, we are assuming that obtaining a true negative is as desirable as obtaining a true positive \cite{fairness_in_criminal_justice}. According to \cite{barocas-hardt-narayanan}, it matches exactly with the statistical measure of independence. Mathematically, it can be written as:
$$P[\hat{Y}=y \;|\; Y = y \cap S = 0] = P[\hat{Y} = y\;|\; Y = y \cap S = 1] \;\; \textrm{ for } \;\; y \in \{0,1\}$$

Overall accuracy equality is the third fairness metric that Heidari et al. proves belongs to the Rawls' EOP category of fairness metrics \cite{heidari_moral_2019}. We recall their proof here: 
\begin{proposition}[Overall Accuracy Equality as Rawls' EOP \cite{heidari_moral_2019}]
Consider the binary classification task where $Y, \hat{Y} = \{0,1\}$. Suppose $U = A - D$ where $A = (\hat{Y} - Y) ^ 2$ and $D = 0$ (i.e., effort-based utility of all individuals are assumed to be the same and equal to 0). Then the conditions of Rawl's EOP is equivalent to overall accuracy equality. 
\begin{proof}
Recall that Rawls' EOP requires that for $s \in S = \{0,1\}$, $y \in Y= \{0,1\} $, and $u \in \{0,1\}$:
 $$P[U \leq u\;|\; Y = y \cap S = 0] = P[U \leq u \;|\; Y = y \cap S = 1]$$
 By replacing $U$ with $A - D$, $D$ with $0$, and $A$ with $(\hat{Y} - Y) ^ 2$, the above is equivalent to: 
 $$P[A - D \leq u \;|\; Y = y \cap S = 0] = P[A - D \leq u \;|\; Y = y \cap S = 1]$$
  $$P[(\hat{Y} - Y)^2 \leq u \;|\; Y = y \cap S = 0] = P[(\hat{Y} - Y)^2 \leq u \;|\; Y = y \cap S = 1]$$
  $$P[\hat{Y}^2-2\hat{Y}Y + Y^2 \leq u \;|\; Y = y \cap S = 0] = P[\hat{Y}^2-2\hat{Y}Y + Y^2 \leq u \;|\; Y = y \cap S = 1] $$
   $$P[\hat{Y}^2 \leq u + 2\hat{y}y - y^2\;|\; Y = y \cap S = 0] = P[\hat{Y}^2 \leq u +2\hat{y}y - y^2\;|\; Y = y \cap S = 1] $$
   $$P[\hat{Y}^2 = \hat{y}^2\;|\; Y = y \cap S = 0] = P[\hat{Y}^2  = \hat{y}^2\;|\; Y = y \cap S = 1] $$
   $$P[\hat{Y} = \hat{y}\;|\; Y = y \cap S = 0] = P[\hat{Y}  = \hat{y}\;|\; Y = y \cap S = 1] $$
   
   since $u = \hat{y}^2 -2\hat{y}y+y^2$ and $\hat{y}^2 = u + 2\hat{y}y - y^2$. This is equal to the statement for overall accuracy equality when $Y = \hat{Y} = \{0,1\}$.
\end{proof}
\end{proposition}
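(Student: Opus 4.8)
The plan is to follow the same template Heidari et al. use for the three preceding propositions (Group Fairness, Equalized Odds, and Conditional Use Accuracy): start from the defining identity of Rawls' EOP, substitute the stipulated choices $A = (\hat Y - Y)^2$, $D = 0$, $U = A - D$, and then push the resulting probabilistic statement through elementary algebra until it becomes the definition of overall accuracy equality.

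First I would write Rawls' EOP in the relaxed binary form, $P[U \leq u \mid Y = y \cap S = 0] = P[U \leq u \mid Y = y \cap S = 1]$, quantified over $s, y \in \{0,1\}$ and the admissible slack values of $u$. Since $\hat Y, Y \in \{0,1\}$, the quantity $U = (\hat Y - Y)^2$ takes values in $\{0,1\}$, which pins down $u \in \{0,1\}$ as claimed. Substituting and expanding the square gives $P[\hat Y^2 - 2\hat Y Y + Y^2 \leq u \mid Y = y \cap S = s]$.

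Next I would use the conditioning event $Y = y$ to replace $Y$ by the constant $y$ inside the event, turning this into $P[\hat Y^2 \leq u + 2\hat y y - y^2 \mid Y = y \cap S = s]$ along the lines of the earlier proofs. The crucial observation is that $\hat Y$ is binary, so $\hat Y^2 = \hat Y$, and on the two-point grid the informative slack value collapses the cumulative event to the pointwise one: $P[(\hat Y - Y)^2 \leq 0 \mid Y = y \cap S = s]$ is simply the probability of a correct prediction, $P[\hat Y = y \mid Y = y \cap S = s]$. Equating this across the two groups yields $P[\hat Y = \hat y \mid Y = y \cap S = 0] = P[\hat Y = \hat y \mid Y = y \cap S = 1]$ with $\hat y = y$, which is exactly overall accuracy equality for $y \in \{0,1\}$.

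I expect the main obstacle to be the bookkeeping in this inequality-to-equality transition — verifying that expanding $(\hat Y - Y)^2$, substituting $Y = y$ under the conditioning, and restricting $u$ to $\{0,1\}$ genuinely make the cumulative statement $P[(\hat Y - Y)^2 \leq u \mid \cdot]$ equivalent to the pointwise statement $P[\hat Y = \hat y \mid \cdot]$ rather than a strictly weaker one. The remaining steps are routine algebra mirroring the Group Fairness and Equalized Odds proofs above.
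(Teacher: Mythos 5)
Your proposal is correct and follows essentially the same route as the paper's proof: substitute $A=(\hat Y - Y)^2$, $D=0$ into the Rawls' EOP condition, expand the square, absorb the conditioning $Y=y$, and use the binarity of $\hat Y$ to turn the cumulative statement into the pointwise accuracy condition $P[\hat Y = \hat y \mid Y=y \cap S=s]$. Your extra care in noting that only the slack value $u=0$ is informative (with $u=1$ trivial) is a slightly more explicit justification of the inequality-to-equality collapse than the paper gives, but it is the same argument.
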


\subsubsection{Treatment Equality}
Treatment equality analyzes fairness by looking at how many errors were obtained rather than through the lens of accuracy. It requires an equal ratio of false negative and false positive values for all groups \cite{fairness_in_criminal_justice}. Further, it agrees exactly with the statistical measure of separation \cite{barocas-hardt-narayanan}. 

$$\frac{FN_{S = 0}}{FP_{S = 0}} = \frac{FN_{S = 1}}{FP_{S = 1}}$$

Treatment equality can be considered as the ratio of false positive error rate balance and false negative error rate balance. Since both of these metrics fall into the Rawls' EOP category, treatment equality does as well. 

\subsection{Predicted Probability and Actual Outcome}
The predicted probability and actual outcome category of fairness metrics is similar to the above category of metrics that use the predicted and actual outcomes. But, instead of using the predictions themselves, this category uses the probability of being predicted to a certain class. This category of metrics includes: test fairness, well calibration, balance for the positive class, and balance for the negative class. The first two metrics fall in line with the statistical measure of sufficiency while the last two align with the relaxed notion of separation.

\subsubsection{Test Fairness}
Test fairness is satisfied if, for any predicted probability score $p \in \mathcal{P}$, subjects in both the marginalized and non-marginalized groups have equal probability of actually belonging to the positive class. Test fairness has also been referenced by the terms calibration, equal calibration, and matching conditional frequencies \cite{chouldechova2016fair}. Mathematically, it can be written as follows:

$$P[Y = 1 \;|\; \mathcal{P} = p \cap S = 0] = P[Y = 1 \;|\; \mathcal{P} = p \cap S = 1]$$

Test fairness falls under the luck-egalitarian EOP category. To see why this is the case, we recall the last line of the proof for conditional use accuracy:
$$P[Y = y \;|\; S = 0 \cap \hat{Y} = \hat{y}] = P[Y = y \;|\; S = 1 \cap \hat{Y}=\hat{y}]$$
By letting $Y = 1$ and replacing the predicted class $\hat{y}\in\hat{Y}$ with the probability to be assigned to a class $p \in \mathcal{P}$, we receive the following: 
$$P[Y = 1 \;|\; S = 0 \cap \mathcal{P} = p] = P[Y = 1 \;|\; S = 1 \cap \mathcal{P}=p]$$
which is the same as the condition for test fairness. 

\subsubsection{Well Calibration}
Well calibration is very similar to the metric of test fairness, but it goes one step further and it requires that for any predicted probability score $p \in \mathcal{P}$, not only should the majority and minority classes have equal probability of belonging to the positive class, but this probability should be $p$ \cite{kleinberg2016inherent}.
$$P[Y=1\;|\;\mathcal{P}=p \cap S = 0] = P[Y=1\;|\;\mathcal{P}=p \cap S = 1] = p$$
Since well calibration is an extension of test fairness, it in turn also falls under the luck-egalitarian EOP category. 

\subsubsection{Balance for the Positive Class}
As the name suggests, the balance for the positive class metric requires that individuals who experience a positive outcome, regardless of which group they belong to, should have an equal mean predicted probability of being in the positive class \cite{kleinberg2016inherent}. It can be seen as being similar to the metric of equal opportunity, which says that a classifier should give equivalent treatment to all groups. 
$$\mathbb{E}[\mathcal{P} \;|\; Y = 1 \cap S = 0] =\mathbb{E}[\mathcal{P} \;|\; Y = 1 \cap S = 1]$$
Like false positive error rate balance, the balance for the positive class metric can be seen as a derivative of the equalized odds metric when $Y = 1$. Additionally, instead of taking into account the predicted label $\hat{y} \in \hat{Y}$, it concerns itself with the predicted probability $\mathcal{P}$. Since equalized odds falls into Rawls' EOP category of metrics, the balance for the positive class metric does as well.

\subsubsection{Balance for the Negative Class}
The metric of balance for the negative class is opposite of the balance for the positive class metric. Instead of requiring balance in the predictive mean of the positive class, it requires balance in the predicted mean of the negative class \cite{kleinberg2016inherent}. It is similar to the measure of false positive error rate balance. 
$$\mathbb{E}[\mathcal{P} \;|\; Y = 0 \cap S = 0] =\mathbb{E}[\mathcal{P} \;|\; Y = 0 \cap S = 1]$$
Same as the argument for balance for the positive class, the balance for the negative class metric is a derivative of equalized odds when $Y = 0$ and we approximate $\hat{y} \in \hat{Y}$ with probability score $\mathcal{P}$. Therefore, the balance for the negative class metric also falls under Rawls' EOP category.

\begin{table}[t!]
\centering
\label{tab:big-table}
\setlength{\extrarowheight}{0pt}
\addtolength{\extrarowheight}{\aboverulesep}
\addtolength{\extrarowheight}{\belowrulesep}
\setlength{\aboverulesep}{0pt}
\setlength{\belowrulesep}{0pt}
\caption{Definitions and classifications for popular fair machine learning classification metrics. IND = Independent, SUF = Sufficiency, SEP = Separation, R = Relaxed, E = Equivalent, LE = Luck-Egalitarian}
\resizebox{\linewidth}{!}{%
\begin{tabular}{llcccc} 
\toprule
\rowcolor[rgb]{0.592,0.792,0.792} \multicolumn{1}{c}{\textcolor[rgb]{0,0.502,0.502}{\textbf{Metric}}} & \multicolumn{1}{c}{\textcolor[rgb]{0,0.502,0.502}{\textbf{Definition}}}                                                                                        & \textcolor[rgb]{0,0.502,0.502}{\textbf{Formula}}                                                                                                                                                           & \textcolor[rgb]{0,0.502,0.502}{\textbf{Closest}} & \textcolor[rgb]{0,0.502,0.502}{\textbf{R/E}} & \textbf{\textcolor[rgb]{0,0.502,0.502}{Rawls' / LE}}  \\ 
\hline
                                                                                                      &                                                                                                                                                                &                                                                                                                                                                                                            &                                                  &                                              &                                                      \\ 
\hline
\multicolumn{6}{l}{{\cellcolor[rgb]{0.792,0.894,0.894}}\textcolor[rgb]{0,0.502,0.502}{\textbf{1. Predicted Outcome}}}                                                                                                                                                                                                                                                                                                                                                                                                                                                                                                                        \\ 
\hline
\begin{tabular}[c]{@{}l@{}}Group Fairness \\(Statistical Parity)\end{tabular}                         & \begin{tabular}[c]{@{}l@{}}All groups have equal probability of being\\assigned to the positive class.\end{tabular}                                            & $P[\hat{Y}=1 \;|\; S=0] = P[\hat{Y}=1 \;|\; S=1]$                                                                                                                                                          & IND                                              & E                                            & Rawls'                                                \\ 
\hline
-- Disparate Impact                                                                                   & \begin{tabular}[c]{@{}l@{}}Proportion of positive predictions \\of all groups must be similar.\end{tabular}                                                    & $\frac{P[\hat{Y}=1\;|\;S=0]}{P[\hat{Y}=1 \;|\; S = 1]} \geq 1 - \epsilon$                                                                                                                                  & IND                                              & R                                            & Rawls'                                                \\ 
\hline
Conditional Statistical Parity                                                                        & \begin{tabular}[c]{@{}l@{}}Requires statistics for all groups to \\be equal, allowing for a set of legitimate\\factors $L = \ell$.~\end{tabular}               & $P[\hat{Y}=1 \;|\; L = \ell \cap S = 0] = P[\hat{Y}=1 \;|\; L = \ell \cap S = 1]$                                                                                                                          & IND                                              & R                                            & Rawls'                                                \\ 
\hline
                                                                                                      &                                                                                                                                                                &                                                                                                                                                                                                            &                                                  &                                              &                                                      \\ 
\hline
\multicolumn{6}{l}{{\cellcolor[rgb]{0.792,0.894,0.894}}\textcolor[rgb]{0,0.502,0.502}{\textbf{2. Predicted and Actual Outcome}}}                                                                                                                                                                                                                                                                                                                                                                                                                                                                                                             \\ 
\hline
Predictive Parity                                                                                     & \begin{tabular}[c]{@{}l@{}}Similar positive~predictive values\\(or FDR) across groups.\end{tabular}                                                            & \begin{tabular}[c]{@{}c@{}}PPV: $P[Y = 1 \;|\; \hat{Y}= 1 \cap S = 0] = P[Y = 1 \;|\; \hat{Y}=1 \cap S =1]$\\FDR:$P[Y = 0 \;|\; \hat{Y}= 1 \cap S = 0] = P[Y = 0 \;|\;\hat{Y}=1 \cap S =1]$\end{tabular}   & SUF                                              & R                                            & LE                                                   \\ 
\hline
False Positive Error Rate Balance                                                                     & \begin{tabular}[c]{@{}l@{}}Similar false positive rates \\(or TNR) across groups.\end{tabular}                                                                 & \begin{tabular}[c]{@{}c@{}}FPR:$P[\hat{Y} = 1 \;|\; Y= 0 \cap S = 0] = P[\hat{Y} = 1 \;|\; Y =0 \cap S =1]$\\TNR:$P[\hat{Y} = 0 \;|\; Y= 0 \cap S = 0] = P[\hat{Y} = 0 \;|\; Y =0 \cap S =1]$\end{tabular} & SEP                                              & R                                            & Rawls'                                                \\ 
\hline
False Negative Error Rate Balance                                                                     & \begin{tabular}[c]{@{}l@{}}Similar false negative rates \\(or TPR) across groups.\end{tabular}                                                                 & \begin{tabular}[c]{@{}c@{}}FNR:$P[\hat{Y} = 0 \;|\; Y= 1 \cap S = 0] = P[\hat{Y} = 0 \;|\; Y =1 \cap S =1]$\\TPR:$P[\hat{Y} = 1 \;|\; Y= 1 \cap S = 0] = P[\hat{Y} = 1 \;|\; Y =1 \cap S =1]$\end{tabular} & SEP                                              & R                                            & Rawls'                                                \\ 
\hline
Equalized Odds                                                                                        & \begin{tabular}[c]{@{}l@{}}Similar false positive and true positive\\rates across groups.\end{tabular}                                                         & \begin{tabular}[c]{@{}c@{}}$P[\hat{Y}=1 \;|\; Y = y \cap S = 0 ] = P[\hat{Y}=1\;|\;Y=y \cap S =1] $\\for $y \in \{0, 1\}$\end{tabular}                                                                     & SEP                                              & E                                            & Rawls'                                                \\ 
\hline
Conditional Use Accuracy                                                                              & \begin{tabular}[c]{@{}l@{}}Similar positive and negative predictive\\values across groups.\end{tabular}                                                        & \begin{tabular}[c]{@{}c@{}}$P[Y=y \;|\;\hat{Y}=y\cap S = 0] = P[Y=y\;|\;\hat{Y}=y\cap S=1]$\\for~$y \in \{0, 1\}$\end{tabular}                                                                             & SUF                                              & E                                            & LE                                                   \\ 
\hline
Overall Accuracy Equality                                                                             & Requires similar accuracy across groups.                                                                                                                       & \begin{tabular}[c]{@{}c@{}}$P[\hat{Y}=y \;|\;Y=y\cap S = 0] = P[\hat{Y}=y\;|\;Y=y\cap S=1]$\\for~$y \in \{0, 1\}$\end{tabular}                                                                             & IND                                              & E                                            & Rawls'                                                \\ 
\hline
Treatment Equality                                                                                    & \begin{tabular}[c]{@{}l@{}}Equal ratio of false negatives and false\\positive between groups.\end{tabular}                                                     & $\frac{FN_{S=0}}{FP_{S=0}} = \frac{FN_{S=1}}{FP_{S=1}}$                                                                                                                                                    & SEP                                              & E                                            & Rawls'                                                \\ 
\hline
                                                                                                      &                                                                                                                                                                &                                                                                                                                                                                                            &                                                  &                                              &                                                      \\ 
\hline
\multicolumn{6}{l}{{\cellcolor[rgb]{0.792,0.894,0.894}}\textbf{\textcolor[rgb]{0,0.502,0.502}{3. Predicted Probabilities and Actual Outcomes}}}                                                                                                                                                                                                                                                                                                                                                                                                                                                                                              \\ 
\hline
Test Fairness                                                                                         & \begin{tabular}[c]{@{}l@{}}All groups have equal probability to belong to\\the positive class.\end{tabular}                                                    & \begin{tabular}[c]{@{}c@{}}$P[Y = 1 \;|\; \mathcal{P}=p\cap S = 0] = P [Y=1\;|\;\mathcal{P}=p\cap S=1]$\\for any predicted probability score $p \in \mathcal{P}$\end{tabular}                              & SUF                                              & R                                            & LE                                                   \\ 
\hline
Well Calibration                                                                                      & \begin{tabular}[c]{@{}l@{}}The probability of all groups to belong to the\\positive class is the predicted probability\\score~$p\in\mathcal{P}$ .\end{tabular} & $P[Y = 1 \;|\; \mathcal{P}=p\cap S = 0] = P [Y=1\;|\;\mathcal{P}=p\cap S=1] = p$                                                                                                                           & SUF                                              & E                                            & LE                                                   \\ 
\hline
Balance for Positive Class                                                                            & \begin{tabular}[c]{@{}l@{}}Equal mean predicted probabilities for all people\\in the positive class, regardless of group.\end{tabular}                         & $\mathbb{E}[\mathcal{P} \;|\; Y = 1 \cap S = 0 ] = \mathbb{E}[\mathcal{P} \;|\;Y=1\cap S=1]$                                                                                                               & SEP                                              & R                                            & Rawls'                                                \\ 
\hline
Balance for Negative Class                                                                            & \begin{tabular}[c]{@{}l@{}}Equal mean predicted probabilities for all subjects\\in the negative class, regardless of group.\end{tabular}                       & $\mathbb{E}[\mathcal{P} \;|\; Y = 0 \cap S = 0 ] = \mathbb{E}[\mathcal{P} \;|\;Y=0\cap S=1]$                                                                                                               & SEP                                              & R                                            & Rawls'                                                \\
\bottomrule
\end{tabular}
}
\end{table}

\subsection{Individual Fairness Notions}
Up until this point, the metrics we have discussed all focus on the notion of \textit{group fairness}. In other words, these metrics ensure some kind of statistical parity for members of different groups and not a specific individual \cite{binns2019apparent}. Another set of fairness metrics that consider the fairness as it relates to a specific individual is called \textit{individual fairness}. Individual fairness ensures that people who are similar in the eyes of the classification task are treated similarly (i.e., obtain the same prediction) \cite{binns2019apparent}. In this section, we recount the first (and most famous) notion of individual fairness - fairness through awareness. We note that other individual fairness metrics exist, such as \cite{joseph2016fairness, jung2020algorithmic, lahotiifair, pmlr-v28-zemel13}, and we direct interested readers to these publications, as well as a survey over them \cite{binns2019apparent}, for more detail.

\subsubsection{\textbf{Fairness through Awareness}} 
Fairness through awareness, most commonly called individual fairness, was first proposed by Dwork et al. in 2012 \cite{dwork_fairness_2011}. The motivation in creating fairness through awareness was that there was concerns that simply using statistical parity between different groups could result in unfair outcomes at the individual level. To solve this issue, Dwork et al. proposed to use a distance metric that measured how similar an individual was to another. Two individuals were considered alike if their combination of task-relevant attributes were nearby each other, and the overall process was deemed fair if the two individuals (who were alike) received the same outcome from the model \cite{binns2019apparent}. This process can be seen as being similar to the legal practice of situation testing. Situation testing is an experimental method that aims to establish discrimination on the spot \cite{benedick207situation}. It takes pairs of individuals who are similar, but do not necessarily belong to the same group, and puts them in the same situation. If the individual who is part of the marginalized group is treated differently than the individual in the non-marginalized group, then there is a viable case of discrimination that can be brought to court. Several research works \cite{luong2011knn, zhang2016situation} studied the use of kNN and causal Bayesian networks to facilitate the similarity measurements in situation testing based discrimination detection. Additionally, fairness through awareness aligns with Aristotle's conception of ``justice as consistency'' \cite{binns2019apparent, winston_1974}.

A downfall of this metric is that it does not allow for comparison of \textit{all} individuals since it only compares \textit{similar} individuals. So in the hiring example, the applicants who have similar background experiences can be compared to each other, but they cannot be compared to those who have different prior work experience. This makes it impossible to construct a total ranking of all the candidates. Additionally, fairness through awareness can be difficult to implement as it requires explicitly defining what similarity means in a certain context and what is considered similar in one case may not be considered similar in another. Further, specifically for fairness through awareness, it requires the defining of a distance metric by the people who set the policy, which is not a simple task to do \cite{binns2019apparent}.   

\subsubsection{Group Fairness vs. Individual Fairness}
Many technical research papers assume that both group and individual fairness are important, but conflicting, measures \cite{lahotiifair, pmlr-v28-zemel13}. But, Binns argues that this conflict is based on a misconception, and when we look at the philosophical underpinnings of group and individual fairness, they are not actually trying to achieve different things \cite{binns2019apparent}. Specifically, in \cite{binns2019apparent} he wrote: 
\begin{displayquote}
``... while there may be natural affinities between the usual formulations of individual fairness and [Aristotle's] consistency, and the usual motivations for group fairness and egalitarianism, these are only surface deep. Consistency and egalitarianism themselves do not conflict at the level of principle. In fact they can even be seen as mutually implied in so far as luck egalitarianism aims to remove luck from allocation, it implies consistency. And in so far as the process for defining task-relevant similarity can already be an exercise in normative judgement, it makes sense to incorporate egalitarian concerns into it. It is therefore both possible and indeed coherent to adopt consistency-respecting formulations of group fairness, as well as egalitarian formulations of individual fairness.'' 
\end{displayquote}
All this being to say that the philosophical ideas of individual and group fairness are actually the same. Binns goes on to reiterate that: ``the appearance of conflict between the two is an artifact of the failure to fully articulate assumptions behind them, and the reasons for applying them in a particular context'' \cite{binns2019apparent}.

\section{Causal-Based Fairness Notions}
\label{sec:cfair}

In Section \ref{sec:sfairness}, we presented fairness notions in machine learning that are based on various statistical discrimination criteria. These criteria are observational as they depend only on the joint distribution of predictor (algorithm), marginalization attribute, features, and the final outcome. Observational criteria have severe inherent limitations that prevent them from resolving matters of fairness in a conclusive manner, mainly because fairness cannot be well assessed based only on mere correlation or association (rung one of Pearl's Ladder of Causation). Additionally, statistics-based fairness metrics do not align well with the legal process of discrimination handling since discrimination claims usually require plaintiffs to demonstrate a causal connection between the challenged decision and the sensitive feature. Most recent fairness notions are causal-based and reflect the now widely accepted idea that using causality is necessary to appropriately address the problem of fairness. In this section, we present fairness in the language of causal reasoning. Causal-based fairness notions differ from the previously explained ones in that they are not totally based on data\footnote{``Data is profoundly dumb. Data can tell you that people who took a medicine recovered faster than those who did not take it, but they can't tell you why." - Judea Pearl \cite{PearlMackenzie18}}, but consider additional knowledge about the structure of the world, in the form of a causal model. 

Causal-based fairness notions are developed mainly under the two causal frameworks: the structural causal model (SCMs) and the potential outcome. SCMs assume that we know the complete causal graph, and hence, we are able to study the causal effect of any variable along many different paths. The potential outcome framework does not assume the availability of the causal graph and instead focuses on estimating the causal effects of treatment variables. In Table \ref{tab:classification}, we present the causal framework to which each causal-based fairness notion discussed in this section belongs.

In \cite{pearl2019seven}, Pearl presented the causal hierarchy through the Ladder of Causation, as shown in Fig. \ref{fig:ladderofcause}. The Ladder of Causation has the 3 rungs: association, intervention, and counterfactual. On the first rung, associations can be inferred directly from the observed data using conditional probabilities and conditional expectations. The intervention rung involves not only seeing what is, but also changing what we see. Interventional questions deal with $P(y|do(x), z)$ which stands for ``the probability of $Y=y$, given that we intervene and set the values of $X$ to $x$ and subsequently observe event $Z=z$.'' Interventional questions cannot be answered from purely observational data alone. They can be estimated experimentally from randomized trials or analytically using causal Bayesian networks. The top rung invokes counterfactuals and deals with $P(y_x|x', y')$ which stands for ``the probability that event $Y=y$ would be observed had $X$ been $x$, given that we actually observed $X$ to be $x'$ and $Y$ to be $y'$.'' Such questions can be computed only when the model is based on functional relations or is structural. In Table \ref{tab:classification}, we also show the causal hierarchical level that each causal-based fairness notion aligns with.

\begin{table}[ht!]\small
\centering
\setlength{\extrarowheight}{0pt}
\addtolength{\extrarowheight}{\aboverulesep}
\addtolength{\extrarowheight}{\belowrulesep}
\setlength{\aboverulesep}{0pt}
\setlength{\belowrulesep}{0pt}
\caption{Classification of causal-based fairness notions. SCM = structure causal model, PO = potential outcome. The last column describes whether the fairness notion involves both $Y$ and $\hat{Y}$ in their counterfactual quantity.}
\resizebox{\linewidth}{!}{%
\begin{tabular}{lcccccc} 
\toprule
\rowcolor[rgb]{0.592,0.792,0.792} \multicolumn{1}{c}{\textcolor[rgb]{0,0.502,0.502}{\textbf{Notion}}} & \textcolor[rgb]{0,0.502,0.502}{\textbf{Association}} & \textcolor[rgb]{0,0.502,0.502}{\textbf{SCM}} & \textcolor[rgb]{0,0.502,0.502}{\textbf{PO}} & \textcolor[rgb]{0,0.502,0.502}{\textbf{Intervention}} & \textcolor[rgb]{0,0.502,0.502}{\textbf{Counterfactual}} & \textcolor[rgb]{0,0.502,0.502}{\textbf{$Y$ and~$\hat{Y}$}}  \\ 
\hline
Total Variation                                                                                      & $\checkmark$                                         &                                              &                                             &                                                       &                                                         &                                                             \\
Total Causal Fairness                                                                                 &                                                      & $\checkmark$                                 &                                             & $\checkmark$                                          &                                                         &                                                             \\
Natural Direct Effect                                                                                 &                                                      & $\checkmark$                                 &                                             & $\checkmark$                                          &                                                         &                                                             \\
Natural Indirect Effect                                                                               &                                                      & $\checkmark$                                 &                                             & $\checkmark$                                          &                                                         &                                                             \\
Path-specific Causal Fairness                                                                         &                                                      & $\checkmark$                                 &                                             & $\checkmark$                                          &                                                         &                                                             \\
Direct Causal Fairness                                                                                &                                                      & $\checkmark$                                 &                                             & $\checkmark$                                          &                                                         &                                                             \\
Indirect Causal Fairness                                                                              &                                                      & $\checkmark$                                 &                                             & $\checkmark$                                          &                                                         &                                                             \\
Counterfactual Fairness                                                                               &                                                      & $\checkmark$                                 &                                             &                                                       & $\checkmark$                                            &                                                             \\
Counterfactual Direct Effect                                                                          &                                                      & $\checkmark$                                 &                                             &                                                       & $\checkmark$                                            &                                                             \\
Counterfactual Indirect Effect                                                                        &                                                      & $\checkmark$                                 &                                             &                                                       & $\checkmark$                                            &                                                             \\
Path-specific Counterfactual Fairness                                                                 &                                                      & $\checkmark$                                 &                                             &                                                       & $\checkmark$                                            &                                                             \\
Proxy Fairness                                                                                        &                                                      & $\checkmark$                                 &                                             & $\checkmark$                                          &                                                         &                                                             \\
Justifiable Fairness                                                                                  &                                                      & $\checkmark$                                 &                                             & $\checkmark$                                          &                                                         &                                                             \\
Counterfactual Direct Error Rate                                                                      &                                                      & $\checkmark$                                 &                                             &                                                       & $\checkmark$                                            & $\checkmark$                                                \\
Counterfactual Indirect Error Rate                                                                    &                                                      & $\checkmark$                                 &                                             &                                                       & $\checkmark$                                            & $\checkmark$                                                \\
Individual Equalized Counterfactual Odds                                                              &                                                      & $\checkmark$                                 &                                             &                                                       & $\checkmark$                                            & $\checkmark$                                                \\
Fair on Average Causal Effect                                                                         &                                                      &                                              & $\checkmark$                                & $\checkmark$                                          &                                                         &                                                             \\
Fair on Average Causal Effect on the Treated~                                                         &                                                      &                                              & $\checkmark$                                &                                                       & $\checkmark$                                            &                                                             \\
Equal Effort Fairness                                                                                 &                                                      &                                              & $\checkmark$                                &                                                       & $\checkmark$                                            &                                                             \\
\bottomrule
\end{tabular}
}
\label{tab:classification}
\end{table}

In the context of fair machine learning, we use $S \in \{s^+, s^-\}$ to denote the marginalization attribute, $Y \in \{y^+, y^-\}$ to denote the decision, and $\X$ to denote a set of non-marginalization attributes. The underlying mechanism of the population over the space $S\times \X \times Y$ is represented by a causal model $\mathcal{M}$, which is associated with a causal graph $\CG$. Fig. \ref{fig:cgexp} shows a causal graph that will be used to illustrate fairness notions throughout this section. With $\mathcal{M}$, we want to reason about counterfactual queries, e.g., ``what would the prediction have been for this individual if their marginalization attribute value changed?'' A historical dataset $\D$ is drawn from the population, which is used to construct a predictor $ h: \X, S \rightarrow \Y $. Note that the input of the predictor can be a subset of $\X, S$ and we use $\widehat{PA}$ to denote the set of input features of the predictor when introducing counterfactual error rate in Section \ref{sec:cer}. The causal model for the population over space $S\times \X \times \Y$ can be considered the same as $\mathcal{M}$, except that function $f_{Y}$ is replaced with a predictor $h$. Most fairness notions involve either $Y$ or $\hat{Y}$ in their counterfactual quantity and, roughly speaking, they correspond to statistical parity. A few fairness notions, e.g., counterfactual direct error rate \cite{zhang2018equality}, correspond to the concept of equalized odds and involve both $Y$ and $\hat{Y}$ in their counterfactual quantity. We also mark them in Table \ref{tab:classification}.

For each fairness notion, we have two versions: strict and relaxed. The strict version means there is absolutely no discrimination effect, whereas the relaxed version often compares the causal effect with $\tau$, a use-defined threshold for discrimination. In our discussions, we adhere to the strict version when introducing each fairness notion. 

Since the majority of causal-based fairness notions are defined in terms of the non-observable quantities of interventions and counterfactuals, their applicability depends heavily on the identifiability of those quantities from observational data. We refer readers who are interested in learning the specifics of identifiability theory and criteria, and how they can be used to decide the applicability of causal-based fairness metrics to \cite{makhlouf2021survey}. In this section, we simply present causal-based fairness notions and discuss their relationships. 
\begin{figure}[h!]
    \centering
    \includegraphics[scale=.5]{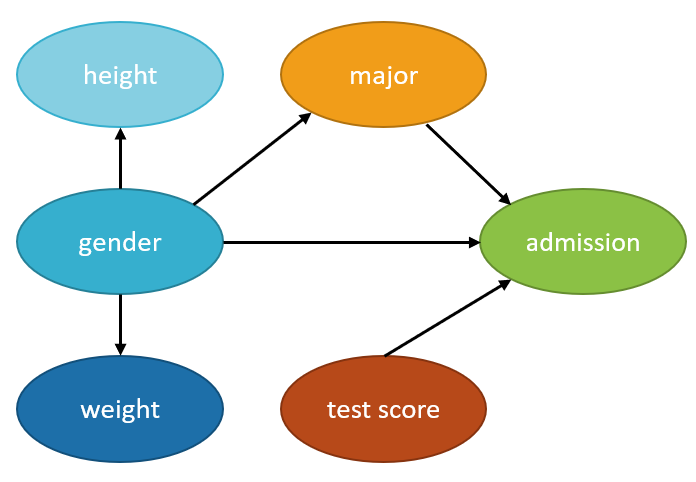}
    \caption{Causal graph of the college admission example used throughout this section.}
    \label{fig:cgexp}
\end{figure}

\subsection{Total, Natural Direct, and Natural Indirect Causal Fairness}
\label{sec:cf}

Discrimination can be viewed as the causal effect of $S$ on $Y$. Total causal fairness answers the question of if the marginalization attribute $S$ changed (e.g., changing from marginalized group $s^{-}$ to non-marginalized group $s^{+}$), how would the outcome $Y$ change on average? A straightforward strategy to answer this question is to measure the average causal effect of $S$ on $Y$ when $S$ changes from $s^{-}$ to $s^{+}$, an approach called total causal fairness.

\begin{definition} [Total Causal Fairness]\label{def:tcf}
Given the marginalization attribute $S$ and decision $Y$, we achieve total causal fairness if \[\TCE(s_1, s_0) = P(y_{s_1}) - P(y_{s_0}) = 0\] where $s_1, s_0 \in \{ s^+, s^-\}$.
\end{definition}

In fact, the causal effect of $S$ on $Y$ includes the direct discriminatory effect, the indirect discriminatory effect, and the explainable effect. 
In \cite{pearl2013direct}, Pearl proposed the use of NDE and NIE to measure the direct and indirect discrimination. Recall that $\mathrm{NDE}(s_1, s_0) = P(y_{s_1, \mathbf{Z}_{s_0}}) - P(y_{s_0})$ and $\mathrm{NIE}(s_1, s_0) = P(y_{s_0, \mathbf{Z}_{s_1}}) - P(y_{s_0})$ where $\mathbf{Z}$ is the set of mediator variables. When applied to the example in Figure \ref{fig:cgexp}, the mediator variable could be the {\em major}. $P(y_{s_1, \mathbf{Z}_{s_0}})$ in NDE is the probability of $Y=y$ had $S$ been $s_1$ and had $\mathbf{Z}$ been the value it would naturally take if $S=s_0$. Similarly, NIE measures the indirect effect of $S$ on $Y$. However, NIE does not distinguish between explainable and indirect discrimination. 

\subsection{Path-Specific Causal Fairness}
\label{sec:pscf}
In \cite{zhang2017causal}, Zhang et al. introduced path-specific causal fairness based on path-specific causal effect \cite{pearl2009causality}. Different from total, natural direct, and natural indirect causal effects, the path-specific causal effect is based on graph properties of the causal graph, and characterizes the causal effect in term of specific paths. 

\begin{definition} [Path-Specific Causal Fairness]\label{def:pscf}
Given the marginalization attribute $S$, decision $Y$, and redlining attributes $\mathbf{R}$, define $\pi_{d}$ as the path set that contains some paths from $S$ to $Y$. We achieve path-specific causal fairness if 
\[\mathrm{PE}_{\pi}(s_1,s_0) = P(y_{s_1 \vert \pi, s_0 \vert \bar{\pi}}) - P(s_{x_0}) =0\] 
where $s_1, s_0 \in \{ s^+, s^-\}$. Specifically, define $\pi_{d}$ as the path set that contains only $S\rightarrow Y$ and define $\pi_{i}$ as the path set that contains all the causal paths from $S$ to $Y$ which pass through some redlining attributes of $\mathbf{R}$. We achieve direct causal fairness if $\mathrm{PE}_{\pi_{d}}(s_1,s_0)=0$, and indirect causal fairness if $\mathrm{PE}_{\pi_{i}}(s_1,s_0)=0$.
\end{definition}

Direct discrimination considers the causal effect transmitted along the direct path from $S$ to $Y$, i.e., $S\rightarrow Y$.
The physical meaning of $\mathit{PE}_{\pi_{d}}(s^{+},s^{-})$ can be explained as the expected change in decisions of individuals from marginalized group $s^{-}$, if the decision makers are told that these individuals were from the non-marginalized group $s^{+}$. When applied to the example in Fig. \ref{fig:cgexp}, it means that the expected change in admission of applications is actually from the marginalized group (e.g., female), when the admission office is instructed to treat the applicants as from the non-marginalized group (e.g., male). 

Indirect discrimination considers the causal effect transmitted along all the indirect paths from $S$ to $Y$ that contain the redlining attributes. The physical meaning of $\mathit{PE}_{\pi_{i}}(s^{+},s^{-})$ is the expected change in decisions of individuals from marginalized group $s^{-}$, if the values of the redlining attributes in the profiles of these individuals were changed as if they were from the non-marginalized group $s^{+}$. When applied to the example in Fig. \ref{fig:cgexp}, it means the expected change in admission of the marginalized group if they had the same gender makeups shown in the major as the non-marginalized group. 

The following propositions \cite{zhang2017causal} further show two properties of the path-specific effect metrics.

\begin{proposition}
If path set $\pi$ contains all causal paths from $S$ to $Y$ and $S$ has no parent in $\mathcal{G}$, then we have
\begin{equation*}
\mathrm{PE}_{\pi}(s_1,s_0) = \mathrm{TE}(s_1,s_0) = P(y^{+}|s_1)-P(y^{+}|s_0).
\end{equation*}
\end{proposition}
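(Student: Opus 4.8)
The plan is to break the claimed chain of equalities into its two links and dispatch each in turn. The first equality, $\PE_{\pi}(s_1,s_0) = \mathrm{TE}(s_1,s_0)$, is immediate from structure already in place: since $\pi$ is assumed to contain \emph{every} causal path from $S$ to $Y$, this is exactly the first connection property listed after Definition~\ref{def:pse} (applied with $X = S$), so by Definition~\ref{def:te} we get $\PE_{\pi}(s_1,s_0) = \TCE(s_1,s_0) = P(y_{s_1}) - P(y_{s_0})$. No real work is needed here beyond invoking that property and unfolding the definition of the total causal effect.

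The substance is the second equality, which reduces to the single fact that $P(y_{s}) = P(y \mid s)$ for each value $s$ — i.e., that intervening on $S$ coincides with merely conditioning on $S$ — whenever $S$ has no parent in $\CG$. I would prove this by comparing two factorizations. On one side, expand $P(y_s) = P(y \mid do(s))$ via the truncated factorization formula, Equation~\eqref{eq:do}: $P(y \mid do(s)) = \sum_{\mathbf{V}\setminus\{S,Y\},\, Y=y}\ \prod_{V \in \mathbf{V}\setminus\{S\}} P(v \mid PA(V))\,\delta_{S=s}$. On the other side, write $P(y \mid s) = P(y,s)/P(s)$ and expand the joint $P(y,s)$ using the ordinary Markov factorization $P(\mathbf{v}) = \prod_{V\in\mathbf{V}} P(v\mid PA(V))$, summing out every variable other than $S$ and $Y$ while fixing $S=s$ and $Y=y$.

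The key step is then to apply the hypothesis that $S$ has no parent in $\CG$: this forces $P(s \mid PA(S)) = P(s)$, so in the joint factorization the factor associated with $S$ is just the constant $P(s)$, which pulls out of the sum and cancels the denominator $P(s)$. What is left is precisely the truncated-factorization expression for $P(y \mid do(s))$, giving $P(y \mid s) = P(y_s)$. Evaluating this at $s = s_1$ and $s = s_0$ with outcome value $y = y^{+}$ yields $\TCE(s_1,s_0) = P(y^{+}_{s_1}) - P(y^{+}_{s_0}) = P(y^{+}\mid s_1) - P(y^{+}\mid s_0)$, which closes the chain.

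I expect the only delicate point to be bookkeeping in the factorization manipulation — keeping the marginalization ranges aligned on both sides and handling the $\delta_{S=s}$ assignment consistently — rather than anything conceptually deep; the ``no parent'' hypothesis does all the real work by eliminating the one place ($S$'s own conditional) where observation and intervention could diverge. Equivalently, in a Markovian model one can argue directly that $do(s)$ deletes all arrows into $S$, of which there are none, so the post-intervention graph equals $\CG$ and the independence of $U_S$ from the remaining exogenous variables already gives $P(y_s) = P(y\mid s)$; I would mention this as the intuition but carry out the factorization argument for rigor.
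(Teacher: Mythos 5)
Your proposal is correct, but note that the paper itself offers no proof to compare against: the proposition is stated and attributed to \cite{zhang2017causal}, so you have supplied the argument the paper leaves implicit. Your two-step structure is the natural one. The first equality is indeed nothing more than the first connection property listed after the definition of the path-specific effect, so $\PE_{\pi}(s_1,s_0)=\TCE(s_1,s_0)=P(y_{s_1})-P(y_{s_0})$ follows by unfolding Definition~\ref{def:te}. For the second equality, your factorization argument is the standard and correct one: writing $P(y,s)$ via the Markov factorization, the only factor that distinguishes observation from intervention is $P(s\mid PA(S))$, and the hypothesis that $S$ has no parent turns it into the constant $P(s)$, which cancels against the denominator of $P(y\mid s)$ and leaves exactly the truncated factorization \eqref{eq:do} for $P(y\mid do(s))$. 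The one caveat worth making explicit is that this argument lives entirely in the Markovian setting: both the joint factorization $P(\mathbf{v})=\prod_{V}P(v\mid PA(V))$ and the truncated formula \eqref{eq:do} presuppose mutually independent exogenous variables, and in a semi-Markovian graph a parentless $S$ could still share a hidden confounder (bi-directed edge) with a descendant, in which case $P(y_s)\ne P(y\mid s)$ in general. You gesture at this with your closing remark about $U_S$ being independent of the remaining exogenous variables, but the hypothesis ``$S$ has no parent in $\CG$'' should be read as also excluding such bi-directed edges (as it is in \cite{zhang2017causal}) for the cancellation to be licensed; with that reading your proof is complete.
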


$P(y^{+}|s^{+})-P(y^{+}|s^{-})$ is known as the \emph{risk difference}. Therefore, the path-specific effect metrics can be considered as an extension to the risk difference for explicitly distinguishing the discriminatory effects of direct and indirect discrimination from the total causal effect.

\begin{proposition}
For any path sets $\pi_{d}$ and $\pi_{i}$, we do not necessarily have \[\mathrm{PE}_{\pi_{d}}(s_1,s_0)+\mathrm{PE}_{\pi_{i}}(s_1,s_0)=\mathrm{PE}_{\pi_{d}\cup \pi_{i}}(s_1,s_0).\]
\end{proposition}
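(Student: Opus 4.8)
The statement asserts only that additivity of path-specific effects \emph{can} fail, so the plan is to construct a single structural causal model witnessing the inequality rather than to argue in general. I would use a ``kite''-shaped causal graph over binary variables: the marginalization attribute $S$ (exogenous, say $P(S{=}s^{+})=P(S{=}s^{-})=1/2$), one mediator $Z$, and the decision $Y$, with edges $S\to Z$, $Z\to Y$, and $S\to Y$. Take $\pi_d=\{S\to Y\}$ and $\pi_i=\{S\to Z\to Y\}$, so that $\pi_d\cup\pi_i$ is exactly the set of all causal paths from $S$ to $Y$; since $S$ has no parent, the preceding proposition gives $\PE_{\pi_d\cup\pi_i}(s_1,s_0)=\mathrm{TE}(s_1,s_0)=P(y^{+}\mid s_1)-P(y^{+}\mid s_0)$. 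Identify $s^{-}$ with $0$ and $s^{+}$ with $1$, set $s_0=0$, $s_1=1$, $y^{+}=1$, and choose the structural equations $Z:=S$ and $Y:=S\wedge Z$ (logical AND), i.e.\ a function in which $S$ and $Z$ act only through their interaction.

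With this model the three quantities follow by reading the nested counterfactuals off the structural equations. First, $P(y^{+}_{s_0})=0$ and $P(y^{+}_{s_1})=1$, so $\PE_{\pi_d\cup\pi_i}(s_1,s_0)=\mathrm{TE}(s_1,s_0)=1$. Next, for $\PE_{\pi_d}$ the indirect route freezes the mediator at its reference value: $y^{+}_{s_1\vert\pi_d,\,s_0\vert\bar{\pi_d}}$ evaluates $Y=f_Y(s_1,Z_{s_0})=1\wedge 0=0$, giving $\PE_{\pi_d}(s_1,s_0)=0-0=0$. Symmetrically, for $\PE_{\pi_i}$ the direct edge carries the reference value: $y^{+}_{s_1\vert\pi_i,\,s_0\vert\bar{\pi_i}}$ evaluates $Y=f_Y(s_0,Z_{s_1})=0\wedge 1=0$, giving $\PE_{\pi_i}(s_1,s_0)=0-0=0$. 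Hence $\PE_{\pi_d}(s_1,s_0)+\PE_{\pi_i}(s_1,s_0)=0\neq 1=\PE_{\pi_d\cup\pi_i}(s_1,s_0)$, which proves the proposition. (An XOR in place of AND works just as well, yielding $2\neq 0$.)

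It is worth stating, rather than grinding out, the conceptual reason the example works: the total effect \emph{does} decompose along $\pi_d$ and $\bar{\pi_d}$, but only with reversed intervention/reference arguments on the complementary set, via the identity $\PE_{\pi}(x_1,x_0)+\bigl(-\PE_{\bar{\pi}}(x_0,x_1)\bigr)=\TCE(x_1,x_0)$ recorded earlier (equivalently, Pearl's mediation formula $\mathrm{TE}=\mathrm{NDE}(s_1,s_0)-\mathrm{NIE}(s_0,s_1)$). Summing $\PE_{\pi_d}(s_1,s_0)+\PE_{\pi_i}(s_1,s_0)$ instead of $\PE_{\pi_d}(s_1,s_0)-\PE_{\pi_i}(s_0,s_1)$ silently drops the interaction term, which the AND gate is designed to isolate. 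The only real obstacle is bookkeeping of the nested counterfactuals --- keeping straight which copy of $S$ each path set propagates (in $y_{s_1\vert\pi_d,\,s_0\vert\bar{\pi_d}}$ the mediator is $Z_{s_0}$ while the direct link gets $s_1$, and vice versa for $\pi_i$). One should also note in passing that both $\PE_{\pi_d}$ and $\PE_{\pi_i}$ are identifiable in this graph (there is no recanting witness: the lone mediator $Z$ lies on $\pi_i$ only), so the failure of additivity is genuine and not an artifact of an ill-defined quantity.
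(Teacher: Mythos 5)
Your proof is correct, and it is worth noting that the paper itself does not prove this proposition at all: it simply recalls it from Zhang et al.\ \cite{zhang2017causal}, so a self-contained counterexample is exactly what is called for, and yours works. I checked the arithmetic: with $Z:=S$, $Y:=S\wedge Z$, and $s_0=0$, $s_1=1$, you get $P(y^{+}_{s_0})=0$, $P(y^{+}_{s_1})=1$, hence $\PE_{\pi_d\cup\pi_i}(s_1,s_0)=\TCE(s_1,s_0)=1$ (either by direct computation or via connection (1) of Section 2.2.1, since $\pi_d\cup\pi_i$ exhausts all causal paths); the nested counterfactuals give $f_Y(s_1,Z_{s_0})=1\wedge 0=0$ and $f_Y(s_0,Z_{s_1})=0\wedge 1=0$, so $\PE_{\pi_d}(s_1,s_0)=\PE_{\pi_i}(s_1,s_0)=0$ and $0+0\neq 1$. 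Your closing remark correctly locates the conceptual source of the failure: the valid decomposition is $\PE_{\pi}(s_1,s_0)-\PE_{\bar{\pi}}(s_0,s_1)=\TCE(s_1,s_0)$ (connection (2) in Section 2.2.1), and summing two effects with the same intervention/reference orientation drops the interaction term that the AND gate isolates. Two cosmetic points only: the graph is the standard mediation triangle rather than a ``kite,'' and the identifiability aside, while true, is not needed for the proposition since exhibiting one SCM in which the two sides of the equation take different well-defined values already suffices.
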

This implies that there might not be a linear connection between direct and indirect discrimination.

\subsection{Counterfactual Fairness}
In Section \ref{sec:cf} and \ref{sec:pscf}, the intervention is performed on the whole population. These metrics deal with effects on an entire population, or on the average individual from a population. But, up to this point we are still missing the ability to talk about ``personalized causation" at the level of particular events of individuals \cite{PearlMackenzie18}. Counterfactuals provide such a tool. If we infer the post-intervention distribution while conditioning on certain individuals, or groups specified by a subset of observed variables, the inferred quantity will involve two worlds simultaneously: the real world represented by causal model $\mathcal{M},$ as well as the counterfactual world $\mathcal{M}_x$. Such causal inference problems are called counterfactual inference, and the distribution of $Y_{x}$ conditioning on the real world observation $\mathbf{O}=\mathbf{o}$ is denoted by $P(y_{x}| \mathbf{o})$.

In \cite{kusner2017counterfactual}, Kusner et al. defined counterfactual fairness to be the case where the outcome would have remained the same had the marginalization attribute of an individual or a group been different, and all other attributes been equal. 

\begin{definition}[Counterfactual Fairness]\label{def:cf}
	Given a factual condition $\VO = \vo$ where $\VO \subseteq \{S, \X, Y \}$, we achieve counterfactual fairness if 
	\[\CE(s_1, s_0| \vo)  = P(y_{s_1} | \mathbf{o}) - P(y_{s_0} | \mathbf{o}) =0\]
	where $s_1, s_0 \in \{ s^+, s^-\}$.
\end{definition}

Note that we can simply define a classifier as counterfactually fair by replacing outcome $Y$ with the predictor $\hat{Y}$ in the above equation. The meaning of counterfactual fairness can be interpreted as follows when applied to the example in Figure \ref{fig:cgexp}. Applicants are applying for admission and a predictive model is used to make the decision $\Y$. We concern ourselves with an individual from marginalized group $s^-$ who is specified by a profile $\vo$. The probability of the individual to get a positive decision is $P(\y|s^-,\vo)$, which is equivalent to $P(\y_{s^-}|s^-,\vo)$ since the intervention makes no change to $S$'s value of that individual. Now assume the value of $S$ for the individual had been changed from $s^-$ to $s^{+}$. The probability of the individual to get a positive decision after the hypothetical change is given by $P(\y_{s^+}|s^-, \vo)$. Therefore, if the two probabilities $P(\y_{s^-}|s^-, \vo)$ and $P(\y_{s^+}|s^-, \vo)$ are identical, we can claim the individual is treated fairly as if they had been from the other group.

\subsection{Counterfactual Effects}
\label{sec:ce}

In \cite{zhang2018fairness}, Zhang and Bareinboim introduced three fine-grained measures of the transmission of change from stimulus to effect called the counterfactual direct, indirect, and spurious effects. Throughout Section \ref{sec:ce}, we use $\mathbf{W}$ to denote all the observed intermediate variables between $S$ and $Y$ and use the group with $S=s_0$ as the baseline to measure changes of the outcome. 

\begin{definition}[Counterfactual Direct Effect]
	\label{def:ctf-de}
	Given a SCM, the counterfactual direct effect (Ctf-DE) of intervention $S=s_1$ on $Y$ (with baseline $s_0$) conditioned on $S=s$ is defined as  \[\textrm{Ctf-DE}_{s_0,s_1}(y|s) = P(y_{s_1,\mathbf{W_{s_0}}}|s) - P(y_{s_0}|s).\] 
\end{definition}

$Y_{s_1,\mathbf{W}_{s_0}} =y |S=s$ is a more involved counterfactual compared to NDE and can be read as ``the value of $Y$ would be had $S$ been $s_1$, while $\mathbf{W}$ is kept at the same value that it would have attained had $S$ been $s_0$, given that $S$ was actually equal to $s$.'' 

\begin{definition}[Counterfactual Indirect Effect]
	\label{def:ctf-ie}
	Given a SCM, the counterfactual indirect effect (Ctf-IE) of intervention $S=s_1$ on $Y$ (with baseline $s_0$) conditioned on $S=s$ is defined as  \[\textrm{Ctf-IE}_{s_0,s_1}(y|s) = P(y_{s_0,\mathbf{W}_{s_1}}|s) - P(y_{s_0}|s).\] 
\end{definition}

Ctf-IE measures changes in the probability of the outcome $Y$ would be $y$ had $S$ been $s_0$, while changing $\mathbf{W}$ to whatever level it would have naturally obtained had $S$ been $s_1$, in particular, for the individuals that $S=s_0$.

\begin{definition}[Counterfactual Spurious Effect ]
	\label{def:ctf-se}
	Given a SCM, the counterfactual spurious effect (Ctf-SE) of $S=s_1$ on $Y=y$ (with baseline $s_0$) is defined as \[\textrm{Ctf-SE}_{s_0,s_1}(y) = P(y_{s_0}|s_1) - P(y|{s_0}).\] 
\end{definition}

$\text{Ctf-SE}_{s_0,s_1}(y)$ measures the difference in the outcome $Y=y$ had $S$ been $s_0$ for the individuals that would naturally choose $S$ to be $s_0$ versus $s_1$.  

\begin{proposition}
For a SCM, if $S$ has no direct (indirect) causal path connecting $Y$ in the causal graph, then $\textrm{Ctf-DE}_{s_0,s_1}(y|s)=0$ ($\textrm{Ctf-IE}_{s_0,s_1}(y|s)=0$) for any $s$, $y$; if $S$ has no back-door path connecting $Y$ in the causal graph, then $\textrm{Ctf-SE}_{s_0,s_1}(y) = 0$ for any $y$. 
\end{proposition}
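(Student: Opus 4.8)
The plan is to work directly from the structural (``abduction, action, prediction'') semantics of counterfactuals and to reduce each of the three claims to a pointwise identity between two structural values of $Y$, verified for every fixed assignment $\mathbf{u}$ of the exogenous variables. For any unit $\mathbf{u}$ in the support of $P(\mathbf{U}\mid S=s)$, write $Y_{s_0}(\mathbf{u})$ for the value $Y$ takes in the submodel that fixes $S:=s_0$, and similarly $\mathbf{W}_{s_0}(\mathbf{u})$, $\mathbf{C}(\mathbf{u})$, etc. Since each of $P(y_{s_1,\mathbf{W}_{s_0}}\mid s)$, $P(y_{s_0,\mathbf{W}_{s_1}}\mid s)$ and $P(y_{s_0}\mid s)$ is the $P(\mathbf{U}\mid S=s)$-expectation of an indicator of such a structural value, for the first two assertions it suffices to show that the relevant structural values coincide for every $\mathbf{u}$.

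For $\textrm{Ctf-DE}$, split $\Pa{Y}$ into $\Pa{Y}\cap\mathbf{W}$ and a remainder $\mathbf{C}$. The hypothesis ``$S$ has no direct causal path to $Y$'' says $S\notin\Pa{Y}$, so $f_Y$ never reads $S$; and every $C\in\mathbf{C}$ is a non-descendant of $S$ (hence of $\mathbf{W}\subseteq\De{S}$), because otherwise $S\to\cdots\to C\to Y$ would be a causal $S$--$Y$ path with $C$ an intermediate node, forcing $C\in\mathbf{W}$. Consequently, in the submodel that fixes $S:=s_1$ and $\mathbf{W}:=\mathbf{W}_{s_0}(\mathbf{u})$ the arguments of $f_Y$ are exactly $\mathbf{W}_{s_0}(\mathbf{u})$ on the $\mathbf{W}$-parents, $\mathbf{C}(\mathbf{u})$ on the remaining parents, and the exogenous input $u_Y$ --- precisely the arguments $f_Y$ receives in the submodel that fixes $S:=s_0$. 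Hence $Y_{s_1,\mathbf{W}_{s_0}}(\mathbf{u})=Y_{s_0}(\mathbf{u})$ for every $\mathbf{u}$, so $\textrm{Ctf-DE}_{s_0,s_1}(y\mid s)=0$. For $\textrm{Ctf-IE}$, ``$S$ has no indirect causal path to $Y$'' means no variable is simultaneously a descendant of $S$ and an ancestor of $Y$; in particular no member of $\mathbf{W}$ is an ancestor of $Y$, so the nested intervention $\mathbf{W}:=\mathbf{W}_{s_1}(\mathbf{u})$ is inert with respect to $Y$, giving $Y_{s_0,\mathbf{W}_{s_1}}(\mathbf{u})=Y_{s_0}(\mathbf{u})$ for every $\mathbf{u}$ and hence $\textrm{Ctf-IE}_{s_0,s_1}(y\mid s)=0$.

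For $\textrm{Ctf-SE}$ I would argue distributionally. By the consistency (composition) axiom, $P(y_{s_0}\mid s_0)=P(y\mid s_0)$, so it is enough to show $P(y_{s_0}\mid s_1)=P(y_{s_0}\mid s_0)$, i.e.\ that the counterfactual $Y_{s_0}$ is independent of the observed $S$. Any path connecting $S$ and $Y$ in $\CG$ is either a causal path from $S$ to $Y$, which the intervention $do(S=s_0)$ defining $Y_{s_0}$ severs, or a back-door path, which the hypothesis excludes; hence $S$ and $Y_{s_0}$ are d-separated (equivalently, in the twin-network representation no active trail runs through $S$), so $Y_{s_0}\perp S$ and $\textrm{Ctf-SE}_{s_0,s_1}(y)=P(y_{s_0}\mid s_1)-P(y\mid s_0)=P(y_{s_0})-P(y_{s_0})=0$.

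I expect the main obstacle to be the graph-theoretic bookkeeping rather than any probabilistic subtlety: making fully precise that the non-$\mathbf{W}$ parents of $Y$ are non-descendants of $S$ (which leans on the exact reading of ``intermediate variable'' in the definition of $\mathbf{W}$), that ``no indirect causal path'' genuinely renders the nested intervention on $\mathbf{W}$ vacuous, and --- most delicately --- that ``no back-door path'' yields $Y_{s_0}\perp S$ in the semi-Markovian case, where the absence of bidirected edges linking $S$ (or its ancestors) to $Y$ must be understood as part of the hypothesis. A secondary care point is stating the semantics of the doubly-indexed counterfactual $Y_{s_1,\mathbf{W}_{s_0}}$ precisely enough that the pointwise-in-$\mathbf{u}$ comparison is unambiguous.
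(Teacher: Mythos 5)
Your argument is correct, but note that the paper itself offers no proof of this proposition: it is stated as imported background from Zhang and Bareinboim \cite{zhang2018fairness}, so there is no in-paper derivation to compare against. What you give is essentially the standard argument from that source, done carefully: the pointwise (per-unit $\mathbf{u}$) exclusion-restriction reasoning for Ctf-DE and Ctf-IE is sound, including the needed bookkeeping that the non-$\mathbf{W}$ parents of $Y$ are non-descendants of $S$ (hence of $\mathbf{W}$) and are therefore untouched by either intervention, and that absence of indirect paths makes the nested intervention on $\mathbf{W}$ inert because no element of $\mathbf{W}$ is an ancestor of $Y$; and the Ctf-SE step correctly reduces, via consistency, to the ignorability statement $Y_{s_0}\perp S$. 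Two small points worth tightening: your dichotomy ``every $S$--$Y$ path is either causal or back-door'' is not literally exhaustive (a path leaving $S$ through an outgoing edge but containing a collider, e.g.\ $S\rightarrow A\leftarrow B\rightarrow Y$, is neither), though such paths are blocked at the unconditioned collider, so your appeal to d-separation in the twin network does carry the claim once this is said explicitly; and, as you already flag, in the semi-Markovian case ``no back-door path'' must be read as including paths through bidirected (latent-confounder) edges. With those clarifications the proof is complete and, if anything, more explicit than the treatment in the cited source.
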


Building on these measures, the Zhang and Bareinboim derived the causal explanation formula for the disparities observed in total variation. Recall that the total variation is simply the difference between the conditional distributions of $Y$ when observing $S$ changing from $s_0$ to $s_1$. 

\begin{definition}[Total Variation] \label{def:tv}
	The total variation (TV) of $S=s_1$ on $Y=y$ (with baseline $s_0$) is given by
	\[ \mathrm{TV}_{s_0, s_1}(y) = P(y|s_1) - P(y|s_0). \]
\end{definition}

\begin{theorem}[Causal Explanation Formula] \label{th:expf}
For any $s_0$, $s_1$, $y$, the total variation, counterfactual spurious, direct, and indirect effects obey the following relationship: 
 \[\mathrm{TV}_{s_0,s_1}(y) = \textrm{Ctf-SE}_{s_0,s_1}(y) + \textrm{Ctf-IE}_{s_0,s_1}(y|s_1) - \textrm{Ctf-DE}_{s_1,s_0}(y|s_1),\] 
 \[\mathrm{TV}_{s_0,s_1}(y) = \textrm{Ctf-DE}_{s_0,s_1}(y|s_0) - \textrm{Ctf-SE}_{s_1,s_0}(y) - \textrm{Ctf-IE}_{s_1,s_0}(y|s_0).\]
 \end{theorem}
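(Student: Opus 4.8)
The plan is to prove each of the two identities by the same routine: substitute the definitions of $\textrm{Ctf-DE}$, $\textrm{Ctf-IE}$, and $\textrm{Ctf-SE}$ into the right-hand side, cancel the counterfactual terms that repeat, and finish with a single appeal to the consistency (composition) axiom of structural causal models. I would carry out the first identity in full and then obtain the second by the symmetric argument that swaps the roles of $s_0$ and $s_1$ throughout.

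First I would expand the three summands of the first identity. By Definition~\ref{def:ctf-se}, $\textrm{Ctf-SE}_{s_0,s_1}(y) = P(y_{s_0}\mid s_1) - P(y\mid s_0)$. By Definition~\ref{def:ctf-ie} with conditioning event $S=s_1$, $\textrm{Ctf-IE}_{s_0,s_1}(y\mid s_1) = P(y_{s_0,\mathbf{W}_{s_1}}\mid s_1) - P(y_{s_0}\mid s_1)$. For the third term I must track the index swap carefully: Definition~\ref{def:ctf-de} reads $\textrm{Ctf-DE}_{a,b}(y\mid s) = P(y_{b,\mathbf{W}_{a}}\mid s) - P(y_{a}\mid s)$, so taking $a=s_1$, $b=s_0$, $s=s_1$ gives $\textrm{Ctf-DE}_{s_1,s_0}(y\mid s_1) = P(y_{s_0,\mathbf{W}_{s_1}}\mid s_1) - P(y_{s_1}\mid s_1)$. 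Now forming $\textrm{Ctf-SE}_{s_0,s_1}(y) + \textrm{Ctf-IE}_{s_0,s_1}(y\mid s_1) - \textrm{Ctf-DE}_{s_1,s_0}(y\mid s_1)$, the nested counterfactual $P(y_{s_0,\mathbf{W}_{s_1}}\mid s_1)$ appears in $\textrm{Ctf-IE}_{s_0,s_1}(y\mid s_1)$ and again, with opposite sign after the subtraction, in $\textrm{Ctf-DE}_{s_1,s_0}(y\mid s_1)$, so it drops out; likewise $P(y_{s_0}\mid s_1)$ cancels between $\textrm{Ctf-SE}_{s_0,s_1}(y)$ and $\textrm{Ctf-IE}_{s_0,s_1}(y\mid s_1)$. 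What remains is exactly $P(y_{s_1}\mid s_1) - P(y\mid s_0)$. Finally, the consistency/composition axiom of SCMs — intervening to set $S$ to the value it is already observed to take leaves the distribution unchanged — yields $P(y_{s_1}\mid s_1) = P(y\mid s_1)$, so the right-hand side equals $P(y\mid s_1) - P(y\mid s_0) = \mathrm{TV}_{s_0,s_1}(y)$ by Definition~\ref{def:tv}. For the second identity I would repeat the expansion with $s_0$ and $s_1$ interchanged: the analogous cancellations (now the nested term is $P(y_{s_1,\mathbf{W}_{s_0}}\mid s_0)$ and the shared simple term is $P(y_{s_1}\mid s_0)$) leave $P(y\mid s_1) - P(y_{s_0}\mid s_0)$, and consistency turns $P(y_{s_0}\mid s_0)$ into $P(y\mid s_0)$.

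There is no genuinely deep step here; the only real hazard is bookkeeping, and I expect that to be the main obstacle to writing the proof cleanly. The point demanding the most care is the index convention in $\textrm{Ctf-DE}$ — which subscript labels the value forced along the direct edge into $Y$ and which labels the natural level of the mediator set $\mathbf{W}$ — because getting this backwards destroys the cancellation of the nested counterfactual and the whole identity collapses. A secondary point is that the consistency axiom may be invoked only on a term of the form $P(y_{s}\mid s)$ in which the intervention value and the conditioning value coincide; it is \emph{not} valid on the mixed quantities such as $P(y_{s_0}\mid s_1)$, and indeed in the calculation those are precisely the terms that cancel against one another rather than simplify, which is what makes the decomposition go through.
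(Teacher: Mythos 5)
Your proposal is correct: expanding the three definitions, cancelling the nested counterfactual $P(y_{s_0,\mathbf{W}_{s_1}}\mid s_1)$ (resp.\ $P(y_{s_1,\mathbf{W}_{s_0}}\mid s_0)$) and the shared term $P(y_{s_0}\mid s_1)$ (resp.\ $P(y_{s_1}\mid s_0)$), and then applying consistency only to $P(y_{s_1}\mid s_1)$ and $P(y_{s_0}\mid s_0)$ yields exactly $\mathrm{TV}_{s_0,s_1}(y)$ in both identities. The paper itself states the theorem without proof, citing Zhang and Bareinboim, whose derivation is this same definitional telescoping plus the composition/consistency axiom, so your approach matches the intended argument.
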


Theorem \ref{th:expf} allows the machine learning designer to quantitatively evaluate fairness and explain the total observed disparity of decisions through different discriminatory mechanisms. For example, the first formula shows that the total disparity experienced by the individuals who have naturally attained $s_1$ (relative to $s_0$) is equal to the disparity associated with spurious discrimination, plus the advantage it lost due to indirect discrimination, minus the advantage it would have gained without direct discrimination. 

\subsection{Path-Specific Counterfactual Fairness}
In \cite{wu2019pcfairness}, Wu et al. proposed path-specific counterfactual fairness (PC fairness) that covers the previously mentioned causality-based fairness notions. Letting $\Pi$ be all causal paths from $S$ to $Y$ in the causal graph, the path-specific counterfactual fairness metric is defined as follows.

\begin{definition}[Path-specific Counterfactual Fairness (PC Fairness)] \label{def:pscf}
Given a factual condition $\VO = \vo$ where $\VO \subseteq \{S, \X, Y \}$ and a causal path set $\pi$, we achieve the PC fairness if 
	\[\PCE_{\pi}(s_1, s_0| \vo) = P(y_{s_1 \vert \pi, s_0 \vert \bar{\pi}}|\mathbf{o}) - P(y_{s_0}|\mathbf{o}) =0\]
where $s_1, s_0 \in \{ s^+, s^-\}$.
\end{definition}

\begin{table}\small
\centering
\setlength{\extrarowheight}{0pt}
\addtolength{\extrarowheight}{\aboverulesep}
\addtolength{\extrarowheight}{\belowrulesep}
\setlength{\aboverulesep}{0pt}
\setlength{\belowrulesep}{0pt}
\caption{Connection between path-specific counterfactual fairness (PC Fairness) and other fairness notions.}
\begin{tabular}{ll} 
\toprule
\rowcolor[rgb]{0.592,0.792,0.792} \multicolumn{1}{c}{\textcolor[rgb]{0,0.502,0.502}{\textbf{Description}}} & \multicolumn{1}{c}{\textcolor[rgb]{0,0.502,0.502}{\textbf{Relating to PC Fairness}}} \\ 
\hline
Total Causal Fairness                                                                                      & $\mathbf{O}=\emptyset$~and~$\pi = \Pi$                                                \\
Direct Causal Fairness                                                                                     & $\mathbf{O}=\emptyset$~and~$\pi=\pi_{d}=\{S\rightarrow \Y\}$                          \\
Indirect Causal Fairness                                                                                   & $\mathbf{O}=\emptyset$~and~$\pi=\pi_{i}\subset\Pi$                                    \\
Counterfactual Fairness                                                                                    & $\mathbf{O}=\{S,\mathbf{X}\}$~and~$\pi=\Pi$                                           \\
Counterfactual Direct Effect (Ctf-DE)                                                                      & $\mathbf{O}=\{S,Y\}$~and~$\pi=\pi_{d}$                                                \\
Counterfactual Indirect Effect (Ctf-IE)~                                                                   & $\mathbf{O}=\{S,Y\}$~and~$\pi_{i}$                                                    \\
\bottomrule
\end{tabular}
\label{tab:connection}
\end{table}

We point out that we can simply define the PC Fairness on a classifier by replacing outcome $Y$ with the predictor $\hat{Y}$ in the above equation. Previous causality-based fairness notions can be expressed as special cases of the PC fairness
based on the value of $\mathbf{O}$ (e.g., $\emptyset$ or $S,{\mathbf{X}}$) and the value of $\pi$ (e.g., $\Pi$ or $\pi_d$). Their connections are summarised in Table~\ref{tab:connection}, where $\pi_d$ contains the direct edge from $S$ to $\Y$, and $\pi_i$ is a path set that contains all causal paths passing through any redlining attributes (i.e., a set of attributes in $\mathbf{X}$ that cannot be legally justified if used in decision-making). The notion of PC fairness also resolves new types of fairness, e.g., individual indirect fairness, which means discrimination along the indirect paths for a particular individual. Formally, the individual indirect fairness can be directly defined and analyzed using PC fairness by letting $\mathbf{O}=\{S,\mathbf{X}\}$ and $\pi=\pi_{i}$.

\subsection{Proxy Fairness}
\label{sec:proxy}
In \cite{kilbertus2017avoiding}, Kilbertus et al. proposed proxy fairness. A proxy is a descendant of $S$ in the causal graph whose observable quantity is significantly correlated with $S$, but should not affect the prediction.  

\begin{definition}[Proxy Discrimination]
	\label{def:proxy}
	A predictor $\hat{Y}$ exhibits no proxy discrimination based on a proxy $P$ if for all $p,p' \in \text{Dom}(P, P(\hat{Y}_p) = P(\hat{Y}_{p'}))$. 
\end{definition}

Note that $P(\hat{Y}_p)$ is equivalent to $P(\hat{Y}|do(P=p)$. Intuitively, a predictor satisfies proxy fairness if the distribution of $\hat{Y}$ under two interventional regimes in which $P$ set to $p$ and $p'$ is the same. The authors presented the conditions and developed procedures to remove proxy discrimination given the structural equation model. 

\subsection{Justifiable Fairness}
In \cite{salimi2019interventional}, Salimi et al. presented a pre-processing approach for removing the effect of any discriminatory causal relationship between the marginalization attribute and classifier predictions by manipulating the training data to be non-discriminatory. The repaired training data can be seen as a sample from a hypothetical fair world.

\begin{definition}[$\mathbf{K}$-fair]
	\label{def:kfair}
For a give set of variables $\mathbf{K}$, a decision function is said to be $\mathbf{K}$-fair with regarding to $S$ if, for any context $\mathbf{K}=\mathbf{k}$ and any outcome $Y=y$, 
$P(y_{s_0, \mathbf{k}}) = P(y_{s_1,\mathbf{k}})$. 
\end{definition}

Note that the notion of $\mathbf{K}$-fair intervenes on both the sensitive attribute $S$ and marginalization attributes $\mathbf{K}$. It is more fine-grained than proxy fairness, but it does not attempt to capture fairness at the individual level. The authors further introduced justifiable fairness for applications where the user can specify admissible variables through which it is permissible for the marginalization attribute to influence the outcome. 

\begin{definition}[Justifiable Fairness]
	\label{def:justifiable}
A fairness application is justifiable fair if it is $\mathbf{K}$-fair with regarding to all supersets $\mathbf{K} \supseteq \mathbf{A}$ where $\mathbf{A}$ is the set of admissible variables. 
\end{definition}

Different from previous causal-based fairness notions, which require the presence of the underlying causal model, the justifiable fairness notion is based solely on the notion of intervention. The user only requires specification of a set of admissible variables and does not need to have a causal graph. The authors also introduced a sufficient condition for testing justifiable fairness that does not require access to the causal graph. However, with the presence of the causal graph, if all directed paths from $S$ to $Y$ go through an admissible attribute in $\mathbf{A}$, then the algorithm is justifiably fair. If the probability distribution is faithful to the causal graph, the converse also holds. 

\subsection{Counterfactual Error Rate}
\label{sec:cer}
Zhang and Bareinboim \cite{zhang2018equality} developed a causal framework to link the disparities realized through equalized odds (EO) and the causal mechanisms by which the marginalization attribute $S$ affects change in the prediction $\hat{Y}$. EO, also referred to as error rate balance, considers both the ground truth outcome $Y$ and predicted outcome $\hat{Y}$. EO achieves fairness through the balance of the misclassification rates (false positive and negative) across different demographic groups. They introduced a family of counterfactual measures that allows one to explain the misclassification disparities in terms of the direct, indirect, and spurious paths from $S$ to $\hat{Y}$ on a structural causal model. Different from all previously discussed causal-based fairness notions, counterfactual error rate considers both $Y$ and $\hat{Y}$ in their counterfactual quantity. 

\begin{definition}[Counterfactual Direct Error Rate]
	\label{def:cder}
	Given a SCM and a classifier $\hat{y}=f(\widehat{pa})$ where $\widehat{PA}$ is a set of input features of the predictor, the counterfactual direct error rate ($\mathrm{ER}^d$) for a sub-population $s,y$ (with prediction $\hat{y} \ne y)$ is defined as \[\mathrm{ER}^d_{s_0,s_1}(\hat{y}|s,y) = P(\hat{y}_{s_1,y,(\widehat{PA}\backslash S)_{s_0,y}}|s,y) - P(\hat{y}_{s_0,y}|s,y).\] 
\end{definition}
For an individual with the marginalization attribute $S=s$ and the true outcome $Y=y$, the counterfactual direct error rate calculates the difference of two terms. The first term is the prediction $\hat{Y}$ had $S$ been $s_1$, while keeping all the other features $\widehat{PA}\backslash S$ at the level that they would attain had $S=s_0$ and $Y=y$, whereas the second term is the prediction $\hat{Y}$ the individual would receive had $S$ been $s_0$ and $Y$ been $y$. 

\begin{definition}[Counterfactual Indirect Error Rate]
	\label{def:cier}
	Given a SCM and a classifier $\hat{y}=f(\widehat{pa})$, the counterfactual indirect error rate ($\mathrm{ER}^i$) for a sub-population $s,y$ (with prediction $\hat{y} \ne y)$ is defined as \[\mathrm{ER}^i_{s_0,s_1}(\hat{y}|s,y) = P(\hat{y}_{s_0,y,(\hat{PA}\backslash S)_{s_1,y}}|s,y) - P(\hat{y}_{s_0,y}|s,y).\] 
\end{definition}

\begin{definition}[Counterfactual Spurious Error Rate]
	\label{def:cser}
	Given a SCM and a classifier $\hat{y}=f(\hat{pa})$, the counterfactual spurious error rate ($\mathrm{ER}^s$) for a sub-population $s,y$ (with prediction $\hat{y} \ne y)$ is defined as \[\mathrm{ER}^s_{s_0,s_1}(\hat{y}|y) = P(\hat{y}_{s_0,y}|s_1,y) - P(\hat{y}_{s_0,y}|s_0,y).\] 
\end{definition}

The counterfactual spurious error rate can be read as ``for two demographics $s_0$, $s_1$ with the same true outcome $Y=y$, how would the prediction $\hat{Y}$ differ had they both been $s_0$, $y$?''

Building on these measures, Zhang and Bareinboim \cite{zhang2018equality} derived the causal explanation formula for the error rate balance. The equalized odds notion constrains the classification algorithm such that its disparate error rate is equal to zero across different demographics. 

\begin{definition}[Error Rate Balance] \label{def:erb}
	The error rate (ER) balance is given by
	\[ \mathrm{ER}_{s_0, s_1}(\hat{y}|y) = P(\hat{y}|s_1,y) - P(\hat{y}|s_0,y). \]
\end{definition}

\begin{theorem}[Causal Explanation Formula of Equalized Odds] \label{th:expeo}
For any $s_0$, $s_1$, $\hat{y}$, $y$, we have the following relationship: 
 \[\mathrm{ER}_{s_0,s_1}(\hat{y}|y) = \mathrm{ER}^d_{s_0,s_1}(\hat{y}|s_0,y) - \mathrm{ER}^i_{s_1,s_0}(\hat{y}|s_0,y) - \mathrm{ER}^s_{s_1,s_0}(\hat{y}|y).\] 
\end{theorem}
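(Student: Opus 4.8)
The plan is to proceed exactly as one derives the Causal Explanation Formula for the total variation (Theorem~\ref{th:expf}): substitute Definitions~\ref{def:cder}, \ref{def:cier}, and \ref{def:cser} into the right-hand side, let the nested ``mixed'' counterfactuals telescope against one another, and then close the gap between the two surviving endpoint terms using the consistency (composition) axiom of counterfactuals. No graphical or identifiability hypotheses beyond the existence of the underlying SCM are needed; the statement is a purely algebraic identity among counterfactual probabilities, so the whole argument is a careful expand-and-cancel.

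First I would write out each of the three right-hand summands with the index substitutions the statement demands. The direct term expands to $\mathrm{ER}^d_{s_0,s_1}(\hat{y}\mid s_0,y) = P(\hat{y}_{s_1,y,(\widehat{PA}\backslash S)_{s_0,y}}\mid s_0,y) - P(\hat{y}_{s_0,y}\mid s_0,y)$, and the indirect term, after swapping $s_0\leftrightarrow s_1$ in Definition~\ref{def:cier}, expands to $\mathrm{ER}^i_{s_1,s_0}(\hat{y}\mid s_0,y) = P(\hat{y}_{s_1,y,(\widehat{PA}\backslash S)_{s_0,y}}\mid s_0,y) - P(\hat{y}_{s_1,y}\mid s_0,y)$. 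The crucial observation is that both carry the identical leading counterfactual $P(\hat{y}_{s_1,y,(\widehat{PA}\backslash S)_{s_0,y}}\mid s_0,y)$, so forming $\mathrm{ER}^d - \mathrm{ER}^i$ annihilates it and leaves $P(\hat{y}_{s_1,y}\mid s_0,y) - P(\hat{y}_{s_0,y}\mid s_0,y)$. Subtracting $\mathrm{ER}^s_{s_1,s_0}(\hat{y}\mid y) = P(\hat{y}_{s_1,y}\mid s_0,y) - P(\hat{y}_{s_1,y}\mid s_1,y)$ then removes the bridging term $P(\hat{y}_{s_1,y}\mid s_0,y)$ as well, so the entire right-hand side collapses to $P(\hat{y}_{s_1,y}\mid s_1,y) - P(\hat{y}_{s_0,y}\mid s_0,y)$. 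Finally, consistency says that on the event $\{S=s,Y=y\}$ the counterfactual world that forces $S=s$ and $Y=y$ coincides with the factual one, so $P(\hat{y}_{s,y}\mid s,y) = P(\hat{y}\mid s,y)$ for $s\in\{s_0,s_1\}$; this rewrites the surviving difference as $P(\hat{y}\mid s_1,y) - P(\hat{y}\mid s_0,y)$, which is precisely $\mathrm{ER}_{s_0,s_1}(\hat{y}\mid y)$ by Definition~\ref{def:erb}.

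I expect the main difficulty to be bookkeeping rather than any genuine obstacle: one has to apply the $s_0\leftrightarrow s_1$ swaps and the conditioning sub-population arguments consistently, so that the nested counterfactual $\hat{y}_{s_1,y,(\widehat{PA}\backslash S)_{s_0,y}}$ truly appears with the \emph{same} conditioning event $\{S=s_0,Y=y\}$ in both the direct and indirect expansions, and so that the spurious term's arguments line up with what is left after the first cancellation. I would also spell out which form of the consistency/composition axiom is invoked and why the true outcome $Y=y$ can be carried along inside the intervention without harm, namely because $\hat{Y}=f(\widehat{PA})$ depends only on the predictor's inputs and the conditioning event already fixes $Y$.
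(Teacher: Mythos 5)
Your expand-and-cancel argument is correct: substituting Definitions \ref{def:cder}, \ref{def:cier}, and \ref{def:cser} with the stated index swaps, cancelling the common nested counterfactual $P(\hat{y}_{s_1,y,(\widehat{PA}\backslash S)_{s_0,y}}\mid s_0,y)$ and then the bridging term $P(\hat{y}_{s_1,y}\mid s_0,y)$, and closing with consistency $P(\hat{y}_{s,y}\mid s,y)=P(\hat{y}\mid s,y)$ yields exactly $\mathrm{ER}_{s_0,s_1}(\hat{y}\mid y)$ as in Definition \ref{def:erb}. The paper itself states Theorem \ref{th:expeo} without proof (it is imported from \cite{zhang2018equality}), and your telescoping-plus-consistency derivation is essentially the same argument used there and for the analogous Theorem \ref{th:expf}.
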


The above theorem shows that the total disparate error rate can be decomposed into terms, each of which estimates the adverse impact of its corresponding discriminatory mechanism. 

\subsection{Individual Equalized Counterfactual Odds}
In \cite{pfohl2019counterfactual}, Pfohl et al. proposed the notion of individual equalized counterfactual odds that is an extension of counterfactual fairness and equalized odds. The notion is motivated by clinical risk prediction and aims to achieve equal benefit across different demographic groups. 

\begin{definition}[Individual Equalized Counterfactual Odds]\label{def:ieco}
	Given a factual condition $\VO = \vo$ where $\VO \subseteq \{\X, Y \}$, predictor $\Y$ achieves the individual equalized counterfactual odds if 
	\[ P(\hat{y}_{s_1} | \mathbf{o},y_{s_1}, s_0) - P(\hat{y}_{s_0} | \mathbf{o}, y_{s_0}, s_0) =0\]
	where $s_1, s_0 \in \{ s^+, s^-\}$.
\end{definition}

The notion implies that the predictor must be counterfactually fair given the outcome $Y$ matching the counterfactual outcome $y_{s_0}$. Therefore, in addition to requiring predictions to be the same across factual/counterfactual samples, those samples must also share the same value of the actual outcome $Y$. In other words, it considers the desiderata from both counterfactual fairness and equalized odds. 

\subsection{Fair on Average Causal Effect}
In \cite{khademi2019fairness}, Khademi et al. introduced two definitions of group fairness: fair on average causal effect (FACE), and fair on average causal effect on the treated (FACT) based on the Rubin-Neyman potential outcomes framework. Let $Y_i(s)$ be the potential outcome of an individual data point $i$ had $S$ been $s$. 

\begin{definition}[Fair on Average Causal Effect (FACE)]
	\label{def:face}
A decision function is said to be fair, on average over all individuals in the population, with respect to $S$, if $\mathbb{E}[Y_i(s_1) - Y_i(s_0)] =0$.
\end{definition}

FACE considers the average causal effect of the marginalization attribute $S$ on the outcome $Y$ at the population level and is equivalent to the expected value of the $\TCE(s_1, s_0)$ in the structural causal model.

\begin{definition}[Fair on Average Causal Effect on the Treated (FACT)]
	\label{def:fact}
A decision function is said to be fair with respect to $S$, on average over individuals with the same value of $s_1$, if $\mathbb{E}[Y_i(s_1) - Y_i(s_0)|S_i =s_1] =0$.
\end{definition}
 
FACT focuses on the same effect at the group level. This is equivalent to the expected value of $ETT_{s_1,s_0}(Y)$. The authors used inverse probability weighting to estimate FACE and use matching methods to estimate FACT.  

\subsection{Equality of Effort}
In \cite{DBLP:conf/www/HuangW0W20}, Huang et al. developed a fairness notation called equality of effort. When applied to the example in Fig. \ref{fig:cgexp}, we have a dataset with $N$ individuals with attributes $(S, T, \mathbf{X}, Y)$ where $S$ denotes the marginalization attribute {\em gender} with domain values $\{ s^+, s^-\}$, $Y$ denotes a decision attribute {\em admission} with domain values $\{ y^+, y^-\}$, $T$ denotes a legitimate attribute such as {\em test score}, and $\mathbf{X}$ denotes a set of covariates. For an individual $i$ in the dataset with profile $(s_{i}, t_{i}, \mathbf{x}_{i}, y_{i})$, they may ask the counterfactual question, how much they should improve their test score such that the probability of their admission is above a threshold $\gamma$ (e.g., $80\%$).

\begin{definition}[$\gamma$-Minimum Effort]
	\label{def:min_effort}
	For individual $i$ with value $(s_{i}, t_{i}, \mathbf{x}_{i}, y_{i})$, the minimum value of the treatment variable to achieve $\gamma$-level outcome is defined as:
	\[
	\Psi_i (\gamma) = \argmin_{t\in T} \big\{ \mathbb{E}[Y_i(t)] \geq \gamma)    \}
	\]
	and the minimum effort to achieve $\gamma$-level outcome is $\Psi_i (\gamma)- t_{i}$.
\end{definition}

If the minimal change for individual $i$ has no difference from that of counterparts (individuals with similar profiles except the marginalization attribute), individual $i$ achieves fairness in terms of equality of effort. As $Y_i(t)$ cannot be directly observed, we can find a subset of users, denoted as $I$, each of whom has the same (or similar) characteristics ($\mathbf{x}$ and $t$) as individual $i$. $I^*$ denotes the subgroup of users in $I$ with the marginalization attribute value $s^*$ where $* \in \{+,-\}$ and $\mathbb{E}[Y_{I^*}(t)]$ denotes the expected outcome under treatment $t$ for the subgroup $I^*$.

\begin{definition}[$\gamma$-Equal Effort Fairness]
	\label{def:equ_effort_i}
	For a certain outcome level $\gamma$, the equality of effort for individual $i$ is defined as
	\[
	\Psi_{I^+}(\gamma) = \Psi_{I^-}(\gamma).
	\]
where $\Psi_{I^*}(\gamma) = \argmin_{t\in T} \{\mathbb{E}[Y_{I^*}(t)] \geq \gamma \}$ is the minimal effort needed to achieve $\gamma$ level of outcome variable within the subgroup $* \in \{+,-\}$.
\end{definition}

Equal effort fairness can be straightforwardly extended to the system (group) level by replacing $I$ with the whole dataset $D$ (or a particular group). Different from previous fairness notations that mainly focus on the the effect of the marginalization attribute $S$ on the decision attribute $Y$, the equality of effort instead focuses on to what extend the treatment variable $T$ should change to make the individual achieve a certain outcome level. This notation addresses the concerns whether the efforts that would need to make to achieve the same outcome level for individuals from the marginalized group and the efforts from the non-marginalized group are different.

\section{Philosophical and Sociological Criticism of Fair Machine Learning}
As we've noted throughout the paper, there have been several attempts to define fairness quantitatively. Some think that the rapid growth of this new field has led to widely inconsistent motivations, terminology, and notation, presenting a serious challenge for cataloging and comparing definitions \cite{mitchell2021}. Through this article, we try to remedy the fact that there has not been more effort spent on aligning quantitative definitions with philosophical measures. After all, what weight does a quantitative definition hold when not grounded in humanistic values?

We continue that discussion in this section. Despite much headway in the last several years, our work in fair machine learning is far from over. In fact, there are several missteps that the field has taken that we need to remedy before truly being able to call our methods ``fair.'' These issues include rigid categorization strategies, improper terminology, hurtful assumptions and abstractions, and issues of diversity and power struggles.

\subsection{Rigid ``Box-Like'' Categorization}
In most of the publications we discussed, fairness is enforced on rigidly structured groups or categories. For instance, many papers consider the binary categories of \textit{male or female} and \textit{White or Black} as the main axis along which to determine if an algorithm is fair or not. These ontological\footnote{Ontology is a branch of philosophy that is concerned with concepts such as existence, being, becoming, and reality \cite{Hu2020}.} assumptions, though helpful in simplifying the problem at hand, are often misplaced. 

The problem with narrowing concepts like gender and race down to simple binary groups is that there is no precise definition of what a ``group'' or ``category'' is in the first place. Despite not having a common interpretation, it is widely accepted in the social sciences that groups and categories are social constructs, not rigid boxes into which a person can be placed. \textit{Constructionist ontology} is the understanding that socially salient categories such as gender and race are not embodied by sharing a physical trait or genealogical features, but are in fact constituted by a web of social relations and meanings \cite{Hu2020}. Social construction does not mean that these groups are not real, but that these categories of race and gender are brought into existence and shaped into what we know them to be by historical events, social forces, political power, and/or colonial conquest \cite{Hacking1999}. When we treat these social constructs as rigidly defined attributes, rather than structural, institutional, and relational circumstances, it minimizes the structural aspects of algorithmic (un)fairness \cite{Hanna2020}. The very concept of fairness itself can only be understood when framed in the viewpoint of the specific social group being considered. 

Specific to racial categorization, Sebastian Benthall and critical race scholar Bruce D. Haynes discuss how ``racial classification is embedded in state institutions, and reinforced in civil society in ways that are relevant to the design of machine learning systems'' \cite{Benthall2019}. Race is widely acknowledge in the social science field to be a social construction tied to a specific context and point of history, rather than to a certain phenotypical property. Hanna et al. explain that the meaning of a ``race'' at any given point in time is tied to a specific \textit{racial project} which is an explanation and interpretation of racial identities according to the efforts in organizing and distributing resources along particular racial lines \cite{Hanna2020}. They express that it would be more accurate to describe race as having relational qualities, with dimensions that are symbolic or based on phenotype, but that are also contingent on specific social and historical contexts \cite{Hanna2020}.

The fair machine learning community needs to understand the multidimensional aspects of concepts such as race and gender and need to seriously consider the impact that our conceptualization and operationalization of historically marginalized groups have on these groups today when defining what a ``group'' or ``category'' is in a specific fair machine learning setting. ``To oversimplify is to do violence, or even more, to re-inscribe violence on communities that already experience structural violence'' \cite{Hanna2020}. The simplifications we make erase the social, economic, and political complexities of racial, gender, and sexuality categories. These counterfactual-based methodologies tend to treat groups as interchangeable, obscuring the unique oppression encountered by each group \cite{Hu2020}. Overall, we cannot do meaningful work in fair machine learning without first understanding and specifying the social ontology of the human groupings about which we are concerned will be the basis for unfairness \cite{Hu2020}.

\subsection{Unintentionally Adverse Terminology}
It is natural to take words such as ``bias'' and ``protected groups'' at face value when reading a fair machine learning publication. Especially when we, as technically minded researchers, would rather spend more time understanding the functionality of an algorithm, rather than the schematics of a particular word. But ``placation is an absolution'' \cite{Hampton2021} and ``Language shapes our thoughts" \cite{PearlMackenzie18}. Throughout this work (and many of the works mentioned within) the term \textit{algorithmic bias} is used liberally and without much thought. However, the word ``bias'' actively removes responsibility from the algorithm or dataset creator by obscuring the social structures and byproducts of oppressive institutions that contribute to the output of the algorithm \cite{Hampton2021}. It makes the effect of bias (i.e., an unfair model) out to be purely accidental. 

So why use ``bias'' then? Mainly because the word oppression is strong and polarizing \cite{Hampton2021, Frye-1983}\footnote{And despite being proposed originally by Safiya Noble in 2018 \cite{Noble2018}, the term never caught on.}. \textit{Algorithmic oppression} as a theoretical concept acknowledges that there are systems of oppression that cannot simply be reformed, and that not every societal problem has (or should have) a technological solution. Algorithmic oppression analyzes the ways that technology has violent impacts on marginalized peoples' lives, and in doing so it does not water down the impact to ``discrimination'' or ``implicit bias'' because doing so fundamentally invalidates the struggles and hardships that oppressed people endure \cite{Hampton2021}.

In addition to \textit{Oppression over Bias}, Hampton also comments on the term ``protected groups''. They note that calling marginalized groups like Black, LGBTQIA+, or even females, ``protected groups'' is a ``meaningless gesture, although well intentioned'' \cite{Hampton2021}. This is because, in reality, these groups are not protected, but oppressed and disparaged and calling them ``protected groups'' does nothing to change their circumstances. 

We echo the sentiments of Hampton \cite{Hampton2021}. This section is more of a critique of our language than a request to overhaul an already confusing field in terms of terminology. Let it serve as a reminder that our choice of words have very real consequences beyond simply explaining the techniques of our method. 

\subsection{Damaging Assumptions and Abstractions}

\subsubsection{Assumptions}
When designing a fair machine learning model, many elements are generally assumed and not distinctly specified. Some of these assumption include the societal objective hoped to be fulfilled by deploying a fair model, the set of individuals subjected to classification by the fair model, and the decision space available to the decision makers who will interact with the model's final predictions \cite{mitchell2021}. These assumptions can have undesirable consequences when they do not hold in the actual usage of the model. Each assumption is a choice that fundamentally determines if the model will ultimately advance fairness in society \cite{mitchell2021}. Additionally, it rarely is the case that the moral assumptions beneath the fairness metrics are explained \cite{hertweck2021}.

Of particular importance is the assumption of the population who will be acted upon by the model, i.e., the individuals who will be subjected to classification by the model. The way that a person comes to belong in a social category or grouping may reflect underlying (objectionable) social structures, e.g., the ``predictive'' policing that targets racial minorities for arrest \cite{mitchell2021}. A model that satisfies fairness criteria when evaluated only on the population to which the model is applied may overlook unfairness in the process by which individuals came to be subject to the model in the first place \cite{mitchell2021}.

Starting with clearly articulated goals can improve both fairness and accountability. Recent criticisms of fair machine learning have rightly pointed out that quantitative notions of fairness can restrict our thinking when we aim to make adjustments to a decision-making process, rather than to address the societal problems at hand. While algorithmic thinking runs such risks, quantitative approaches can also force us to make our assumptions more explicit and clarify what we are treating as background conditions. In doing so, we have the opportunity to be more deliberate and have meaningful debate about the difficult policy issues that we might otherwise hand-wave away, such as: ``what is our objective'', and ``how do we want to go about achieving it'' \cite{mitchell2021} ?

\subsubsection{Abstractions}
Abstraction is one of the cornerstones of computing. It allows a programmer to hide all but the needed information about an object to reduce complexity and increase efficiency. But, abstraction can also lead to the erasure of critical social and historical contexts in problems where fair machine learning is necessary \cite{Hanna2020}. Almost all of the proposed fair machine learning metrics bound the surrounding system tightly to only consider the machine learning model, the inputs, and the outputs, while completely abstracting away any social context \cite{selbst2019abstraction}. By abstracting away the social context in which fair machine learning algorithms are deployed, we no longer are able to understand the broader context that determines how fair our outcome truly is. 

Selbst et. al call these abstraction pitfalls \textit{traps} -- failure modes that occur when failing to properly understand and account for the interactions between a technical system and our humanistic, societal, world \cite{selbst2019abstraction}. Specifically, they name five specific traps that arise when we fail to consider how the social concept aligns with technology and we recall them below: 

\begin{enumerate}
    \item Framing Trap: failure to model the entire system over which a social criterion, such as fairness, will be enforced.
    \item Portability Trap: failure to understand how re-purposing algorithmic solutions designed for one social context may be misleading, inaccurate, or otherwise do harm when applied to a different context.
    \item Formalism Trap: failure to account for the full meaning of social concepts such as fairness, which can be procedural, contextual, contestable, and cannot be resolved through mathematical formalism.
    \item Ripple Effect Trap: failure to understand how the insertion of technology into an existing social system changes the behaviors and embedded values of the pre-existing system.
    \item Solutionism Trap: failure to recognize the possibility that the best solution to a problem may not involve technology. 
\end{enumerate}

Selbst et al.'s main proposed solution is to focus on the process of determining where and how to apply technical solutions, and when applying technical solutions causes more harm than good \cite{selbst2019abstraction}. They point out that in order to come to such a conclusion, technical researchers will need to either learn new social science skills or partner with social scientists on projects. Additionally, they point out that we must also become more comfortable with going against the intrinsic nature of the computer scientist to use abstraction, and be at ease with the difficult or unresolvable tensions between the usefulness and dangers of abstraction \cite{selbst2019abstraction}.

\subsection{Causal Specific Problems}
There has been consensus among some in the fair machine learning community that there are limitations when using statistical fairness definitions as they cannot show the connections and causal relationships of the data. Because of this, some researchers have turned to using causal-based methods for explaining the fairness of a machine learning algorithm as they are able to present a formal model that depicts how the relevant features are causally related. Additionally, causal-based methods require that the researchers not only be explicit about their theory of what causal relations generate the observed data, but also their theory of ontological boundaries between different categories and groups \cite{Hu2020}. 

But, despite the explicitness, the theories themselves can be incoherent to what social categories are, which in turn leads to misleading results when applied in high-stake domains \cite{kasirzadeh2021use}. ``To counterfactually suppose a social attribute of an individual requires first specifying what the social categories are and what it means to suppose a different version of an individual with the counterfactually manipulated social property'' \cite{kasirzadeh2021use}. As we discussed above, it is not clear or agreed upon how to define such social categories in the first place. 

Objections to using race or gender as a causal variable are often framed in the terms of not being manipulable, i.e., ``no causation without manipulation'' \cite{holland_1986}. In other words, the physical impossibility of manipulating race or gender should exclude it from being able to be used as a causal variable \cite{Hanna2020}. But, founders of the causal method field, such as Pearl and Kusner, argue that not being able to actually manipulate gender or race in the real world should not stop us from being able to reason about it theoretically \cite{pearl2009causality, kusner2017counterfactual}. Many disagree with this line of thinking though, stating that the role social categories like race or gender play in structuring life experiences makes it illogical to say two individuals are exactly the same, save for their gender or race \cite{kohler-hausmann_2017}. Put simply, in order to talk about the causal effect of social categories, we first need to satisfy what these categories are \cite{kasirzadeh2021use}.

One solution to the above problems with causal models has been proposed by Hu and Kohler-Hausmann - an approach they termed \textit{constitutive models} \cite{Hu2020}. They suggest that formal diagrams of constitutive relations would allow a new line of reasoning about discrimination as they offer a model of how the meaning of a social group is formed from its constitutive features. Constitutive relations show how societal practices, beliefs, regularities, and relations make up a category \cite{Hu2020}. They also note that causal diagrams can simply be reformatted to be a constitutive one, and that because a constitutive model provides a model of what makes a category, it presents entirely the information needed to debate about what practices are discriminatory \cite{Hu2020}.

\subsection{Power Dynamics and Diversity}
Here, we consider three important power dynamics: who is doing the classifying, who is picking the objective function, and who gets to define what counts as science. Starting with the first - who has the power to classify - J. Khadijah Abdurahman says that "it is not just that classification systems are inaccurate or biased, it is who has the power to classify, to determine the repercussions / policies associated thereof, and their relation to historical and accumulated injustice'' \cite{j-khadijah-abdurahman_2019}. As mentioned above, since there is no agreed upon definition of what a group/category is, it is ultimately up to those in power to classify people according to the task at hand. Often, this results in rigid classifications that do not align with how people would classify themselves. Additionally, because of data limitations, most often those in power employ the categories provided by the U.S. census or other taxonomies which stem from bureaucratic processes. But, it is well studied that these categories are unstable, contingent, and rooted in racial inequality \cite{Hanna2020}. When we undertake the process of classifying people, we need to understand what the larger implications of classifying are, and how they further impact or reinforce hurtful social structures.

The second question - who chooses the final optimization function to use in a fair machine learning algorithm - seems fairly intuitive. Of course, those creating fair machine learning methods do. But, should we have this power? The choice of how to construct the objective function of an algorithm is intimately connected with the political economy question of who has ownership and control rights over data and algorithms \cite{kasy2021}. It is important to keep in mind that our work is, overall, for the benefit of marginalized populaces. That being the case, ``it is not only irresponsible to force our ideas of what communities need, but also violent'' \cite{Hampton2021}. ``Before seeking new design solutions, we [should] look for what is already working at the community level'' and ``honor and uplift traditional, indigenous, and local knowledge and practices'' \cite{costanza-chock_2018}. This may require taking to asking the oppressed groups what their communities need, and what we should keep in mind when constructing the optimization algorithm to better serve them. ``We must emphasize an importance of including all communities, and the voices and ideas of marginalized people must be centered as [they] are the first and hardest hit by algorithmic oppression'' \cite{Hampton2021}.

The final question - who chooses what is defined as science - comes from the study of the interplay of feminism with science and technology. Ruth Hubbard, the first woman to hold a tenured professorship position in biology at Harvard, advocated for the inclusion of other social groups besides White men to be allowed to make scientific contributions as ``whoever gets to define what counts as a scientific problem also gets a powerful role in shaping the picture of the world that results from scientific research'' \cite{hubbard_1988}. For a drastic example, consider R.A. Fisher, who for a long period of time was the worlds leading statistician and practically invented large parts of the subject, was also a eugenicist, and thought that ``those who did not take his word as God-given truth were at best stupid and at worst evil" \cite{PearlMackenzie18}.

Despite calls for diversity in science and technology, there is conflicting views on how to go about doing so. Some say that including marginalized populaces will help gain outside perspectives that will overall aid in creating technology to better suit the people it will eventually be used on \cite{mitchell2021}. Others say that this is actually not the case, and more diversity will not automatically solve algorithmic oppression \cite{Hampton2021}. Sociologist Ruha Benjamin points out that ``having a more diverse team is an inadequate solution to discriminatory design practices that grow out of the interplay of racism and captilatism'' as it shifts responsibility from ``our technologies are harming people'' to ``BIPOC\footnote{Black, Indigenous, People of Color} tokens have to fix it'' \cite{Hampton2021, benjamin2019}. By promoting diversity as a solution to the problem of algorithmic oppression, we ``obscure the fact that there are power imbalances that are deeply embedded in societal systems and institutions'' \cite{Hampton2021}.

Regardless of how to solve the diversity issue, it is agreed upon that it is important to engage with marginalized communities and educate them on what fair machine learning is, and how it affects them. ``We will solve nothing without involving our communities, and we must take care to ensure we do not impose elitist ideas of who can and cannot do science and engineering'' \cite{Hampton2021}. It is our view that we, the fair machine learning community, should be having conversations with BIPOC communities about their thoughts on how we, the fair machine learning community, should solve the diversity issue (as well as thoughts on what they need and actually want from our community), and what we can do to help fix the problems machine learning ultimately created in the first place.

\section{Related Works}
Many works have been published over the last few years on fair machine learning, including a handful of survey papers and textbooks that are closely aligned with this field guide \cite{caton2020fairness, mehrabi2019survey, barocas-hardt-narayanan}. While these survey papers do a good job of explaining the mathematical and algorithmic aspects of mitigating bias and achieving fairness, they often leave out critical discussion of philosophical and legal groundings that are important to make a sociotechnical system rather than just a technical one. Additionally, while works exist that align philosophical \cite{ khan_fairness_2021, heidari_moral_2019, lee_formalising_2021, binns_fairness_2018} and legal \cite{xiang2019legalcompatibility, grgic-hlaca_beyond_nodate, corbett-davies_algorithmic_2017, barocas_big_2016} notions with proposed fairness metrics, they often do not go in depth on the technical, algorithmic, and mathematical foundations which are also needed to make a sociotechnical system. Our work resolves this issue by producing a survey that covers both of these realms to allow for practitioners to understand not only how specific fairness metrics function, but their social science groundings as well. 

\section{Conclusion}
In this field guide, we have attempted to remedy a long standing problem in the fair machine learning field, namely, the abstraction of technical aspects of algorithms with their philosophical, sociological, and legal underpinnings. By explaining the details of popular statistics and causal-based fair machine learning algorithms in both formal and social science terminology, ultimately, we recenter algorithmic fairness as a sociotechnical problem, rather than simply a technical one. We hope that this field guide not only helps machine learning practitioners understand how specific algorithms align with long-held humanistic values, but also that it will spark conversation and collaboration with the social science field to construct better algorithms. 

In addition to explaining the metrics themselves, we also offered a critique on the field of fair machine learning as a whole. We do this specifically by calling upon literature produced by those marginalized and underrepresented in the fair machine learning community as they have view points that are critical to understanding how our work actually impacts and affects the social groups they belong to. When designing a fair machine learning algorithm, or any machine learning algorithm at all, we need to be mindful that our work ultimately impacts people beyond the immediate research community and our research labs. Our work should be centered around eliminating harm through algorithmic oppression, not in being (unintentionally) complicit to the violence against oppressed populaces by machine learning.

We conclude with the following call to action. We, the fair machine learning research community, before releasing fair machine learning methods should be intimately aware of their philosophical, social, and legal ties (as these notions ultimately determine how the final model will be implemented and used), as well as how they will actually affect the marginalized community they propose to protect. It is only in this way that we can actually contribute meaningful and ``fair'' machine learning research.

\section*{Acknowledgement}
This work was supported in part by NSF 1910284, 1920920 and 2137335.

\bibliographystyle{unsrt}  
\bibliography{references,nips19}  
\end{document}